\newacronym{WSN}{WSN}{Wireless sensor network}
\newacronym{WRSN}{WRSN}{Wireless rechargeable sensor network}
\newacronym{MC}{MC}{Mobile charger}
\newacronym{BS}{BS}{Base station}
\newacronym{WET}{WET}{Wireless energy transfer}
\newacronym{CL}{CL}{Charging location}
\newacronym{MDP}{MDP}{Markov decision process}
\newacronym{AI}{AI}{Artificial Intelligence}
\newacronym{ML}{ML}{Machine Learning}
\newacronym{RL}{RL}{Reinforcement Learning}
\newacronym{DRL}{DRL}{Deep Reinforcement Learning}
\newacronym{DL}{DL}{Deep Learning}
\newacronym{CNN}{CNN}{Convolutional Neural Network}
\newacronym{AMAPPO}{AMAPPO}{Asynchronous Multi-Agent Proximal Policy Optimization algorithm(proposal)}
\newacronym{PPO}{PPO}{Proximal Policy Optimization algorithm}
\newacronym{IPPO}{IPPO}{Independent Proximal Policy Optimization algorithm}
\newacronym{Dec-POSMDP}{Dec-POSMDP}{Decentralized Partially Observable Semi-Markov Decision Process}
\newacronym{Dec-POMDP}{Dec-POMDP}{Decentralized Partially Observable Markov Decision Process}
\newacronym{NLM-CTC}{NLM-CTC}{Network Lifetime Maximization for Connected Target Coverage}
\newacronym{MCs}{MCs}{Mobile Chargers}
\newacronym{MARL}{MARL}{Multi-Agent Reinforcement Learning}
\def\munderbar#1{\underline{\sbox\tw@{$#1$}\dp\tw@\z@\box\tw@}}
\newtheorem{theorem}{Theorem}[section]
\newtheorem{definition}[theorem]{Definition}
\newtheorem{proposition}[theorem]{Proposition}
\newcommand{\be}{\begin{equation}}
\newcommand{\ee}{\end{equation}}
\newcommand{\bea}{\begin{equation*}\begin{aligned}}
\newcommand{\eea}{\end{aligned}\end{equation*}}
\newcommand{\R}{\mathbb{R}}
\newcommand{\mc}{\mathcal}
\def \TITLE{GRADUATION THESIS}
\def \AUTHOR{Trần Văn A}
\titleformat{\chapter}[hang]{\centering\bfseries}{CHAPTER \thechapter.\ }{0pt}{}[]
\titlespacing*{\chapter}{0pt}{-20pt}{20pt}
\titlespacing{\section}{0pt}{\parskip}{0.5\parskip}
\titlespacing{\subsection}{30pt}{\parskip}{0.5\parskip}
\titlespacing{\subsubsection}{50pt}{\parskip}{0.5\parskip}
\title{\bf \TITLE}
\author{\AUTHOR}
\theoremstyle{definition}
\begin{document}

\pagestyle{empty} 
\begin{titlepage}
\thispagestyle{empty}
\begin{center}

{\textbf{\large{HANOI UNIVERSITY OF SCIENCE AND TECHNOLOGY}}}\\[4cm]

{\textbf{\huge{ GRADUATION THESIS}}}\\[1cm]
{\textbf{\Large{Multi-agent reinforcement learning strategy to maximize the lifetime of Wireless Rechargeable Sensor Networks}}\\[1cm]

{\textbf{\large{NGUYỄN NGỌC BẢO}}}\\
{\large{bao.nn193989@sis.hust.edu.vn}}\\[0.5cm]

{\textbf{\large{Major: Information Technology}}}\\
{\textbf{\large{Specialization: Computer Science}}}\\

\vspace{2cm}
\begin{table}[H]
\centering
\resizebox{\textwidth}{!}{%
\begin{tabular}{ll}
\multicolumn{1}{c}{\textbf{Supervisor:}} & Associate Professor Huynh Thi Thanh Binh \hspace{1.5cm} \underline{\hspace{3cm}}  \\[0.5cm]
  & \multicolumn{1}{r}{Signature}     \\[0.5cm]
\textbf{Department:}                                   & Computer Science                 \\[0.5cm]
\textbf{School:}                                     & School of Information and Communications Technology \\[3cm]
\multicolumn{2}{c}{\textbf{HANOI, 08/2023}}                                            
\end{tabular}%
}
\end{table}}

\end{center}

\end{titlepage}
\newpage
\pagenumbering{gobble} 

\pagenumbering{roman}
\begin{center}
    \Large{\textbf{ACKNOWLEDGMENT}}\\
\end{center}
\vspace{1cm}

I would like to express my heartfelt gratitude to all those who have contributed to the completion of this thesis. This work would not have been possible without the support and assistance of numerous individuals.

Foremost, I extend my deepest appreciation to my supervisor, Assoc. Prof. Huynh Thi Thanh Binh, whose unwavering support, timely reminders, and invaluable advice have been instrumental throughout both the thesis practice and my time in her laboratory. I am also thankful to Prof. Viet Anh Nguyen, Dr Ta Duy Hoang, and Dr. Binh Nguyen for their pressing and pushing me to become a more diligent research practitioner.

I am sincerely grateful to my research team for their technical support and tireless efforts in assisting me whenever needed. In particular, I am immensely indebted to MSc. Tran Thi Huong for her multiple revisions of the thesis and unwavering support and encouragement. Special thanks go to Nguyen Xuan Nam, Vu Quoc Dat, Nguyen Minh Quang, Pham Quan Nguyen Hoang, and Nguyen Minh Hieu for their invaluable assistance in designing images, offering technical suggestions, and providing experimental support for the thesis.

I would also like to express my respect and appreciation to my seniors, Le Van Cuong, Vuong Dinh An, Tran Cong Dao, Nguyen Duc Anh, Nguyen Khuong Duy, Nguyen Trung Hieu, Bui Hong Ngoc, and Dao Minh Quan, for their valuable consultations and guidance from the early stages of my research journey till now. My sincere gratitude extends to all members of MSOLab for their helpful suggestions and support that contributed to the completion of this thesis.

I want to give a big shout-out to my bros Nguyen Phuc Tan, Ta Huu Binh, Nguyen Duc Tam, Nguyen Chi Long, Nguyen Tran Nhat Quoc, and especially Vu Quang Truong. These guys have been there for me not just in research but in all aspects of life. Also, a huge thanks to the lecturers, students, and all the awesome people at Hanoi University of Science and Technology. You've been super encouraging and supportive over the past few years. And of course, big thanks to my family, and friends who've always had my back with their support and love.

\newpage
\pagenumbering{gobble} 
\begin{center}
    \Large{\textbf{ABSTRACT}}\\
\end{center}
\vspace{1cm}
The Wireless Sensor Network (WSN) plays a crucial role in the era of the Internet of Things post-COVID-19. For surveillance applications, it is essential to ensure continuous monitoring and data transmission of critical targets. However, this task presents challenges due to the limited energy capacity of sensor batteries. Recently, Wireless Rechargeable Sensor Networks (WRSN) have emerged, allowing mobile chargers (MCs) to replenish sensor energy using electromagnetic waves. 

Numerous studies have focused on optimizing charging schemes for MCs. However, existing charging algorithms only consider specific network topologies. Applying these works to different network architectures would require starting over. The primary limitations are the inability to reuse previously obtained information and the lack of scalability for large-scale networks. Additionally, since the complexity of the charging problem, these studies only focus on optimizing for a single MC or dividing the original network into smaller, separate regions, with each MC assigned to serve a particular area independently. Consequently, these approaches reduce charging performance when there is no cooperation between MCs.

The thesis proposes a generalized charging framework for multiple mobile chargers to maximize the network's lifetime, ensuring target coverage and connectivity in large-scale WRSNs. Moreover, a multi-point charging model is leveraged to enhance charging efficiency, where the MC can charge multiple sensors simultaneously at each charging location. The thesis proposes an effective Decentralized Partially Observable Semi-Markov Decision Process (Dec-POSMDP) model that promotes cooperation among Mobile Chargers (MCs) and detects optimal charging locations based on real-time network information. Furthermore, the proposal allows reinforcement algorithms to be applied to different networks without requiring extensive retraining. To solve the Dec-POSMDP model, the thesis proposes an Asynchronous Multi-Agent Reinforcement Learning algorithm (AMAPPO) based on the Proximal Policy Optimization algorithm (PPO). Experimental results demonstrate the superiority of AMAPPO over state-of-the-art approaches.
\begin{flushright}
\begin{tabular}{@{}c@{}}
Student\\
\textit{(Signature and full name)}
\end{tabular}
\end{flushright}

\newpage
\pagenumbering{gobble} 
\renewcommand*\contentsname{TABLE OF CONTENTS}
\titlecontents{chapter}
    [0.0cm]             
    {\bfseries\vspace{0.3cm}}                  
    {{\bfseries{\scshape} CHAPTER \thecontentslabel.\ }} 
    {}         
    {\titlerule*[0.3pc]{.}\contentspage}         
    
\titlecontents{section}
    [0.0cm]             
    {\vspace{0.3cm}}                  
    {\thecontentslabel \ } 
    {}         
    {\titlerule*[0.3pc]{.}\contentspage}         
    
\titlecontents{subsection}
    [1.0cm]             
    {\vspace{0.3cm}}                  
    {\thecontentslabel \ } 
    {}         
    {\titlerule*[0.3pc]{.}\contentspage}         

\addtocontents{toc}{\protect\thispagestyle{empty}}
\tableofcontents 
\thispagestyle{empty}
\cleardoublepage

\pagenumbering{roman}
\renewcommand{\listfigurename}{LIST OF FIGURES}
{\let\oldnumberline\numberline
\renewcommand{\numberline}{Figure~\oldnumberline}
\listoffigures} 
\newpage

\renewcommand{\listtablename}{LIST OF TABLES}
{\let\oldnumberline\numberline
\renewcommand{\numberline}{Table~\oldnumberline}
\listoftables}

\glsaddall 
\renewcommand*{\acronymname}{LIST OF ABBREVIATIONS}
\renewcommand*{\entryname}{Abbreviation}
Definition
\begin{center}
    \printnoidxglossaries
\end{center}


\renewcommand\appendixname{APPENDIX}
\renewcommand\appendixpagename{APPENDIX}
\renewcommand\appendixtocname{APPENDIX}

\renewcommand{\figurename}{Figure}
\renewcommand{\tablename}{Table}
\renewcommand{\chaptername}{CHAPTER}


\newpage
\pagenumbering{arabic}

\pagestyle{fancy}
\fancyhf{}
\fancyhead[RE, LO]{\leftmark}
\fancyfoot[RE, LO]{\thepage}

\chapter{INTRODUCTION}
\label{chap:intro}

In recent years, wireless sensor networks (WSNs) have emerged as greate solutions for addressing various large-scale monitoring and tracking problems in diverse sectors. However, these advancements have also raised concerns, particularly regarding the sustainability of network performance. In Chapter~\ref{chap:intro}, a detailed discussion on both the conventional WSN and the new-generation variant, wireless rechargeable sensor network (WRSN), is presented, along with an exploration of their wide-ranging applications. Subsequently, this section will highlight either the target coverage and connectivity problem or the existing attempts made to address it, thereby emphasizing the significant contributions of this thesis.

\section{Wireless sensor network}

Wireless sensor networks (WSNs), which are fundamental components of the Internet of Things (IoT) \cite{khalil2014wireless}, \cite{kocakulak2017overview}, had been recognized as valuable in various real-world applications. Moreover, in recent years, Micro-Electro-Mechanical Systems (MEMS) technology has allowed for the development of smart sensors that are smaller, less expensive, and more energy-efficient than conventional counterparts, while still possessing sufficient computing and processing capacity \cite{faudzi2020application}. This advancement in new-generation sensors has significantly reduced the implementation and maintenance costs of WSNs. Moreover, the ubiquity of wireless technologies in our daily lives has created a favorable environment for the widespread adoption of WSNs. The increasing availability and affordability of wireless connectivity have opened up opportunities for the seamless integration of sensor networks into various sectors, fostering innovative applications and transformative solutions. 

\begin{figure}[H]
    \centering
    \includegraphics[scale=0.4]{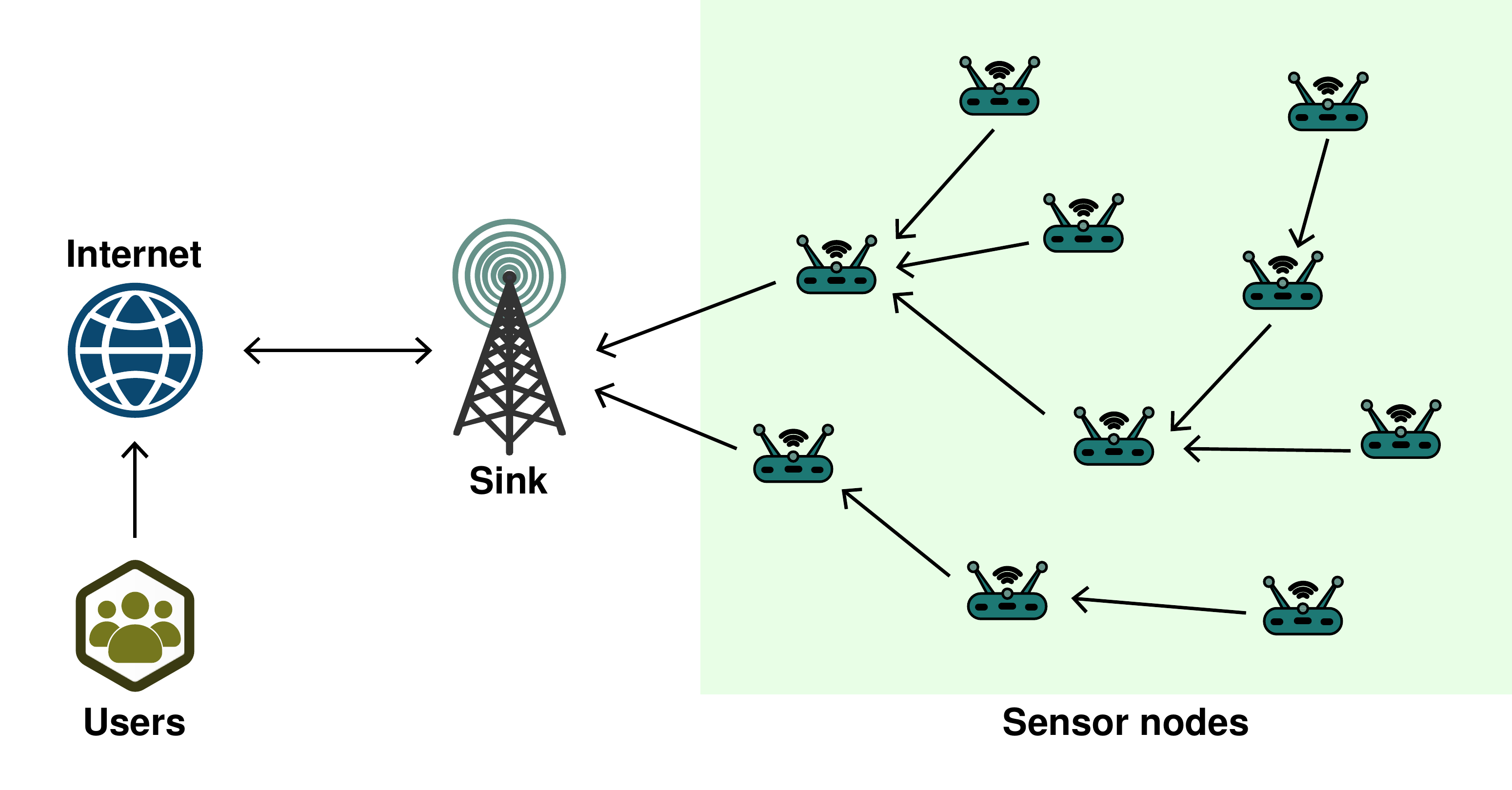}
    \caption{Architecture of a Wireless Sensor Network}
    \label{fig: wsn}
\end{figure}

As illustrated in Fig. \ref{fig: wsn}, a typical Wireless Sensor Network (WSN) consists of a collection of sensors strategically positioned across an area of interest. These sensors collect data about the surrounding environment, which is then relayed through other sensors before reaching a central sink station named Base Station (BS). The data transmission is executed through wireless connections established between sensors. The operation of WSNs necessitates that each sensor performs three essential tasks: sensing, communicating, and computing \cite{fakilidz2002wireless}. These tasks correspond to three integral components within the sensors:

\begin{itemize}
    \item The sensing component is responsible for gathering environmental data such as sound, light intensity, temperature, humidity, images of the surrounding area, and more.
    
    \item The computing unit converts analog signals, such as sound or ground vibration, into digital signals. Additionally, it may handle preprocessing or aggregating of the collected data.
    
    \item The communication unit which utilizes wireless technology plays a vital role in receiving and forwarding data packets to their next destination.

In addition to these three main components, each sensor is equipped with a power unit, typically a battery, which serves as the energy source for the sensor's operation. An illustration of the sensor architecture is presented in Figure~\ref{fig:sn}.
\end{itemize}
\begin{figure} [H]
\centering
\includegraphics[width=0.75\textwidth]{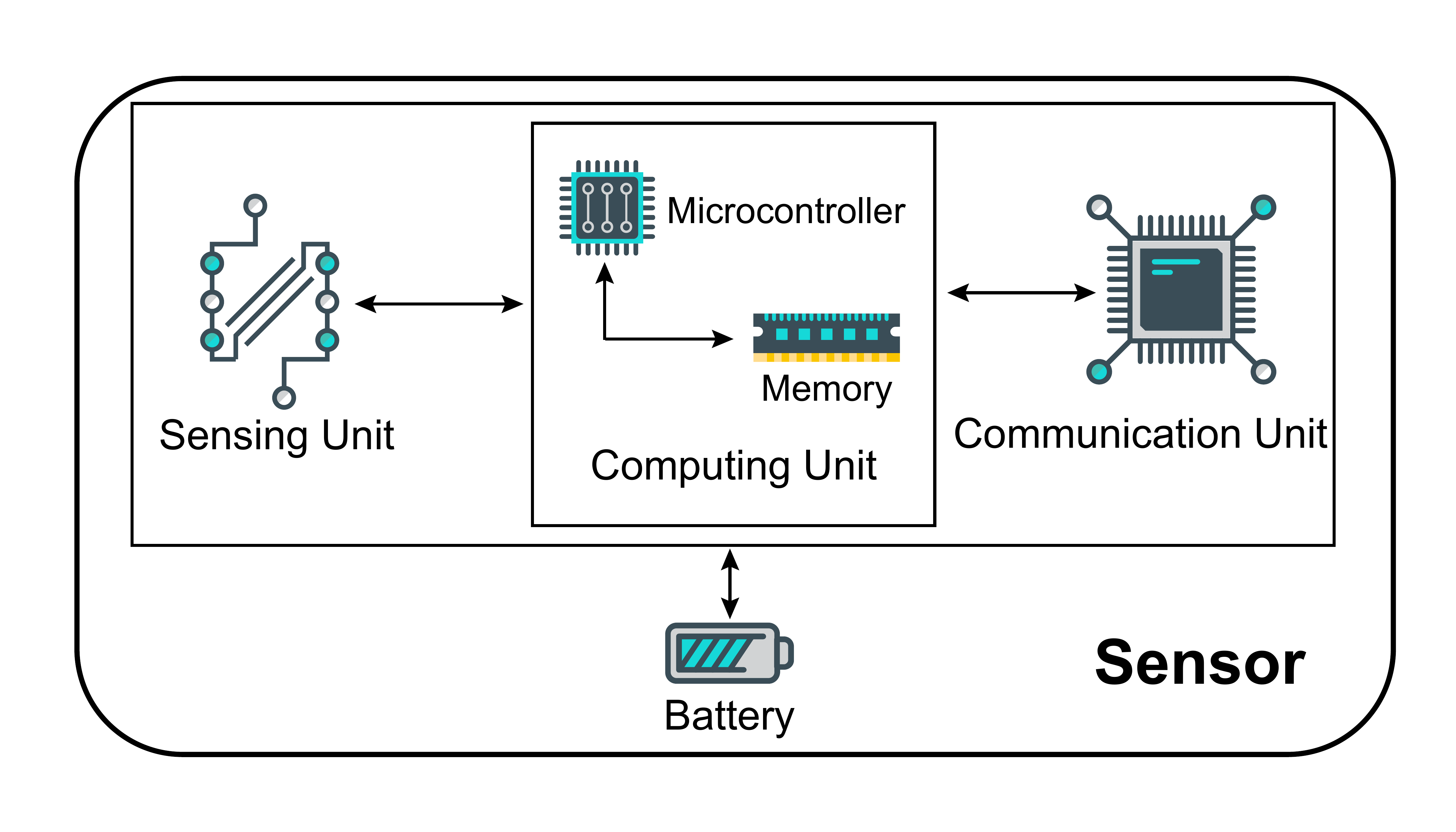}
\caption{Sensor node architecture}
\label{fig:sn}
\end{figure}
At the central station, the collected data serves as a source to facilitate tasks such as forecasting, tracking, monitoring, and other pertinent activities for end-users. The types of sensors and the collected data in a WSN can vary depending on the specific goals and sectors of application. This versatility has enabled WSNs to play a crucial role in diverse real-world domains.

\begin{itemize}
    \item In healthcare applications, Wireless Sensor Networks (WSNs) comprised of sensors of diverse types have the potential to empower hospital staff in effectively tracking the location and monitoring the physiological parameters of a group of patients within hospital environments \cite{lopez2010lobin, kahn1999next}. Consequently, this capability enables doctors and medical experts to actively engage in telemedicine, remote patient monitoring, and teleconsultation.
    
    \item In environmental applications, WSNs can be utilized in governing environmental and climate indicators across a large area. The data collected from temperature, humidity, and vibration sensors in various parts of the area of interest are essential for timely forecasting and safeguarding against natural disasters, including forest fire alarms \cite{chandrakasan1999design}, flood detection \cite{bonnet2000querying}, earthquake prediction 
    \cite{suzuki2007high}, and more.
    
    \item  In smart agriculture, WSNs play a pivotal role in optimizing agricultural practices and boosting productivity \cite{njoroge2018research}. These networks enable real-time monitoring and control of various parameters, facilitating the implementation of precision farming techniques, including automatic irrigation systems \cite{villarrubia2017combining, hamami2020application}, smart livestock monitoring solutions \cite{arshad2022deployment, sharma2018cattle}, and others. Additionally, sensor data obtained from WSNs serve as the primary control signals for efficient resource management systems \cite{gangwar2019conceptual}.

    \item In the civil fields, WSNs possess a fundamental function in the context of smart homes. By integrating embedded sensors, various electronic devices such as light bulbs, washing machines, dishwashers, air conditioners, and more can operate automatically or be remotely controlled based on user preferences \cite{petriu2000sensor, ghayvat2015wsn}.

    \item In commercial applications, WSNs are deployed for office control, inventory control \cite{rabaey2000picoradio}, car theft alarm systems \cite{pottie2000wireless}, and various other purposes.
    
    \item In the military, the applications of WSNs lie in resource management \cite{del2009darma, shah2007distributed}, military monitoring \cite{bekmezci2009energy, ismail2018establishing}, target surveillance \cite{wang2009distributed, biswas2006self}, and more. Specifically, WSNs serve as the backbone for the Command, Control, Communications, Computers, Intelligence, Surveillance, Reconnaissance, and Targeting (C4ISRT) military system \cite{fakilidz2002wireless}.

\end{itemize}

While WSNs hold immense potential for positive impact across various fields, their practical implementation encounters numerous challenges, including security, sensor placement, and network performance. Among these challenges, ensuring sustained network performance over an extended period takes precedence due to its direct influence on user missions. However, this objective is challenging to achieve, primarily due to the limited energy capacity of sensors. As mentioned in the sensor's structure, each sensor relies on a battery as its energy source. However, due to the physical specifications and compact size requirements of sensors, the battery capacity falls short for long-term operation. Once a sensor's battery is depleted, its data collection and transmission tasks come to an abrupt halt. The interruption of even a single sensor or a group of sensors can result in the disconnection of critical subjects being monitored, such as patients in hospitals or strategic military strongholds, potentially leading to severe consequences.

Researchers have proposed three main approaches to address energy depletion in wireless sensor networks (WSNs) \cite{anastasi2009energy, ba2013passive}. The first involves managing the sleep/wake status of sensors, but it has practical limitations as only a small subset of sensors can wake up others \cite{ba2013passive}. The second optimizes data collection from sensors but requires high-performance sensors and does not fully solve the energy depletion problem. The third approach allows sensors to move during operation to enhance energy efficiency but may not be applicable in many real-life scenarios \cite{anastasi2009energy}. Another approach utilizes renewable energy sources, but maintaining a stable source of renewable energy remains challenging due to their uncontrollable nature \cite{adu2018energy}. It was not until the significant advancements in Wireless Energy Transfer (WET) recently that a feasible solution for the energy depletion challenge emerged, giving rise to a new generation of networks known as Wireless Rechargeable Sensor Networks (WRSN).

\section{Wireless rechargeable sensor network}

\begin{figure}[H]
    \centering
    \includegraphics[width=0.7\linewidth]{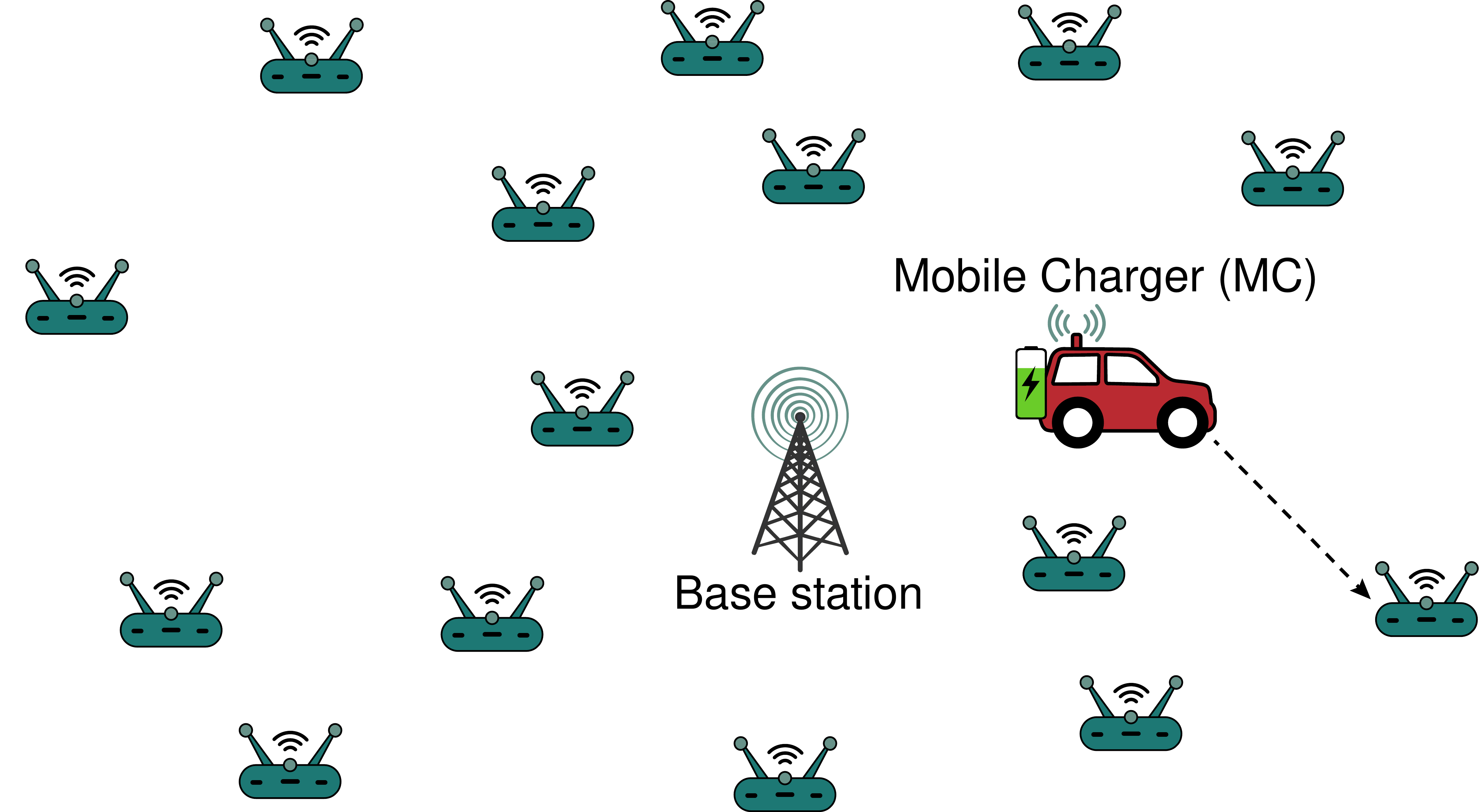}
    \caption{An example of a WRSN}
    \label{fig:wrsn}
\end{figure}

In WRSN, by integrating wireless energy receivers into sensors, it becomes possible to supplement sensors' energy wirelessly. Additionally, autonomous robots known as Mobile Chargers (MCs) are deployed to serve as wireless energy emitters for the sensors. An example of a WRSN is illustrated in Figure~\ref{fig:wrsn}. In a WRSN, the MCs depart from the base station and traverse the network to visit specific locations where they charge the surrounding sensors. The MCs repeat these charging steps continuously, ensuring a consistent supply of energy. Only when an MC's energy is depleted, does it return to the base station for recharging. By implementing a well-designed controller for the MCs, it is possible to effectively mitigate the energy deterioration of the sensors, thereby significantly enhancing the network's performance over an extended period. Consequently, the development of an algorithm for the MCs that directly affect the lifetime of the network has become a prominent research focus, attracting considerable attention from the research community.

\section{NLM-CTC: \textbf{N}etwork \textbf{L}ifetime \textbf{M}aximization for \textbf{C}onnected \textbf{T}arget \textbf{C}overage}

To design an effective algorithm for MCs in a WRSN, it is crucial to consider the specific objective of the network. Several algorithms proposed in the literature, such as those presented in \cite{8882248, en12020287, huong2021effective, huong2022bi}, focus on achieving area coverage as their primary objective. On the other hand, other algorithms aim to optimize data collection, as discussed in \cite{zhao2014framework, guo2014joint}. However, in this thesis, the main concern is ensuring both target coverage and connectivity within the WRSN. In this context, a set of target points is distributed throughout the network's monitored area. The sensors continuously collect information about these targets and transmit it to the base station. It is essential to avoid situations where a group of sensors suddenly shuts down, leading to the disconnection of a target from the base station or the absence of sensors to track the target. Therefore, a well-designed algorithm for MCs should guarantee the presence of at least one operational sensor continuously monitoring each target. Furthermore, this sensor must maintain a viable data transmission path to the base station. A WRSN designed with one MC for the NLM-CTC problem is illustrated in Figure~\ref{fig: target_coverage_problem}. The issue of target coverage and connectivity plays a crucial role in real-world applications, specifically in scenarios encompassing critical network points like strategic military strongholds, patients with severe diseases, nests of endangered animals, and similar contexts. The immense motivation derived from these practical contexts further amplifies the significance of this problem, leading to substantial attention and research efforts.

\begin{figure}[H]
    \centering
    \includegraphics[scale=0.30]{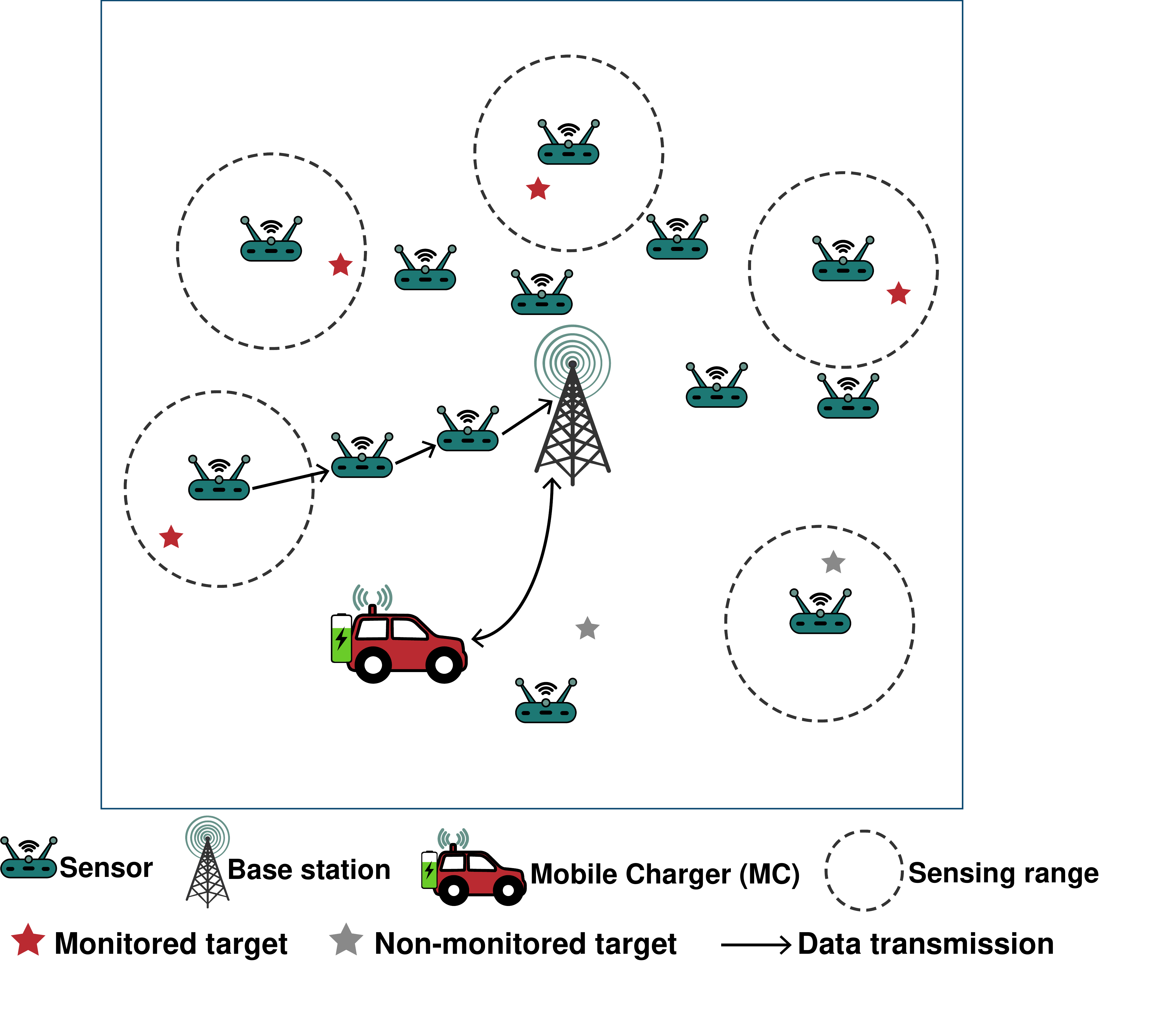}
    \caption{A WRSN with one MC for  NLMCTC problem}
    \label{fig: target_coverage_problem}
\end{figure}

\section{Related work}

Recently, substantial research efforts have been devoted to the development of charging algorithms for MCs. While the fundamental physical specifications of sensors and mobile chargers remain consistent, there are notable differences in the physical models employed in previous works. In this section, we would begin by introducing several related works that explore different physical models, specifically focusing on the distinction between using single MC or multiple MCs, as well as the comparison between multi-node charging and single-node charging. After stating the limitations and applicability of these models, this thesis will choose the most suitable and adaptable one among them. The selected model for this study is utilizing multiple Mobile Chargers (MCs) with a multi-node charging mechanism. Following the discussion of the physical model, the thesis proceeds to analyze two controller strategies: offline and online. Relevant works are examined to evaluate the strengths and weaknesses of each strategy. Based on this analysis, the charging algorithm in this thesis adopts the online strategy, which is more suitable for real-world applications. Of all online algorithms, those based on reinforcement learning are predominant and show much more effectiveness than other approaches. However, these former reinforcement learning algorithms for the multi-MC system with the multi-node charging model encounter lack of generality, scalability, and extensive cooperation between MCs. Those deficiencies of existing works highlight the thesis's contributions.

\textbf{Models using one MC and multiple MCs. }
Initially, pioneering studies in this field primarily addressed the scheduling problem for a single Mobile Charger (MC), as discussed in \cite{LYU2019388, 7889006, 8375982, 8885324, 8737589, huong2021effective, huong2022bi}. While this approach adequately suffices for small-scale WRSNs, it becomes less suitable for large-scale networks comprising thousands of nodes. As the scale of the WRSN increases, the demand for charging tasks assigned to the MC surges. In such scenarios, a single MC alone cannot effectively sustain the network's operation. Consequently, to alleviate the burden on a single MC, numerous research works have proposed models involving multiple mobile chargers. However, the inclusion of multiple MCs in the model necessitates more complex algorithms to facilitate cooperation and prevent collisions among them. A common strategy for addressing the presence of multiple mobile chargers involves dividing the main problem into two sub-problems. The first sub-problem entails clustering the sensors into groups, with each cluster being served by a single MC. The second sub-problem focuses on optimizing the charging schedule for each MC. In most studies, the clustering criteria for the first sub-problem aim to balance the workload among the MCs \cite{9078842,huong2021multi}. In a particular topology examined in \cite{9119859}, where sensors are uniformly deployed in a circular region, the authors proposed dividing the sensors into circular sectors. Another approach, as presented in \cite{9000624}, involves grouping sensors into on-time-covered sets, which can be charged by a single MC within a charging cycle. To tackle the second sub-problem, i.e., the optimization of charging schedules for each MC, various techniques have been employed, including fuzzy logic \cite{9078842}, genetic algorithms \cite{8882248,en12020287,huong2021effective,huong2022bi}, game theory \cite{LIN201688}, and linear programming \cite{8721449}. 

In general, the model using one MC is a special case of the multi-MC model. Therefore, algorithms built for the model using multiple MCs possess generalization and are adaptable for diverse network scenarios. However, the two-stage approach in virtually every research for the multi-MC model fails to capture the intricate coordination among MCs because their tasks are completely separated. In this approach, each MC is solely responsible for a specific set of sensors or an area, which means there is no assistance from other MCs if the workload in an MC's area suddenly increases. This drawback highlights the need for a more comprehensive mechanism that promotes cooperation and collaboration among MCs. 

\textbf{Single-node and Multi-node charging models. }The authors of \cite{huong2020genetic} classify charging algorithms for Mobile Chargers (MCs) into two main types: the single-node model and the multi-node charging model:
\begin{itemize}
    \item In the single-node model, the MCs visit the positions of individual sensor nodes to replenish their energy. This approach focuses on reducing the destination space for the charger and targeting specific sensor positions within the network. To address the issue of node failure avoidance, previous research such as \cite{feng2016starvation, zhu2018adaptive} has proposed heuristic algorithms. One such algorithm is the PA (Probability Assignment) algorithm, where a charging probability is assigned to each requesting node. The next node to be charged is selected based on this probability. Another algorithm is the INMA (Intelligent Node with Minimum Accumulated-charge) algorithm, which selects the next charged node in a way that minimizes the potential energy depletion for other requesting nodes.
    
    \item Recognizing that a single MC may not be sufficient to fulfill all charging requests in dense or large-scale networks, recent studies have explored multi-node charging schemes. In these schemes, an MC can charge multiple sensors simultaneously. For instance, in \cite{xu2019minimizing}, the authors demonstrated that overall energy efficiency can be significantly improved by properly turning a coupled resonator, allowing multiple receivers to be charged simultaneously instead of a single receiver. According to \cite{wang2018crcm}, in the multi-node charging model, a key physical parameter is the charging range. The charging range represents the radius of a circle centered at the location of an MC. Only sensor nodes located within this circle are eligible to receive energy replenishment from the MC. This range determines the spatial coverage of the charging process and influences which nodes can be charged simultaneously.
\end{itemize}

The single-node charging model is preferable in scenarios where sensors are widely distributed, with each sensor being far away from others. On the other hand, the multi-node charging model is more suitable for networks characterized by a high density of sensors, as discussed in \cite{long2023q}. It is worth noting that the single-node charging model can be viewed as a special case of the multi-node charging model, where the charging range is set to 0. One of the advantages of the multi-node charging model is that it allows mobile chargers to freely navigate and reach the exact positions of any sensor nodes same as the way of the single-node model. This flexibility makes algorithms designed for the multi-node charging model applicable to a wide range of scenarios. However, it is noteworthy that previous works often restrict the mobility of mobile chargers by enforcing a fixed set of charging locations, thereby limiting their flexibility and adaptability. For instance, Xu et al. explored the determination of charging schedules for multiple MCs in their studies \cite{xu2019minimizing, xu2020minimizing}. In their network system, the charging locations align with the precise positions of sensor nodes. The multi-node charging scheme is employed when an MC stops at a sensor node and charges all neighboring nodes within its charging range. In conclusion, while the multi-node charging scheme offers greater adaptability and efficiency compared to the single-node charging model, there is still room for the development of new algorithms to address the limitations imposed by fixed charging locations. By enabling MCs to dynamically adjust their charging destinations in response to real-time network conditions and energy demands, it is possible to further enhance the flexibility of the multi-node charging model. This approach can lead to the creation of more optimized and robust charging algorithms that effectively meet the energy needs of the network.

\textbf{Online and offline schedulers}

According to \cite{huong2020genetic}, charging algorithms for MCs are divided into two main groups:

\begin{itemize}
    \item Offline scheduler: These algorithms offlineally produce a long sequence of steps for the MC based on the current information of the WRSN. Each step can be interpreted as moving to a destination and charging neighboring sensors for an amount of time. As soon as the MC finishes all demands in the previous sequence, the scheduler yields a new one. In the study conducted by Lin et al. \cite{lin2019minimizing}, a novel energy transfer concept based on distance and angle was introduced to minimize the overall charging delay time. They proposed an optimal solution using linear programming techniques. Similarly, Jiang et al. \cite{jiang2017joint} aimed to minimize the number of MCs while ensuring that no sensor nodes experience energy depletion. They achieved this by jointly determining the charging paths of the MCs and the locations of base stations (BSs). In another work by Lyu et al. \cite{lyu2019periodic}, a combination of Genetic Algorithm (GA) and Particle Swarm Optimization (PSO) was employed to find an optimal path for the MC that minimizes the idle time spent at the BS for energy replenishment. On the other hand, Ma et al. \cite{ma2018charging} proposed approximation algorithms for determining the charging path of the MC with the objective of maximizing the overall accumulative charging utility gain or minimizing the energy consumption of the MC during its travel.
    \item Online scheduler: Instead of yielding a sequence step for the MC, these schedulers only output a next step. Regarding conventional algorithms of this type, most of them adopt a queue of charging requests sent from sensors. As soon as the MC finishes the previous steps, algorithms implement a mechanism to map the most important charging request to the next step of the MC. For instance, the paper \cite{zhu2018adaptive} presents a new probability-based selection strategy to minimize the number of depleted nodes over time. In this work, the authors set up a threshold energy level for sensors. If the energy of a sensor decrease below the threshold, it will send a charging request. The sooner the request is sent, the more important it is. To improve the quality of online requests, the works in \cite{7924317,LIN201972} propose a multi-level warning threshold for each sensor node depending on its remaining energy. 
\end{itemize}

It is important to acknowledge that the use of offline scheduling in charging algorithms is not widely applicable in practical scenarios. This is because nodes in the network often deactivate unexpectedly, resulting in changes to the network topology. Consequently, predetermined charging schedules may no longer be optimal or feasible in the altered topology, as highlighted by He et al. \cite{he2013demand}. On the other hand, online charging algorithms offer a more reasonable approach for practical applications. These algorithms employ a step-by-step controller approach, responding to charging requests as they arise, rather than relying on pre-determined schedules. However, the effectiveness of online charging algorithms, which operate based on request queues, heavily relies on the selection of the energy threshold for charging requests. If a low threshold is set, mobile chargers (MCs) will have limited time to fulfill charging requests from critical nodes. In situations where numerous requests occur simultaneously, it may become infeasible to select an optimal path to satisfy all these requests, thereby putting the network at risk \cite{zhu2018adaptive}. On the contrary, setting a high threshold for the energy threshold may result in an inefficient charging scheme, as an influx of charging requests may occur simultaneously. This can lead to congestion and suboptimal utilization of the MCs' charging capabilities \cite{zhu2018adaptive}.

To address the limitations of online algorithms based on queues, researchers have recently turned to reinforcement learning (RL) techniques for controlling the operation of mobile chargers (MCs). RL-based algorithms have shown significant improvements over queue-based approaches in addressing various charging problems in WRSNs. The superiority of RL algorithms can be attributed to the nature of WRSNs, which can be seen as a robot-controlling problem. Indeed, RL algorithms have demonstrated superhuman performance in such interacting agents applications as Atari games \cite{mnih2015human}, mastering the game of Go \cite{silver2016mastering}, chess \cite{silver2017mastering}, robotic \cite{kober2013reinforcement}, making them well-suited for optimizing MCs' operations in WRSNs. Moreover, queue-based algorithms often focus on optimizing short-term objective functions and may overlook long-term performance. In contrast, RL-based algorithms are designed to create interactive agents that maximize cumulative rewards, leading to a long-term commitment to network performance. By considering the overall network performance over time, RL-based algorithms can offer more effective and sustainable solutions for charging problems in WRSNs.  

Current applications of RL in charging problems primarily focus on WRSNs utilizing a single mobile charger (MC). For example, in \cite{cao2021deep}, the authors formulate the charging problem as a Markov Decision Process (MDP) and propose an advanced deep reinforcement learning algorithm to solve the MDP. Their algorithm optimizes the charging path while considering constraints on the MC's energy, sensors' energy levels, and time windows. Similarly, in \cite{yang2020dynamic}, the authors introduce an actor-critic reinforcement learning algorithm that incorporates gated recurrent units (GRUs) to capture the temporal relationships of charging actions. In the paper \cite{gong2023deep}, the Double Dueling DQN algorithm is employed to jointly optimize the scheduling of both the charging sequence of the MC and the charging amount of individual nodes. Furthermore, Bui et al. \cite{bui2022deep} propose a policy-based RL algorithm to determine the next sensor to charge based on the current state of the network.

The application of RL in the context of multi-mobile charger (MC) multi-node charging models is relatively limited. This can be attributed to the complexity of Multi-Agent Reinforcement Learning (MARL), which has shown promise in solving sequential decision-making problems involving multiple autonomous agents aiming to optimize long-term returns. However, applying the problem model of Decentralized Partially Observable Markov Decision Process (Dec-POMDP) \cite{oliehoek2016concise}, a framework commonly used in MARL, to charging problems in WRSNs is inappropriate. The primary issue lies in the synchronous nature of the time step definition in the Dec-POMDP framework. It assumes synchronized action execution among agents, while in WRSNs, MCs complete charging steps at different time points. If we were to clip the steps of MCs using a time interval, it would pose a significant problem in selecting an appropriate interval value. A short time interval may prevent MCs from completing their steps entirely, as it may not allow enough time for them to move from their current location to the destination or provide sufficient charging time. On the other hand, a long time interval would result in inactive MCs that wait excessively long periods before committing to the next step. Consequently, traditional MARL algorithms based on Dec-POMDP, such as MAPPDG \cite{lowe2017multi}, QMIX \cite{rashid2020monotonic}, and MAPPO \cite{yu2021surprising}, would prove inefficient in cooperative charging scenarios. 

A naive approach to remedy this challenge is considering each MC as an independent agent, so it becomes possible to apply traditional single-agent RL algorithms to each MC's MDP. One notable example in the literature is the proposal presented in \cite{long2023q}, which utilizes the Q-learning technique. In this work, each MC maintains a Q-table to assess the quality of different actions. This approach simplifies the problem by breaking it down into multiple single-agent problems, allowing each MC to make decisions based on its own observations and goals. To separate the operation of MCs, it is important for each MC's state space to include information about the other agents, treating them as part of the environment. However, in this proposal, the state space is limited to the set of charging locations, which means that it fails to capture the information of other agents. As a result, the coordination and cooperation between MCs are adversely affected, impacting the overall effectiveness of the system. It is worth mentioning that this study shares a common objective with the thesis, focusing on achieving target coverage and connectivity in a network model involving multiple MCs and a multi-node charging mechanism. This work stands out as one of the few that addresses these specific aspects and aligns closely with the goals of the thesis. There are also other works that try to remedy the complexity of MARL. They are not discussed here due to the difference in the objective of the thesis. 

In conclusion, almost works suffer from common drawbacks listed as follows:
\begin{itemize}
    \item Incidentally decreasing the generalization and the scalability of WRSNs by adopting either model using one MC or single-node charging mechanism.

    \item Fixing the destinations of MCs which potentially produce sensors definitely running out of energy.

    \item Failing to emphasize the cooperation between MCs.

    \item Requiring retraining reinforcement learning models if the flee of MCs is brought to a new network.

\end{itemize}

\section{Contributions and organization of the thesis}

Overall, the thesis makes significant contributions to addressing the limitations of previous works in the following ways:
\begin{itemize}
    \item The thesis thoroughly studies the concept of WSN, and WRSN specifically focusing on the target coverage and connectivity problem. It provides a comprehensive analysis of existing charging models for WRSNs, including an in-depth examination of offline and online charging strategies.
    
    \item  The thesis proposes an effective Decentralized Partially Observable Semi-Markov Decision Process (Dec-POSMDP) model that mathematically formulates the multi-node charging model. This model promotes cooperation among Mobile Chargers (MCs) and avoids restricting the set of destinations for MCs. Furthermore, the proposed MDP model allows reinforcement algorithms to be applied to different networks without requiring extensive retraining.
    
    \item A revised version of the Proximal Policy Optimization algorithm (PPO) is introduced in the thesis, specifically tailored to accommodate the asynchronous nature of MCs' steps in WRSNs.
    
    \item Finally, extensive experiments are conducted in various practical scenarios, showing that our approach outperforms other state-of-the-art methods. 
\end{itemize}

The structure of this thesis is as follows:
\begin{itemize}
    \item Chapter \ref{chap:intro} introduces some basic concepts of \acrshort{WSN}s, \acrshort{WRSN}s, NLM-CTC problem, and its related works.

    \item Chapter \ref{chap:preliminaries} introduces some fundamental theories about Reinforcement Learning, Cooperative Multi-Agent Reinforcement Learning, and Convolutional Neural Networks. 
    
    \item Chapter \ref{chap:network-model} establishes the network model and provides a mathematical formulation (Dec-POSMDP) for the NLM-CTC.
    
    \item Chapter \ref{chap:proposed} presents the proposed algorithm.
    \item Chapter \ref{chap:experiments} conducts extensive experiments and analysis.
\end{itemize} 


\newpage
\chapter{PRELIMINARIES}
\label{chap:preliminaries}
In recent years, the application of reinforcement learning (RL) and multi-agent reinforcement learning (MARL) has gained significant attention and demonstrated remarkable impact across various sectors. Chapter \ref{chap:preliminaries} provides a concise introduction to fundamental terminologies, models, and popular approaches in RL and MARL. Additionally, this chapter explores the utilization of convolutional neural networks (CNNs) and their variant, U-net, highlighting their underlying support and relevance in the context of RL.

\section{Reinforcement learning}
\subsection{Markov decision process}
\begin{figure} [!ht]
    \centering
    \includegraphics[width=\textwidth]{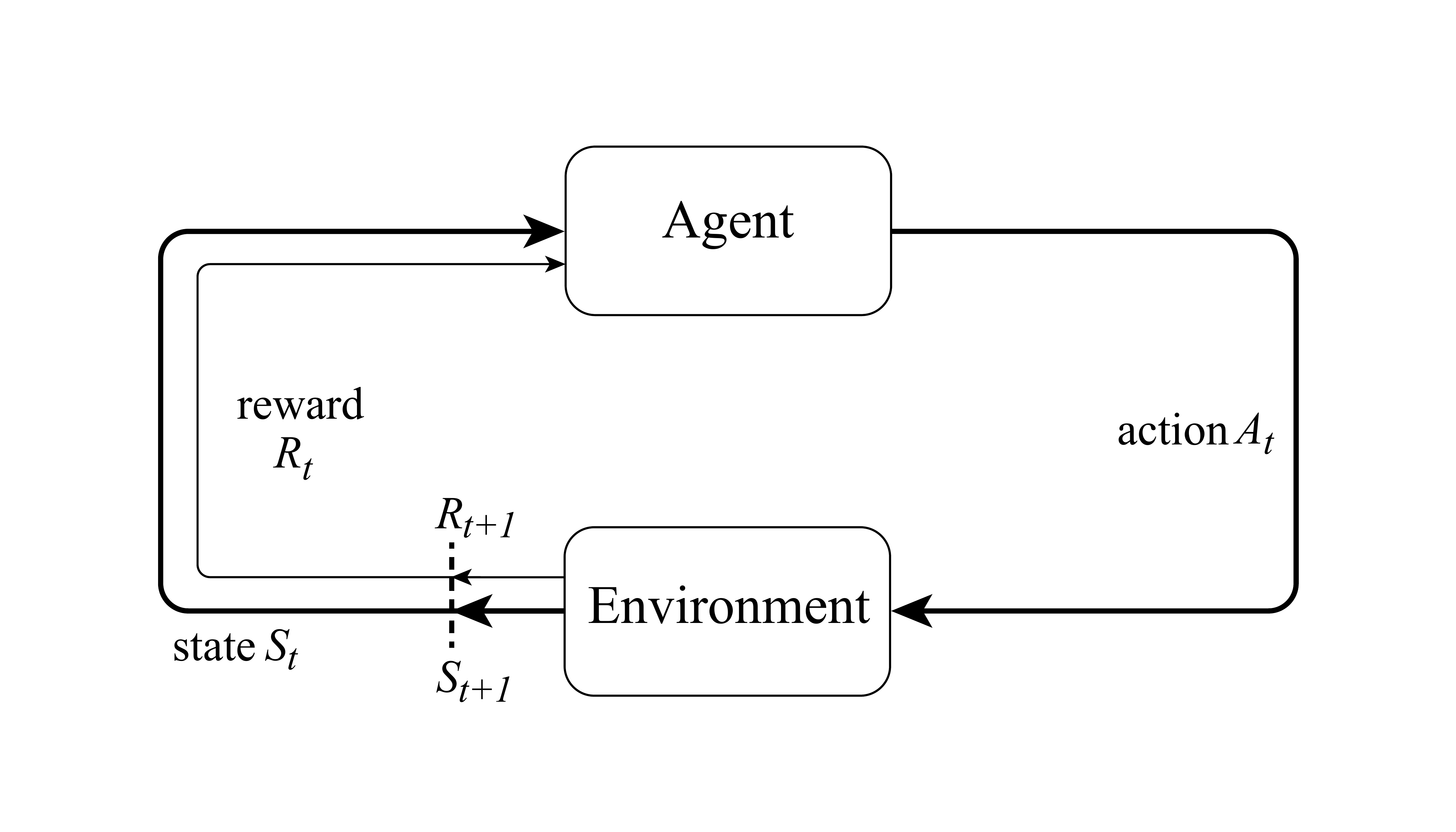}
    \caption[Markov Decision Process]{In a Markov Decision Process (MDP), an agent interacts with an environment over time. At each timestep $t$, the agent observes the current state of the environment, denoted as $S_t$, and takes an action, denoted as $A_t$. As a result of the agent's action, the environment transitions from state $S_t$ to a new state $S_{t+1}$ and provides the agent with a reward of $R_{t+1}$. This interaction process continues in a loop, with the agent repeatedly observing states, taking actions, and receiving rewards, as represented by the sequence: $S_0, A_0, R_1, S_1, A_1, R_2, S_2, \dots$
    \label{fig:mdp}}
\end{figure}

Markov Decision Processes (MDP) \cite{bellman1957markovian} serve as the conventional method for modeling reinforcement learning (RL) problems, focusing on the interaction between the agent and the environment at each timestep $t$. An MDP is defined by a triplet $\mathcal{M} = (\mathcal{S},\mathcal{A}, \mathcal{P}_0)$, where $\mathcal{S}$ represents the set of states, $\mathcal{A}$ denotes the set of actions available to the agent, and $\mathcal{P}_0$ is the transition probability kernel. In particular, given a state $s$ and an action $a$, $\mathcal{P}_0(\cdot|x, a)$ is a probability map from $\mathcal{S}\times\mathbb{R}\rightarrow\mathbb{R}$. For instance:

\begin{equation}
    \label{prob-kernel}
    \mathcal{P}_0(y, R) = \Pr(y, R | S_t=x, A_t=a)
\end{equation}

The probability kernel, as depicted by the equation~\eqref{prob-kernel}, quantifies the likelihood of the environment transitioning from its current state $S_t=x$ to the next state $S_{t+1}=y$, with an associated intermediate reward $R_{t+1}=R$, given that the agent takes action $A_t=a$.

The \textit{return} is defined as follows:
\begin{equation}
    \label{return}
    \mathcal{R} = \sum_{t=0}^{+\infty}\gamma^tR_{t+1}
\end{equation}

where $\gamma \in [0, 1]$ is the discount factor.

When the \textit{discount factor} $\gamma$ is set to 0, the expected return $\mathcal{R}$ only takes into account the immediate reward $R_0$, and the agent is considered near-sighted as it ignores potential long-term gains. Conversely, a higher value of $\gamma$ indicates a more far-sighted agent, as it considers long-term rewards into its decision-making. The stochastic nature of $\mathcal{R}$ arises from the fact that $R_t$ is determined by a probability map. The \textit{expected return} of $\mathcal{R}$ can be calculated as follows:

\begin{equation}
    \label{expected-return}
    \mathbb{E}\left[\mathcal{R}\right] = \mathbb{E}\left[ \sum_{t=0}^{+\infty}\gamma^tR_{t+1}\right]
\end{equation}

The primary objective of an \acrshort{MDP} is to maximize the expected return, as shown in \eqref{expected-return}. The most common approach to achieve this objective is to estimate the expected return and use it to optimize the agent's behavior. The \textit{optimal value} for each state $x$ is denoted by $V^*(x) = E[R|S_0=x]$, and $V^*: S\to\mathbb{R}$ represents the so-called \textit{optimal value function}. An \textit{optimized} behavior refers to a behavior that leads to the optimal value for each state $x \in \mathcal{S}$, which corresponds to the optimal value function \cite{szepesvari2010algorithms}. The behavior being referred to is commonly known as a \textit{policy}. There are two types of policies:
\begin{itemize}
    \item Deterministic stationary policies are a type of policy that directly maps a state $x$ to an action $a$. For instance, if we have a deterministic stationary policy $\pi: \mathcal{S} \to A$, then the agent will consistently choose the action $A_t = \pi(x) = a$ whenever it encounters the state $S_t = x$. In other words, the agent selects the same action every time it faces the same state.
    \item In contrast, stochastic stationary policies are policies that represent probability distributions over the set of possible actions $A$, given a particular state $x$. If $\pi$ is a stochastic stationary policy and the agent is in state $S_t=x$, then the policy will randomly choose an action from the distribution:
    \[ A_t \sim \pi(\cdot | S_t=x).\]
    In other words, the agent may select different actions each time it encounters the same state, as the policy is probabilistic.
\end{itemize}
The value function based on policy $\pi$ is denoted as $V^{\pi}(x)$ and can be defined as:

\[ V^{\pi}(x) = \mathbb{E}\left[ \sum_{t=0}^{+\infty}\gamma^tR_{t+1} \mid S_0=x\right] \]

So far, we have only considered the expected return of a state $x$; oftentimes, this estimation provides insufficient information to select the next action $a$. Similarly, the \textit{action-value function} is defined as:

\[ Q^{\pi}(x, a) = \mathbb{E}\left[ \sum_{t=0}^{+\infty}\gamma^tR_{t+1} \mid S_0=x, A_0=a\right] \]

We use $Q^*$ to denote the optimal action-value function. According to \cite{sutton2018reinforcement}, the following relation holds:
\begin{equation*}
     V^{\ast}(x) = \sup_{a\in\mathcal{A}} Q^{\ast}(x, a), \forall x\in\mathcal{S}
\end{equation*}
This implies that if we know the optimal action-value function $Q^*$, we can always choose the action that maximizes the expected cumulative reward for any given state $x$. We simply need to select the action $a$ that yields the maximum value of $Q^{\ast}(x,a)$. The Q-learning algorithm is designed to optimize the action-value function $Q_{\pi}$, which can be used to derive a near-optimal policy $\widetilde{\pi}$.

\subsection{Actor-critic approaches}

Reinforcement learning (RL) is a subfield of machine learning that focuses on training agents to make sequential decisions in an environment to maximize a long-term reward signal. Actor-Critic approaches have gained significant attention in RL due to their ability to address the challenges of high-dimensional and continuous action spaces.

\subsubsection*{Actor-Critic Architecture}
The Actor-Critic architecture is a hybrid approach that combines elements of both value-based methods and policy-based methods. It consists of two main components: the actor and the critic.

The actor is responsible for selecting actions based on a policy. It receives observations from the environment and maps them to a probability distribution over actions. The policy can be deterministic or stochastic, depending on the specific implementation. Stochastic policies are commonly used to explore the action space more effectively.

The critic, on the other hand, evaluates the actions taken by the actor and provides feedback in the form of value estimates. It estimates the expected return or value function associated with a particular state or state-action pair. The critic's role is to guide the actor by providing a signal that indicates the quality of the actor's actions.

\subsubsection*{Training Procedure}
The training procedure in actor-critic methods involves an iterative process of updating both the actor and the critic based on the received feedback. During each iteration, the agent interacts with the environment, receiving observations, taking actions, and receiving rewards. The actor selects actions according to the current policy, while the critic estimates the value of each state or state-action pair encountered.

To train the actor, the advantages of each action are computed. The advantage function represents the advantage of taking a specific action compared to the average value of actions in that state. The actor's parameters are updated using the gradient descent method. The update equation for the actor can be expressed as:
\[
\theta_{\text{actor}} \leftarrow \theta_{\text{actor}} + \alpha \nabla_{\theta_{\text{actor}}} J(\theta_{\text{actor}}),
\]
where $\theta_{\text{actor}}$ represents the parameters of the actor, $\alpha$ is the learning rate, and $\nabla_{\theta_{\text{actor}}} J(\theta_{\text{actor}})$ is the gradient of the objective function $J(\theta_{\text{actor}})$ with respect to the actor's parameters.

Similarly, the critic's parameters are updated using the gradient descent method. The updated equation for the critic can be expressed as:
\[
\theta_{\text{critic}} \leftarrow \theta_{\text{critic}} + \alpha \nabla_{\theta_{\text{critic}}} L(\theta_{\text{critic}}),
\]
where $\theta_{\text{critic}}$ represents the parameters of the critic, $\alpha$ is the learning rate, and $\nabla_{\theta_{\text{critic}}} L(\theta_{\text{critic}})$ is the gradient of the loss function $L(\theta_{\text{critic}})$ with respect to the critic's parameters.

The specific form of the objective function and the loss function depends on the chosen algorithm and the problem.

\subsubsection*{Advantages of Actor-Critic Approaches}
Actor-Critic approaches offer several advantages in RL:
\begin{enumerate}
    \item They combine the benefits of both value-based and policy-based methods. The critic provides valuable feedback to guide the actor's actions, while the actor explores the action space more effectively.
    \item They can handle high-dimensional and continuous action spaces, which are challenging for traditional value-based methods.
    \item They have the potential for online learning, as the actor can continuously update its policy based on the received feedback.

    \item They can achieve faster convergence and improved sample efficiency compared to other RL methods.
\end{enumerate}

Actor-Critic approaches have been successfully applied to various domains, including robotics, game-playing, and autonomous driving. Their ability to handle continuous control and learn directly from raw sensor inputs makes them suitable for real-world applications requiring precise and continuous actions.

\subsection{Proximal policy optimization algorithm}

Proximal Policy Optimization (PPO) aims to optimize the policy parameters, denoted by $\theta$, to maximize the expected cumulative reward, or return, over time. The agent interacts with the environment, and at each time step t, the agent observes the state $s_t$, takes an action $a_t$ according to the policy $\pi(a_t|s_t, \theta)$, receives a reward $r_t$, and transitions to the next state $s_{t+1}$.

The goal is to find the optimal policy parameters $\theta^{\ast}$ that maximize the expected return:

\[ J(\theta) = \mathbb{E}_{\tau \sim \pi_\theta} \left[ \sum_{t=0}^\infty \gamma^t r_t \right], \]

where $\tau = (s_0, a_0, r_0, s_1, a_1, r_1, \ldots)$ is a trajectory sampled from the policy $\pi_{\theta}$, and $\gamma$ is the discount factor ($0 < \gamma < 1$) that discounts future rewards.

To update the policy parameters, PPO introduces a surrogate objective function, denoted as $L^{CLIP}(\theta)$, which estimates the expected advantage of the new policy over the old policy. The advantage function $A(s_t, a_t)$ measures how much better an action $a_t$ is than the average action in state $s_t$.

The clipped surrogate objective function $L^{CLIP}(\theta)$ is given by:

\[ L^{CLIP}(\theta) = \mathbb{E}_{t} \left[ \min(r_t(\theta) \cdot A_t, \text{clip}(r_t(\theta), 1 - \varepsilon, 1 + \varepsilon) \cdot A_t) \right], \]
where $r_t(\theta) = \frac{\pi_\theta(a_t|s_t)}{\pi_{\theta_{\text{old}}}(a_t|s_t)}$ is the ratio of the probabilities of selecting action $a_t$ under the new policy and the old policy, and $ \varepsilon$ is a hyperparameter that controls the clipping range.

The clipped surrogate objective function applies the min operator between two terms: one term that represents the policy improvement, and the other term that includes a clipping factor to limit the update size. If the ratio $r_t(\theta)$ is within the range $1 - \varepsilon$ to $1 + \varepsilon$, the update is not clipped. Otherwise, it is clipped to be within this range. This ensures that the update is not too large and maintains stability during training.

To optimize the policy, the algorithm collects trajectories using the current policy, computes the advantage estimates, and performs multiple epochs of training. Each epoch consists of updating the policy and possibly the value function to improve performance.

Overall, Proximal Policy Optimization (PPO) combines the advantages of policy gradient methods and trust region methods. By introducing the clipped surrogate objective function, PPO achieves more stable and efficient learning compared to its predecessors. The algorithm has been widely used in various reinforcement learning applications and continues to be an active area of research.

\section{Multi-agent reinforcement learning}

Similarly to single-agent RL, multi-agent reinforcement learning (MARL) also aims to resolve sequential decision-making problems, but in settings involving more than one agent involved. In this setting, the next state, and reward received by each agent depend on both its behavior and other agents' behavior.

\subsection{Terminologies and taxonomies}

To avoid misunderstanding and confusion potentially raised by terminologies, the definition of a multi-agent system in this section would be kept intact when applied to the thesis' problem. 

A general multi-agent system denoted as a tuple $(\mc N, \mc S, \mc A, R, P, \mc O, \gamma)$ in which $\mc N = \{1, 2, \dots, N\} $ is the set of $N$ agents, $\mc S$ is the general system space, $A = \{\mc A_1, \mc A_2, \dots, \mc A_N\}$ is the set of actions for agents, $R$ is the reward function, $P$ is the state action transition probability, and $ \mc O = \{\mc O_1, \mc O_2, \dots, \mc O_N\}$ is the set of all observations of all agents.  $R$ and $P$ in this model are ambiguous and so depend on the specific setting of the system. An approach to generalize the multi-agent system is utilizing the definition of the Markov game (MG), also known as stochastic games in \cite{shapley1953stochastic} that incorporate the synchronous and fully observable properties of agents' behaviors. The formal definition is established as below:

\begin{definition}
    \label{def:markov_game}
     A MG is defined by a tuple $(\mc N, \mc S, \{ \mc A^i\}_{i \in \mc N}, \{R^i\}_{i \in \mc N}, P, \gamma)$, where $\mc N = \{1, \dots, N\}$ denotes the set of $N > 1$ agents, $\mc S$ denotes the state space observed by all agents, $\mc A^i$ denotes the action space of agent $i$. Let $\mc A:= \mc A^1 \times \dots, \times \mc A^N$, then $P: \mc S \times \mc A \rightarrow \Delta(\mc S) $ denotes the transition probability from any state $s \in \mc S$ to any state $s' \in \mc S$ for any joint action $a \in \mc A$; $R^i: \mc S \times \mc A \times \mc S \rightarrow \R$ is the reward function that determines the immediate reward received by agent $i$ for a transition from $(s, a)$ to $s'$; $\gamma \in [0, 1)$ is the discount factor.
\end{definition}

At time $t$, each agent $i \in \mathcal{N}$ takes an action $a_{t}^{i}$ based on the system state $s_{t}$. The system then transitions to state $s_{t+1}$ and rewards each agent $i$ with $R^{i}\left(s_{t}, a_{t}, s_{t+1}\right)$. The objective of agent $i$ is to optimize its long-term reward by finding a policy $\pi^{i}: \mathcal{S} \rightarrow \Delta\left(\mathcal{A}^{i}\right)$, where $a_{t}^{i}$ is sampled from $\pi^{i}\left(\cdot \mid s_{t}\right)$. As a result, the value function $V^{i}: \mathcal{S} \rightarrow \mathbb{R}$ of agent $i$ becomes dependent on the joint policy $\pi: \mathcal{S} \rightarrow \Delta(\mathcal{A})$, defined as $\pi(a \mid s):=\prod_{i \in \mathcal{N}} \pi^{i}\left(a^{i} \mid s\right)$. Specifically, for any joint policy $\pi$ and state $s \in \mathcal{S}$,

\[
V_{\pi^{i}, \pi^{-i}}^{i}(s):=\mathbb{E}\left[\sum_{t \geq 0} \gamma^{t} R^{i}\left(s_{t}, a_{t}, s_{t+1}\right) \mid a_{t}^{i} \sim \pi^{i}\left(\cdot \mid s_{t}\right), s_{0}=s\right],
\]
where $-i$ denotes the indices of all agents in $\mathcal{N}$ except agent $i$, the solution concept for a multi-agent setting deviates from that of a Markov Decision Process (MDP). This deviation arises due to the fact that the optimal performance of each agent is not solely determined by its own policy, but also influenced by the decisions made by all other players in the game. The Nash equilibrium (NE) stands as the prevailing solution concept in this scenario, and it is defined as follows \cite{filar2012competitive, bacsar1998dynamic}.

\begin{definition}
    A Nash equilibrium of the MG $\left(\mathcal{N}, \mathcal{S},\left\{\mathcal{A}^{i}\right\}_{i \in \mathcal{N}}, P,\left\{R^{i}\right\}_{i \in \mathcal{N}}, \gamma\right)$ is a joint policy $\pi^{*}=\left(\pi^{1, *}, \cdots, \pi^{N, *}\right)$, such that for any $s \in \mathcal{S}$ and $i \in \mathcal{N}$ 
    \[
    V_{\pi^{i, *}, \pi^{-i, *}}^{i}(s) \geq V_{\pi^{i}, \pi^{-i, \pi}}^{i}(s), \quad \text{for any } \pi^{i}.
    \]
\end{definition} 

Nash equilibrium indicates a set of points $\pi^{*}$ from which any agents can take advantage by deviating. Concretely, the policy $\pi^{i, *}$ of each agent $\i$ is the best policy conditioning on the policy of other agents $\pi^{-i, *}$. According to \cite{filar2012competitive}, NE always exists for finite-space infinite-horizon discounted MGs, but does not generally necessitate being unique. All MARL algorithms strive to obtain these equilibrium points if they exist.

In a multi-agent reinforcement learning (MARL) system, the settings can be broadly categorized into three types: fully cooperative, fully competitive, and mixed.
\begin{itemize}
    \item In cooperative settings, all agents engage in learning and collaboration to collectively optimize a shared objective. Comprehensive surveys about cooperative settings can be found in \cite{oroojlooy2023review, matignon2012independent}.

    \item  In contrast, in competitive settings, each agent strives to improve its own objective while simultaneously impeding the progress of its opponents. This concept is akin to a zero-sum game, where the sum of returns for all agents adds up to zero. Some particular settings are examined in \cite{bucsoniu2010multi}.

    \item The mixed settings, also known as the general-sum games \cite{littman2001friend}, involve both the cooperative and competitive relations between agents. A popular instance is that a group of agents collaborate to fight against other groups of agents.  
\end{itemize}

The charging problem in WRSNs is commonly recognized as a fully cooperative setting. Therefore, this section will primarily focus on former works that have developed models and solutions specifically tailored to cooperative settings. Consider a Markov game as defined in Definition \ref{def:markov_game}, where the reward function $R$ is the same for all agents, denoted as $R^1 = \dots = R^N = R$. Here, the reward function $R$ depends on the joint action $a \in \mc A$. In this scenario, the value function or Q-function is identical for all agents. This allows us to treat the group of agents as a single decision-maker, with the joint action space representing the actions of all agents. Consequently, we can leverage single-agent reinforcement learning algorithms to find the policy for the group of agents.

\subsection{Decentralized partially observable markov decision process}

It is worth noting that the Definition \ref{def:markov_game} assumes that every agent can observe the general state of the system including all parameters that affect the problem and also how the decision-makers of other agents work. However, in real-world applications, it is impossible to capture all impacting factors. Hence, each agent at each time step would receive its own local observation instead of the overall state of the system. This setting is formulated as a Decentralized partially observable Markov decision process (Dec-POMDP) \cite{bernstein2002complexity}. The definition of Dec-POMDP is presented as follows.

\begin{definition}
    \label{decpomdp}
    A Dec-POMDP is defined by a tuple $(\mathcal{N}, \mathcal{S}, \mathcal{A}, \mathcal{O}, P, R, Z, \gamma)$, where:
    \begin{itemize}
        \item $\mathcal{N}$ is the set of $N$ agents.
        \item $\mathcal{S}$ is the set of possible states.
        \item $\mathcal{A} = \mathcal{A}^1 \times \mathcal{A}^2 \times \dots \times \mathcal{A}^N$ is the joint action space, where $\mathcal{A}^i$ is the set of actions available to agent $i$.
        \item $\mathcal{O} = \mathcal{O}^1 \times \mathcal{O}^2 \times \dots \times \mathcal{O}^N$ is the joint observation space, where $\mathcal{O}^i$ is the set of observations available to agent $i$.
        \item $P: \mathcal{S} \times \mathcal{A} \times \mathcal{S} \rightarrow [0, 1]$ is the state transition function, which gives the probability of transitioning from state $s$ to $s'$ given joint action $a$.
        \item $R: \mathcal{S} \times \mathcal{A} \rightarrow \mathbb{R}$ is the reward function, which assigns a real-valued reward to each state-action pair.
        \item $Z: \mathcal{S} \times \mathcal{A} \times \mathcal{O} \rightarrow [0, 1]$ is the observation function, which gives the probability of receiving observation $o$ given state $s$ and joint action $a$.
        \item $\gamma \in [0, 1]$ is the discount factor, determining the trade-off between immediate and future rewards.
    \end{itemize}
\end{definition}

At each time step, each agent $i \in \mathcal{N}$ observes an observation $o^i \in \mathcal{O}^i$ based on the true state $s$ and its own action $a^i$. The agents collectively take a joint action $a \in \mathcal{A}$ based on their observation history, and the system transitions to a new state $s'$ based on the state transition function $P$. The difference between Definition~\ref{def:markov_game} and Definition~\ref{decpomdp} lies in the appearance of $\mc O$ and $Z$ which represent the local information of agents.

\subsection{Options in markov decision process}

In the context of MDPs, an option is a temporally extended action that allows an agent to pursue a specific subgoal or achieve a particular behavior. It provides a higher-level abstraction of actions, enabling the agent to plan and execute a sequence of actions to reach a desired state or achieve a specific task. Formally, an option in an MDP is defined as a tuple $(\mathcal{I}, U, B)$, where:

\begin{itemize}
    \item $\mathcal{I}$ is the initiation set, which specifies the states where the option can be initiated or selected by the agent.
    \item $U$ is the policy associated with the option, representing the action selection strategy for the agent while executing the option. It maps states to actions or sub-actions.
    \item $B$ is the termination condition, which determines when the option terminates and the control returns to the agent's default policy. It can be based on reaching a specific state, satisfying a certain criterion, or the passage of a certain amount of time.
\end{itemize}

Options provide a way to hierarchically decompose complex tasks into a sequence of subtasks, allowing the agent to efficiently explore and exploit the MDP's state-action space. By using options, an agent can learn and execute more sophisticated and goal-oriented behaviors, leading to improved performance and adaptability in complex environments.

\subsection{Dec-POSMDP: \textbf{Dec}entralized \textbf{P}artially \textbf{O}bservable \textbf{S}emi-\textbf{M}arkov \textbf{D}ecision \textbf{P}rocess}

 A Dec-POSMDP is defined by a set of elements $(\mathcal{N}, \mathcal{S}, \mathcal{A}, \mathcal{O}, P, R, Z, \mc U, B, \gamma)$. The common components of the decentralized partially observable Markov decision process (Dec-POMDP) are kept intact, while two settings of option are added, including $\mc U, B$. In addition, $\mc A$ is the atomic action space instead. $\mc {U} = \mathcal{U}^1 \times \mathcal{U}^2 \times \dots \times \mathcal{U}^N$ is the joint macro-action space. A macro action $u^{i} \in \mc U^i$ of an agent $i$ is a high-level policy that can generate a sequence of atomic actions $a^i_{t} \sim u^{i}\left(H_{t}^{i}\right)$ for any $t$ when $u^{i}$ is activated, where $H_{t}^{i}$ is the individual action-observation history till $t$. $B$ denotes the stop condition of macro actions and $B^{i}(u^{i})$ is represented as a set of action-observation histories of an agent $i$. If $H_{t}^{i} \in B^{i}(u^{i})$ holds, $u^{i}$ terminates and the agent generates a new macro action. $R^{\tau}$ is the macro joint reward function: $R^{\tau}(s, u)=\mathbb{E}\left[\sum_{t=0}^{\tau_{\text {end }}} \gamma^{t} R\left(s_{t}, a_{t}\right) \mid a_{t} \sim u\left(H_{t}\right)\right]$ where $\tau_{\text {end }}=\min _{t}\left\{t: H_{t}^{i} \in B^{i}\left(u^{i}\right)\right\}$.

The solution of a Dec-POSMDP is a joint high-level decentralized policy $\Theta=\left(\Theta^{1}, \cdots, \Theta^{N}\right)$ where each $\Theta^{i}$ produces a macro action $\Theta^{i}\left(H_{t}^{i}\right) \in \mc U^{i}$ given individual action-observation history $H_{t}^{i}$. At the beginning of an episode, an initial macro action is computed as $u_{t_{0}}^{i}=\Theta^{i}\left(H_{t_{0}}^{i}\right)$. At action-making step $k>0$, the agent generates a new macro action $u_{t_{k}}^{i}=\Theta^{i}\left(H_{t_{k}}^{i}\right)$ if the stop condition is met, i.e. $H_{t_{k}}^{i} \in B^{i}\left(u_{t_{k-1}}^{i}\right)$. Otherwise, the agent continues to use the previous MA: $u_{t_{k}}^{i}=u_{t_{k-1}}^{i}$. In the time range $\left[t_{k}, t_{k+1}\right)$, the agent interacts with the environment with atomic actions sampled from MA: $a_{t}^{i} \sim u^{i}\left(H_{t}^{i}\right)$. Finally, the goal of Dec-POSMDP is to maximize the accumulative discounted reward: $\mathbb{E}\left[\sum_{k=0}^{\infty} \gamma^{t_{k}} \bar{R}^{\tau}\left(\bar{s}_{t_{k}}, \bar{u}_{t_{k}}\right) \mid \bar{\Theta}, \bar{s}_{0}\right]$ where $t_{0}=0$ and $t_{k}=\min _{t}\left\{t>t_{k-1}: H_{t}^{i} \in B^{i}\left(u_{t_{k-1}}^{i}\right)\right\}$ for $k \geq 1$. A more detailed definition can be found in [31].

\section{Convolutional neural network}

Convolutional Neural Networks (CNNs) are a class of deep learning models specifically tailored for processing image-like data. Over the past decade, CNNs have established themselves as the dominant approach in various sub-fields of computer vision. They have demonstrated their efficacy in several key areas, including (i) object detection tasks as demonstrated by Redmon et al. in \cite{redmon2016you}, (ii) image segmentation tasks as showcased by Ronneberger et al. in \cite{ronneberger2015u}, (iii) image classification tasks as exemplified by Huang et al. in \cite{huang2017densely}, and so forth. These notable contributions highlight the versatility and effectiveness of CNNs in tackling diverse computer vision challenges.

\subsection{Architecture}

Like other deep learning models, Convolutional Neural Networks (CNNs) consist of multiple stacked layers. Over time, various kinds of layers have been added to CNNs to enhance their performance and cater to specific tasks. However, the principal layer that empowers CNNs to capture local spatial patterns and hierarchies of features is the convolution layer. Additionally, other popular supportive layers include activation layers, pooling layers, normalization layers, and fully connected layers. Here's a brief overview of the forward pass of these layers:

\begin{itemize}
    \item Convolution layer: In this layer, a set of learnable filters (also known as kernels) convolve over the input data, extracting local features and generating feature maps through element-wise multiplications and summations. This process allows the network to capture spatial patterns in the data. For example, consider an input volume with dimensions 6x6x3 (height, width, channels) and two convolutional filters with sizes of 3x3x3. Each kernel is initialized with learnable weights. Each filter slides across the input, performing element-wise multiplication with the corresponding input values in its receptive field (3x3x3 region), described in Fig. \ref{fig:conv}. The results are summed to produce a single value, creating feature maps. With no padding and a stride of 1, each filter generates an output feature map of size 4x4. The final output shape is 4x4x2.

    \begin{figure}[H]
    
        \centering   \includegraphics[width=0.99\textwidth]{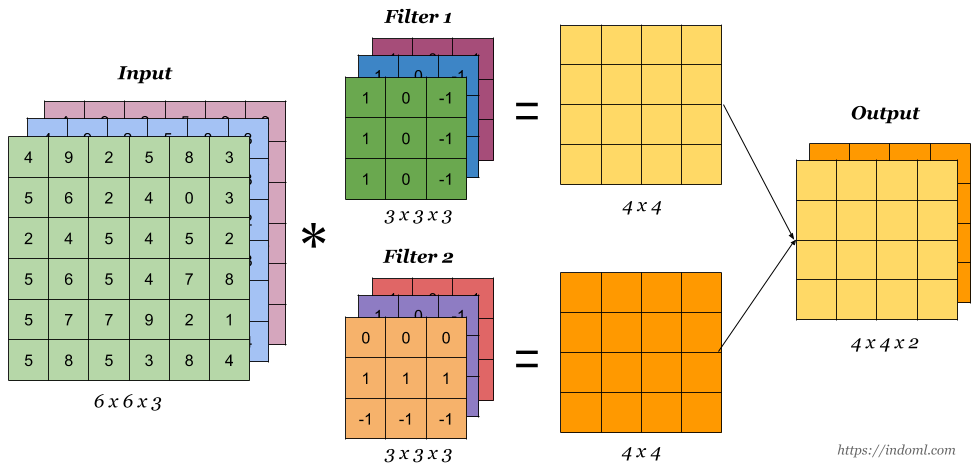}
        \caption[A convolution layer with two kernels with the shape of 3x3x3]{A convolution layer with two kernels with the shape of 3x3x3 (Source: Internet)}
        \label{fig:conv}
    \end{figure}
    
    \item Activation Layer: After the convolution operation, an activation function is applied element-wise to the feature maps. Common activation functions include ReLU (Rectified Linear Unit) and Tanh (Hyperbolic Tangent). The ReLU activation function is defined as $f(x) = \max(0, x)$, where $x$ represents the input value. The Tanh activation function is defined as $f(x) = \frac{{e^x - e^{-x}}}{{e^x + e^{-x}}}$, where $x$ represents the input value. These activation functions introduce non-linearity, enabling the network to model complex relationships and make the network more expressive.
    
    \item Pooling layer: This layer aims to downsample the spatial dimensions of the feature maps while retaining important information. The feature map is divided into non-overlapping regions (windows), and the maximum or minimum, or average value within each window is selected as the representative value for that region. Let's take an example of max pooling with the stride of 2 in Fig. \ref{fig:pooling}. This downsampling reduces the spatial size of the feature map by a factor of 2. The process of max pooling helps reduce computation and control overfitting by retaining the most prominent features while discarding less relevant information.

    \begin{figure}
        \centering
    \includegraphics[width=0.8\textwidth]{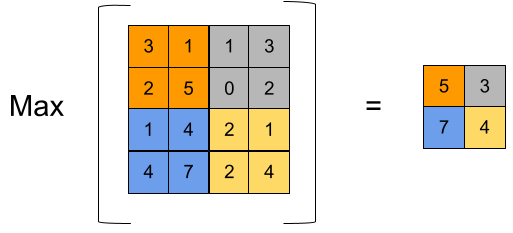}
        \caption[Maxing pooling layer with the stride of 2]{Maxing pooling layer with the stride of 2 (Source: Internet)}
        \label{fig:pooling}
    \end{figure}
    
    \item Normalization layer: Normalization layers, such as Batch Normalization, are often inserted after convolution or fully connected layers. They standardize the activations of a layer, which helps to stabilize and accelerate the training process by reducing internal covariate shifts.
    \item Fully Connected layer: This layer connects all the neurons from the previous layer to the neurons in the subsequent layer, allowing the network to learn complex combinations of features. Each neuron in the fully connected layer receives inputs from all the neurons in the previous layer and applies a weight to each input before passing it through an activation function.
\end{itemize}

The learning process involves computing gradients using backpropagation and updating the weights and biases of the CNN using optimization algorithms such as stochastic gradient descent (SGD) or Adam. 

\subsection{U-Net}
U-Net first introduced in \cite{ronneberger2015u} is a CNN architecture particularly solving the image segmentation problems which necessitate mapping from the original image to a mask with the same height and width. Fig. \ref{fig:segmentation_example} represents an input and its respective mask for low-grade glioma segmentation.

\begin{figure}[H]
    \begin{subfigure}{.5\textwidth}
  \centering
  \includegraphics[width=.8\linewidth]{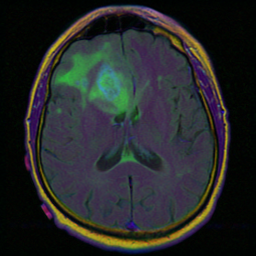}
  \caption{Input}
  \label{fig:sfig1}
    \end{subfigure}%
    \begin{subfigure}{.5\textwidth}
  \centering
  \includegraphics[width=.8\linewidth]{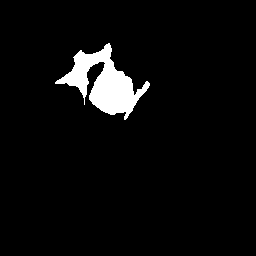}
  \caption{Mask}
  \label{fig:sfig2}
    \end{subfigure}
\caption[An input and its respective mask for low-grade glioma segmentation]{An input and its respective mask for low-grade glioma segmentation (Source: Internet)}
\label{fig:segmentation_example}
\end{figure}

The U-Net architecture consists of an encoding path and a decoding path, which are symmetrically connected. The encoding path captures the context and extracts high-level features from the input image, while the decoding path enables precise localization by upsampling the feature maps.

The encoding path of U-Net is composed of multiple convolutional and pooling layers, typically arranged in a contracting manner. Each convolutional layer applies a set of learnable filters to the input feature maps, followed by an activation function (e.g., ReLU) to introduce non-linearity. Pooling layers, commonly using max pooling, downsample the feature maps, reducing their spatial dimensions while retaining important information.

The decoding path of U-Net is symmetric to the encoding path. It consists of upsampling and concatenation operations, followed by convolutional layers. The upsampling operation increases the spatial resolution of the feature maps, while the concatenation operation combines the feature maps from the encoding path with the upsampled feature maps. This allows the network to make precise localizations by integrating both high-level and low-level features. 

The U-Net's last layer is a pixel-wise classification layer that produces a mask with the same size as the input image. This mask represents the segmentation map, where each pixel corresponds to a specific class or object. In certain binary classification cases, the final layer can be omitted, and the decoder's output can be used as the probability map. An illustration of the conventional U-Net architecture is shown in Figure~\ref{fig:unet_architecture}.

\begin{figure}
    \centering
    \includegraphics[width=\textwidth]{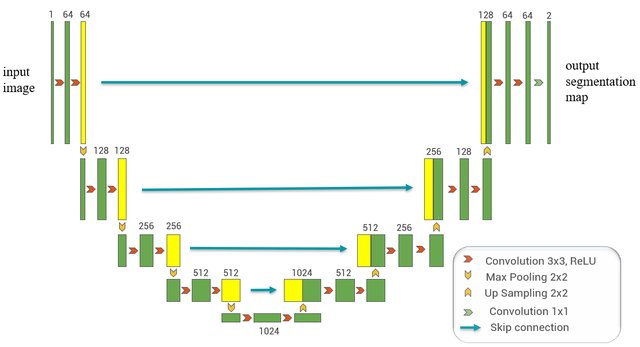}
    \caption[An example of the Unet architecture]{An example of the Unet architecture in the paper \cite{guo2020cloud}}
    \label{fig:unet_architecture}
\end{figure}

\newpage
\chapter{A GENERAL DEC-POSMDP FORMULATION FOR NLM-CTC}
\label{chap:mdp}
\label{chap:network-model}
\section{Network model}

The thesis employs a time-continuous system, where $t$ represents the elapsed time since the start of the network at $t = 0$. The Wireless Rechargeable Sensor Network (WRSN) in the thesis is composed of four fundamental components, including targets, a base station, sensors, and mobile chargers. An example of a WRSN is shown in Figure~\ref{fig:model} while the overall notations used in the thesis are presented in Table~\ref{tbl: tbl_notations}. A detailed description of the components is provided below.
\begin{figure}[h]
    \centering
    \includegraphics[width=\textwidth]{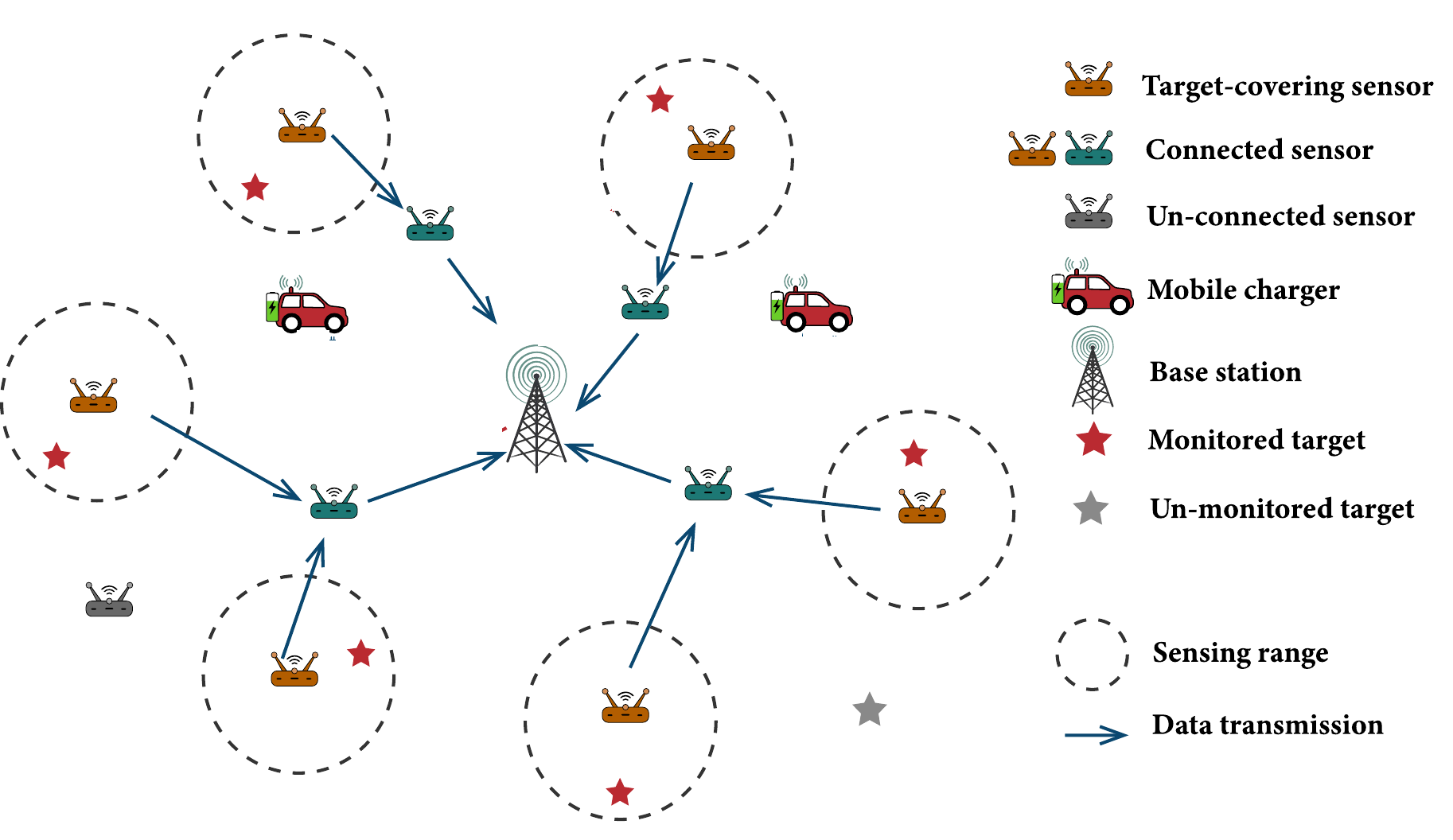}
    \caption{The network model}
    \label{fig:model}
\end{figure}
\begin{itemize}

    \item $N$ sensors indicated as $\mc S = \{S_1, S_2, \dots, S_N\}$ are scattered across the area of interest. These sensors are responsible for monitoring targets and forwarding information to the base station. The thesis considers a WRSN with homogeneous sensors, meaning that all sensors have similar physical specifications, which are listed in Table~\ref{tbl: tbl_notations}. Each sensor is equipped with a battery that has a maximum capacity of $e^{\text{max}}$ and a threshold of $e^{\text{th}}$. If the energy of a sensor decreases below $e^{\text{th}}$, the sensor ceases operating, and it cannot be reactivated even if it is charged by a mobile charger. Additionally, a sensor's energy cannot exceed $e^{\text{max}}$ while receiving energy from MCs. A sensor is responsible for monitoring all targets within a circle centered at its location, with a radius of $r_s$. A communication link is established between two sensors or between a sensor and the base station if their distance is at most $r_c$. In this work, we use the energy consumption model \cite{heinzelman2002application}, and the greedy geography routing protocol  \cite{karp2000gpsr} to simulate the network deployment process. Specifically, each sensor node continuously consumes energy for three essential tasks: environmental sensing, data receiving, and data transmitting. However, the sensing task's energy expenses are negligible compared to transmitting and receiving energy. Let $\Upsilon_r$ and $\Upsilon_t$ be the sensor node's energy consumed to receive and transmit $b$ data bits, respectively. They are defined as follows:
    \begin{equation}\label{ereceive}
\Upsilon_{r} = b\text{}\varepsilon_{\text{elec}}\text{          }(J/b)
        \end{equation}
    \begin{equation}\label{etransfer}
\Upsilon_{t} = 
  \begin{cases} 
   b \varepsilon_{\text{elec}} + b\varepsilon_{\text{fs}} \times d_{S_r, S_t}^{2} & \text{if  }  d_{S_r, S_t}<d_{0} \\
   b\varepsilon_{\text{elec}} + b\varepsilon_{\text{mp}} \times d_{S_r, S_t}^{4} & \text{if  }  d_{S_r, S_t} \geq d_{0},
  \end{cases}
\end{equation}
    where $\varepsilon_{\text{elec}}$ is a constant that represents the energy consumed by the transmitter or the receiver circuits for data per bit; $\varepsilon_{\text{fs}}$ and $\varepsilon_{\text{mp}}$ indicate amplifier energy in the free space, and multi-path communication, respectively; $d_{S_r, S_t}$ denotes the Euclidean distance between the transmitter ($S_t$) and receiver ($S_r$) while $d_{0}$ is the maximum threshold distance. We set our model based on \cite{heinzelman2002application} as follows: $\varepsilon_{\text{elec}} = 50 (nJ)$, $\varepsilon_{\text{fs}} = 10(pJ)$, and $\varepsilon_{\text{mp}}=1.3\times 10^{-3}(pJ)$. 
    The greedy geographic routing protocol is used in the model, where each sensor node selects the next hop as the nearest neighboring node to the BS. The protocol assumed that every node always knows its location and its one-hop neighbors. In order to find the next node, the current node will look up its neighbor table and determine the node with the smallest distance to the BS. Then, the BS will calculate each sensor node's average energy consumption rate based on their energy consumption and an estimation method following real-time introduced in \cite{zhu2018adaptive}.
    
    \item $L$ targets denoted as $\mc T = \{T_1, T_2, \dots, T_L\}$ necessitate surveillance. Each target $T_i$ is monitored by a set of sensors, indicated as $\mc {MS}_i$. These sensors collect information about the target $T_i$, which is then transmitted through intermediate sensors to the base station. It is important to note that failure to monitor the target $T_i$ can occur if either all sensors in the set $\mc {MS}_i$ run out of energy or there is no communication link between the surviving sensors in $\mc {MS}_i$ and the base station.
    
    \item A base station denoted as $BS$, resides at the center of the network. The base station serves as an information center which is the ending point of all data transmission lines. The base station maintains a special array, denoted as $\mc {MNT}_t = \{MNT_{t, 1}, MNT_{t, 2}, \dots, MNT_{t, L}\}$. The element $MNT_{t, i}$ is an integer variable representing the status of the target $T_i$ at time $t$. The values of $MNT_{t, i}$ are 0 and 1. $MNT_{t, i} = 0$ means the disconnection of the target $T_i$ to the base station and vice versa. The thesis adopts a strict setting of targets' surveillance, hence, as long as any target disconnect from the base station, the network is dead. The variable $F_t$ represents the elapsed time from the deployment of the network till the death of the network. Our objective in this thesis is maximizing the elapsed time from the deployment of the network till its dead is denoted as the network lifetime that is equivalent to $F_0$. Mathematically, $F_0$ can be calculated as Formula~\ref{eq:lifetime}, while $F_t = F_0 - t$.
    \begin{equation}
        \label{eq:lifetime}
        F_0 = \min_{t} \quad t \quad \text{s.t.} \quad t > 0, \quad \prod_{i=0}^L MNT_{t, i} = 0.
    \end{equation}
    However, it is intractable to calculate $F_0$ without a simulation or a real-world operation of the network due to the uncertainty during the network's operation and the ambiguous form of $MNT_{t, i}$. Besides, the base station is also the charging station for mobile chargers (MCs). In this thesis, the base station employs the battery replacement mechanism. Hence, as long as an MC arrives at the BS, it replenishes fully its battery without any waiting time.

    \item $M$ mobile chargers denoted as $\mc {MC} = \{MC_1, MC_2, \dots, MC_M\}$. All mobile chargers are identical concerning physical specifications which are presented in Table~\ref{tbl: tbl_notations}. The movement of each $MC_i$ around the network is in the ideal condition in which obstructions and terrain characteristics are omitted. This work extends the multi-node charging energy model presented in \cite{he2013demand}. Assuming that the distance between an MC and a sensor is $d$, the energy per second the sensor receives from that MC follows the formulas~\eqref{eq:friis}. It is worth noting that an MC only charges sensors within the distance of $r_{\text{charge}}$.
    \begin{equation}
    \label{eq:friis}
    P^{\text{charge}}(d) = 
  \begin{cases} 
   0 & \text{if  }  d > r_{\text{charge}}  \\
   \frac{\alpha}{(d + \beta)^2} & \text{if  }  d \leq r_{\text{charge}},
  \end{cases}
\end{equation}
    where $\alpha$ and $\beta$ are constants depending on the hardware of the charger and energy-received device, respectively. 
\end{itemize}

\begin{table}[]
\caption{Description of notations}
\label{tbl: tbl_notations}
\begin{tabular}{|l|l|l|}
\hline
Subjects & \textbf{Notation} & \textbf{Description} \\ \hline
\multirow{3}{*}{Network} & $ d_{A, B} $ & \begin{tabular}[c]{@{}l@{}}The Euclidean distance between \\ two locations $A$ and $B$\end{tabular} \\ \cline{2-3} 
 & $t$ & \begin{tabular}[c]{@{}l@{}}The elapsed time since \\ the deployment of the network\end{tabular} \\ \cline{2-3} 
 & $F_t$ & \begin{tabular}[c]{@{}l@{}}The remaining time from the time t\\ till the dead of the network\end{tabular} \\ \hline
\multirow{2}{*}{Target} & $\mc T = \{T_1, T_2, \dots, T_L\}$ & The set of all targets \\ \cline{2-3} 
 & $\mc {MS}_i$ & \begin{tabular}[c]{@{}l@{}}The set of all sensors\\  monitoring the target $T_i$\end{tabular} \\ \hline
\multirow{2}{*}{\begin{tabular}[c]{@{}l@{}}Base\\ station\end{tabular}} & BS & The base station \\ \cline{2-3} 
 & $\mc {MNT}_t = \{MNT_{t, 1}, \dots, MNT_{t, L}\}$ & \begin{tabular}[c]{@{}l@{}}The special array representing\\  the status of targets at time $t$\end{tabular} \\ \hline
\multirow{8}{*}{Sensors} & $\mc S = \{S_1, S_2,\dots ,S_N\}$ & The set of all sensors \\ \cline{2-3} 
 & $r_c$ & The sensor's communication radius \\ \cline{2-3} 
 & $r_s$ & The sensor's sensing radius \\ \cline{2-3} 
 & $e^{\text{th}}, e^{\text{max}}$ & The sensor's threshold and capacity \\ \cline{2-3} 
 & $\mc {MT}_j$ & \begin{tabular}[c]{@{}l@{}}The set of all targets \\ monitored by the sensor $S_j$\end{tabular} \\ \cline{2-3} 
 & $L^{\text{sensor}}_j$ & The location of the sensor $S_j$ \\ \cline{2-3} 
 & $e_{t, j}$ & \begin{tabular}[c]{@{}l@{}}The remaining energy of \\  sensor $S_j$ at time $t$\end{tabular} \\ \cline{2-3} 
 & $p_{t, j}$ & \begin{tabular}[c]{@{}l@{}}The 100-second average \\ consumption rate of\\  sensor $S_j$ at time $t$\end{tabular} \\ \hline
\multirow{12}{*}{MCs} & $\mc {MC} = \{MC_1, MC_2, \dots, MC_M\}$ & The set of all mobile chargers \\ \cline{2-3} 
 & $\mc {MC}^{\text{charge}}_t$ & \begin{tabular}[c]{@{}l@{}}The set of charging mobile chargers\\ at time $t$\end{tabular} \\ \cline{2-3} 
 & $\mc {MC}^{\text{move}}_t$ & \begin{tabular}[c]{@{}l@{}}The set of moving mobile chargers\\ at time $t$\end{tabular} \\ \cline{2-3} 
 & $t^{\text{charge}}_{t, k}$ & \begin{tabular}[c]{@{}l@{}}The remaining charging time of\\ $MC_k$ at its location at time $t$\end{tabular} \\ \cline{2-3} 
 & $L^{\text{target}}_{t, k}$ & \begin{tabular}[c]{@{}l@{}}The destination of  $MC_k$ \\ at time $t$\end{tabular} \\ \cline{2-3} 
 & $E^{\text{max}}$ & The energy capacity of each MC \\ \cline{2-3} 
 & $r^{\text{charge}}$ & The charging range of each MC \\ \cline{2-3} 
 & $P_M$ & \begin{tabular}[c]{@{}l@{}}The per-second consumption rate \\ of an MC when traveling\end{tabular} \\ \cline{2-3} 
 & $V$ & The velocity of an MC \\ \cline{2-3} 
 & $\alpha$, $\beta$ & Parameters of charging model \\ \cline{2-3} 
 & $L^{MC}_{t, k}$ & The location of $MC_k$ at time $t$ \\ \cline{2-3} 
 & $E_{t, k}$ & \begin{tabular}[c]{@{}l@{}}The current energy of $MC_k$ \\ at time $t$\end{tabular} \\ \hline
\end{tabular}
\end{table}
\section{Dec-POSMDP formulation}
\label{rl-model}
In this thesis, the charging problem for maximizing the lifetime of WRSNs is modeled as a Decentralized partially observable Semi-Markov decision process which is defined by a set of elements $(M, \mathcal{S}, \mathcal{A}, \mathcal{O}, P, R, Z, \mc U, B, \gamma)$. Of all components, $P, Z$ is provided by the simulation or real-world conditions. The element $\gamma$ is only a problem's definition if the final objective coincides with the cumulative reward. However, in this problem, the reward function only encourages the increase of the final objective function, so $\gamma$ is treated as a parameter of the formulation. A detailed description of other elements is presented below:

\begin{itemize}
    \item $M$ is the number of agents that is equivalent to the number of mobile chargers.
    
    \item \textit{State space} ($\mc S$): At the time $t$, the state is the information of the whole network including all targets, all sensors, all mobile chargers, data transmission process, uncertain events affecting the operation of the network. The high-level generalization of the state leads to the intractability of capturing this ambiguous information.

    \item \textit{Joint atomic actions space} ($\mc A$): $\mc A = \mathcal{A}^1 \times \mathcal{A}^2 \times \dots \times \mathcal{A}^M$ is the joint action space, where $\mathcal{A}^k$ is the set of atomic actions available to mobile charger $k$. All mobile chargers are identical, therefore $\mathcal{A}^1 = \mathcal{A}^2 = \dots = \mathcal{A}^M$. $\mathcal{A}^k$ consists of two types of atomic actions including charging action and moving action. While the charging action means that the MC stays still and charges sensors around, the moving action incorporates a parameter $\theta$ which regulates the moving direction of the MC as $[\sin{\theta}, \cos{\theta}]$.

    \item \textit{Joint macro actions space} ($\mc U$): $\mc U = \mathcal{U}^1 \times \mathcal{U}^2 \times \dots \times \mathcal{U}^M$ is the joint action space, where $\mathcal{U}^k$ is the set of macro actions available to mobile charger $k$. With the same setting of atomic actions, all mobile chargers also have the same macro action space. The thesis adopts the macro actions space of $\R ^ 2 \times \R^+$. A macro action $u^i = [a, b, c]$, where $[a, b]$ denotes the location that the MC $i$ visits and $c$ is the charging time the MC spends at $[a, b]$. Assuming that at the time $t$, the MC $i$ starts committing the macro action $u^i$, and so on, the MC will follow a deterministic policy until finishing the task. Specifically, if committing this macro action makes the MC run out of energy, the MC firstly returns to the base station to replenish energy. Then the MC commits the macro action as normal. The MC would practice the moving action toward the charging location until reaching this point and then charge sensors around. Due to this deterministic definition of the macro action, it is nonessential to indicate explicitly the mathematical form of $B$ because the mobile charger would move to $[a, b]$ and then charge surrounding sensors in $c$ seconds, hence as soon as the MC finishes charging, the macro action cease. 

\item \textit{Joint observations space} ($\mc O$): $\mc O = \mathcal{O}^1 \times \mathcal{O}^2 \times \dots \times \mathcal{O}^M$ is the joint observations space, where $\mathcal{O}^k$ is the set of observation of mobile charger $k$. As mentioned in Section~\ref{chap:network-model}, each MC has full access to the information garnered at the BS. However, the observation space of each MC is still different because each MC is capable of self-recognizing its policy. A common approach is to flatten the information about sensors, mobile chargers into one vector. However, this approach suffers from scaling problems because the observation changes along with the deviation of the number of sensors and mobile chargers. The thesis proposes a novel approach to designing the observation for MC so that RL algorithms trained in this designed environment can adapt to the deviation of network scenarios. At the time $t$, the observation of the $MC_k$ composes of four functions whose input is a 2D vector, each representing an aspect of the network. The base function to build these functions is the 2D Gaussian kernel function, represented in Equation~\eqref{eq:Kx}, because it is a common $[0, 1]$ bounded function able to represent the shrink of function values along the square of the distance. The illustration of this function is in Figure~\ref{fig: gaussian-kernel}.
    \begin{equation}
        \mc K(x, x^{\prime}) = \exp\big( -\frac{1}{2 h_{\mc X}^2} \| x - x^{\prime} \|_2^2 \big),
        \label{eq:Kx}
    \end{equation}   
    where $x, x^{\prime} \in \R^2$, $h_{\mc X} > 0$ is the kernel width which a formulation parameter. At the time $t$, the observation of $MC_k$ is four functions as follows:
    \begin{figure}
        \centering
        \includegraphics[width=0.75\textwidth]{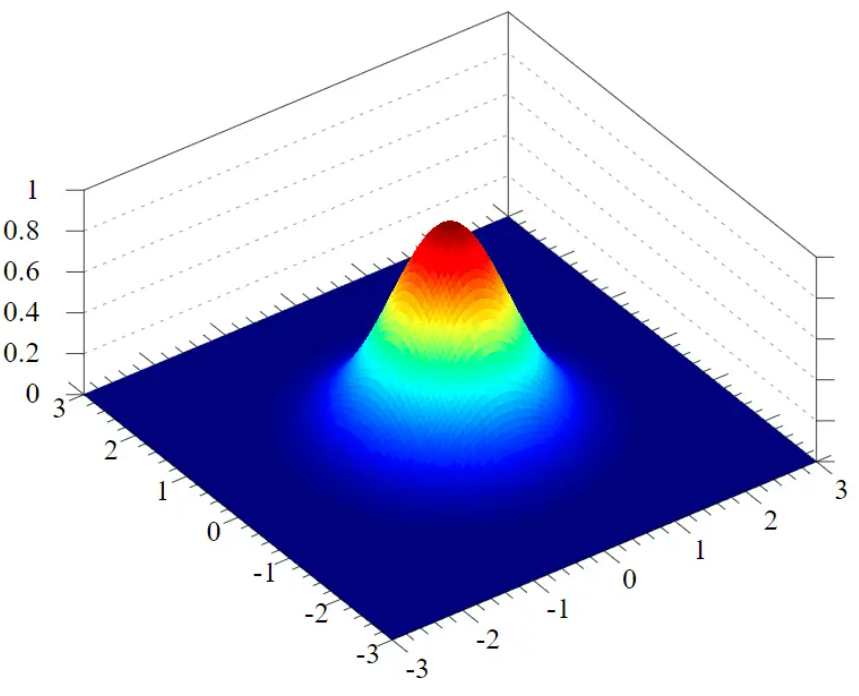}
        \caption{The visual representation of the 2D Gaussian kernel function }
        \label{fig: gaussian-kernel}
    \end{figure}
    \begin{itemize}
        \item The first function given in Equation \eqref{func:f1} captures the recent energy consumption rate and residual energy of each sensor. The function consists of multiple subfunctions, each corresponding to a sensor. A subfunction of the sensor $S_j$ assigns a higher value to the location of the sensor $L^{\text{sensor}}_j$, which gradually decreases as we move away from it. The subfunction of a sensor is inversely proportional to its estimated lifetime, represented by $\frac{{e_{t,j} - e^{\text{th}}}}{p_{t,j}}$. The lifetime is normalized in the function to aid the learning process. At time $t$, we observe that higher values are found near critical sensors that consume more energy and have lower energy levels compared to others. Suitable charging algorithms are likely to encourage mobile chargers to visit these locations because they are near sensors responsible for multiple network tasks. This function also guides MCs to avoid areas where MCs can not charge any sensors.
        \begin{equation}
            \label{func:f1}
            F_{1}(x) = \sum_{j = 1}^{N} \frac{p_{t,j}}{\frac{\alpha}{\beta^2}} \times \frac{e^{\text{max}} - e^{\text{th}}}{e_{t,j} - e^{\text{th}}} \times K(x, L^{\text{sensor}}_j), \text{where } x \in \R^2.
        \end{equation}
        Noted that $t$ is a constant because the observation of $MC_k$ at the time $t$ is currently considered.
        
        \item The second function, represented by Equation \eqref{func:f2}, captures the current location and remaining energy of the mobile charger currently making decisions. It assigns a higher value to locations near the MC's current position, encouraging the MC to prioritize visiting neighboring areas to minimize movement costs. Additionally, the function provides information about the MC's energy status, indicating when it is running low on energy and needs to make more strategic decisions.
        \begin{equation}
            \label{func:f2}
            F_{2}(x) = \frac{E_{t, k}}{E^{\text{max}}}  \times K(x, L^{MC}_{t, k}), \text{where } x \in \R^2.
        \end{equation}
        Noted that $t \in \R^+, MC_k \in \mc MC$, and $t, k$ are constants because the observation of $MC_k$ at the time $t$ is currently considered.
        
        \item The third function, shown in Equation~\eqref{func:f3} supports an MC to capture the macro action of other MCs in charging. This function suggests an MC avoids areas severed by too many MCs and is directed to vital sensors without energy supplement from any MCs.  

        \begin{equation}
            \label{func:f3}
            F_{3}(x) = \sum_{MC_{k^{\prime}} \in \mc {MC}^{\text{move}}_t} 
            \frac{t^{\text{charge}}_{t, k^{\prime}}}{\frac{e^{\text{max}} - e^{\text{th}}}{\frac{\alpha}{\beta^2}}} \times K(x, L^{MC}_{t, k^{\prime}}), \text{where } x \in \R^2.
        \end{equation}
        Noted that $t \in \R^+,$ and $t$ is a constant because the observation of $MC_k$ at the time $t$ is currently considered. 
        
        \item The fourth function, represented by Equation \eqref{func:f4}, assists an MC in capturing the macro actions of other MCs in terms of movement. This function provides information to the MC about areas that are likely to be served by other MCs in the near future. By avoiding these areas, the MC can make more efficient decisions and prioritize visiting locations that are not being actively covered by other MCs. However, if other MCs are expected to take a significant amount of time to arrive at a critical location, this function may suggest that nearer MCs should support this area.

        \begin{equation}
            \label{func:f4}
            F_{4}(x) = \sum_{MC_{k^{\prime}} \in \mc {MC}^{\text{move}}_t} 
            \frac{d(L^{MC}_{t, k}, L^{\text{target}}_{t, k})}{d^{\text{avg}}} \times K(x, L^{MC}_{t, k^{\prime}}), \text{where } x \in \R^2.
        \end{equation}
        Noted that $t \in \R^+,$ and $t$ is a constant because the observation of $MC_k$ at the time $t$ is currently considered. The term $d^{\text{avg}}$ is the average distance between sensors in the network.
    \end{itemize}

    Four function captures essential information for each MC to make its own decision. Consider $H_0, H_1$ as the lower bound and the upper bound of the x-axis location of all sensors while $W_0, W_1$ as these of the y-axis. It is only important to consider the region $[W_0, W_1] \times [H_0, H_1]$ instead of $\R^2$ for the value of $x$. However, it is still intractable to represent these functions as input for any models. Hence, each range is divided into $T$ equal segments, and the center point of each segment represents this segment. For example, if dividing the range $[H_0, H_1]$ into $T$ segments, it is possible to represent the range as a set of $T$ discrete points ${h_1, h_2, ..., h_T}$, where each $h_i$ represents the center point of the $i$-th segment, specifically $h_i = (i - 0.5) \times \frac{H_1 - H_0}{T}$. For each function $F_{i}$, a 2D vector of shape $T \times T$ is formed that the element at $j$, $k$ is denoted $RF_{i}[j][k] = F_{i}([w_j, h_k])$. An example of four 2D vectors respective to four functions is illustrated in Figure~\ref{fig:four_images}. These four vectors are stacked together to create the observation of $MC_k$ at time $t$. Each observation has the shape of $4 \times T \times T$. In this formulation, $T$ is a parameter.
    
    Finally, each function is visualized as a heat map, which is cropped to match the border of the network. An example of four heat maps is shown in Figure \ref{fig:four_images}. The resolution of each heat map is set to 100x100, meaning that the observation for each MC at time $t$ has a shape of 4x100x100. This representation allows the MCs to efficiently process and interpret the spatial information of the network.

    \begin{figure}
    \centering

    \begin{subfigure}[b]{0.49\textwidth}
        \includegraphics[width=\textwidth]{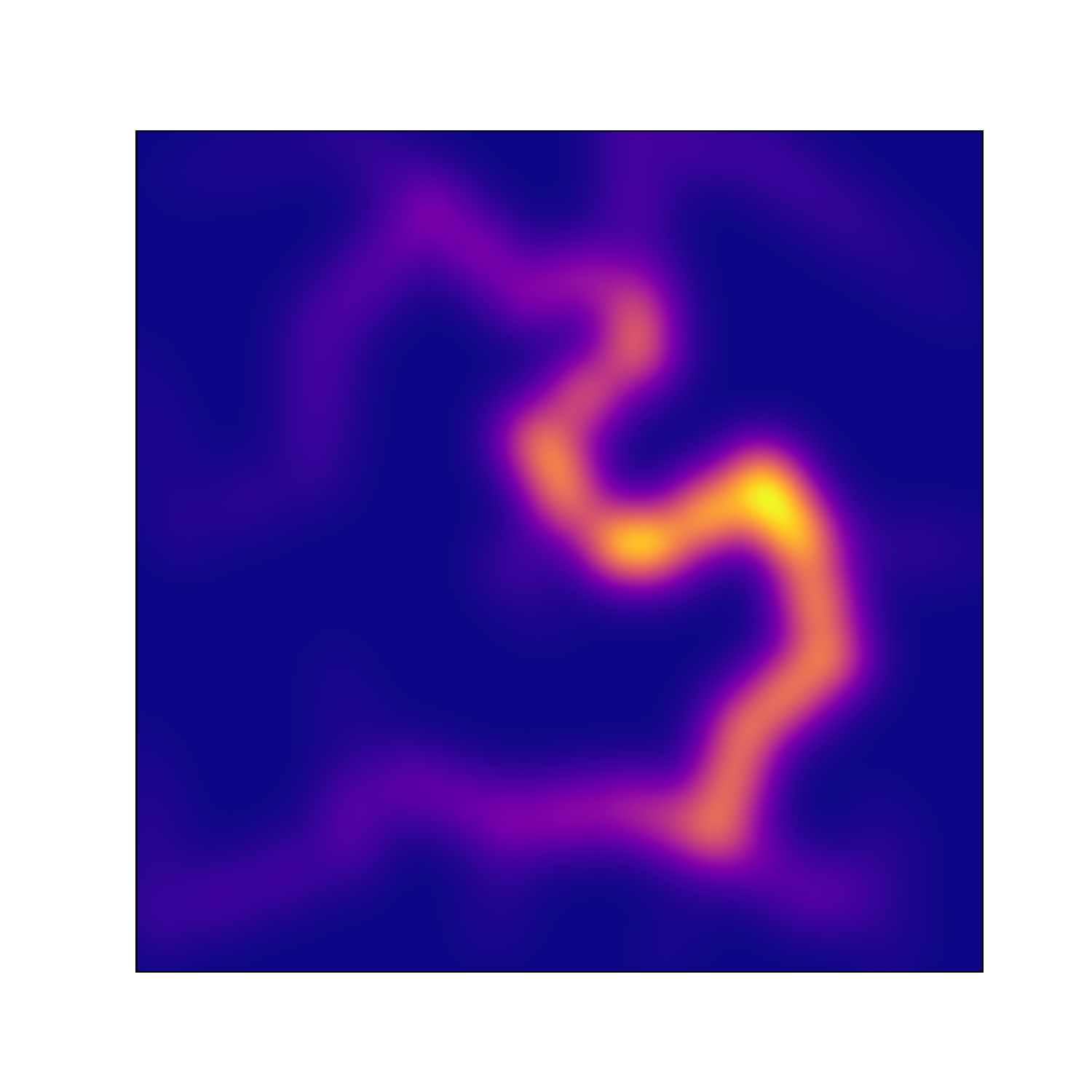}
        \caption{The first layer is clear to see which regions contain critical sensors and the data transmission in the network.}
        \label{fig:layer1}
    \end{subfigure}
    \hfill
    \begin{subfigure}[b]{0.49\textwidth}
        \includegraphics[width=\textwidth]{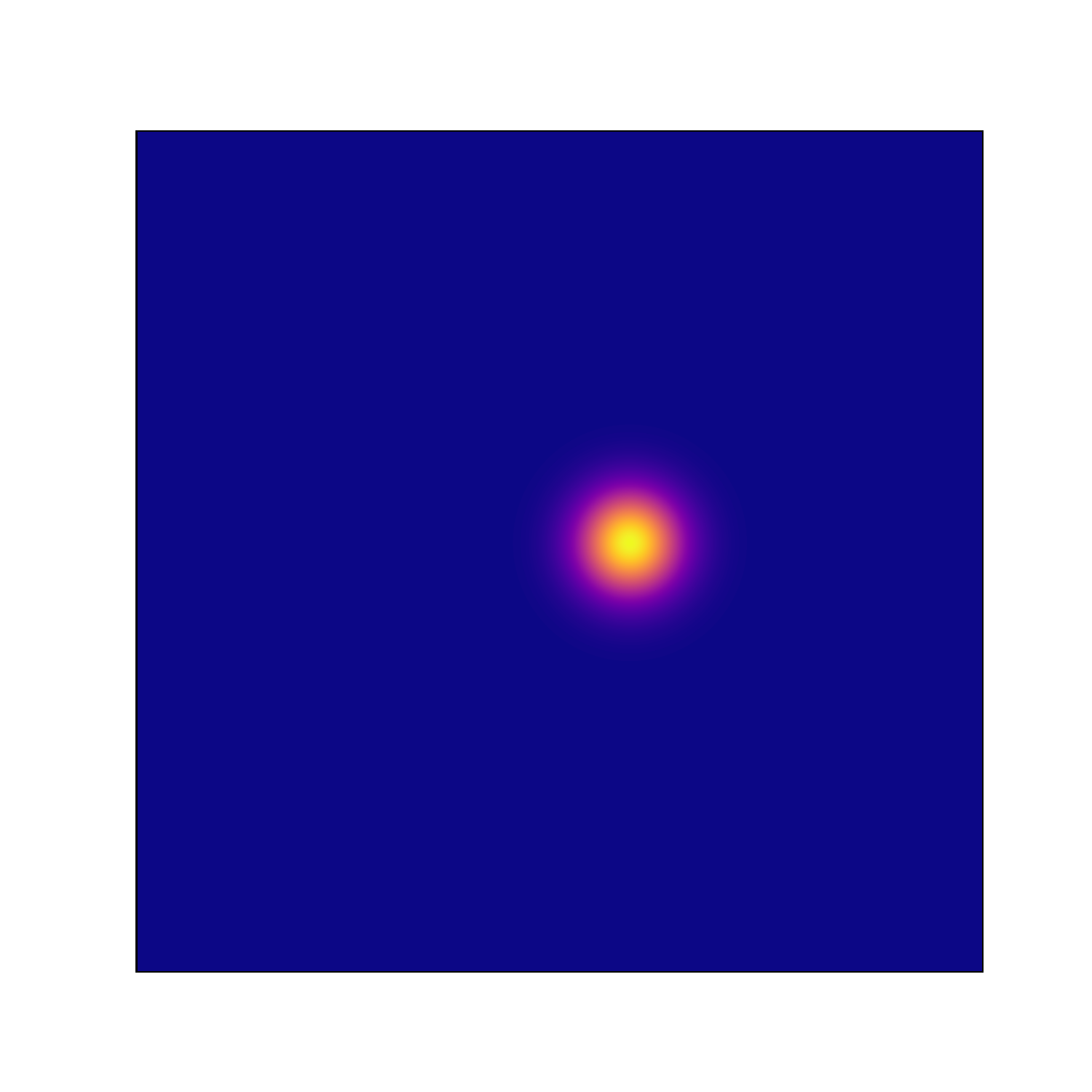}
        \caption{The third map is about mobile chargers currently charging, helping the mobile charger to avoid visiting the same region.}
        \label{fig:layer2}
    \end{subfigure}

    \begin{subfigure}[b]{0.49\textwidth}
        \includegraphics[width=\textwidth]{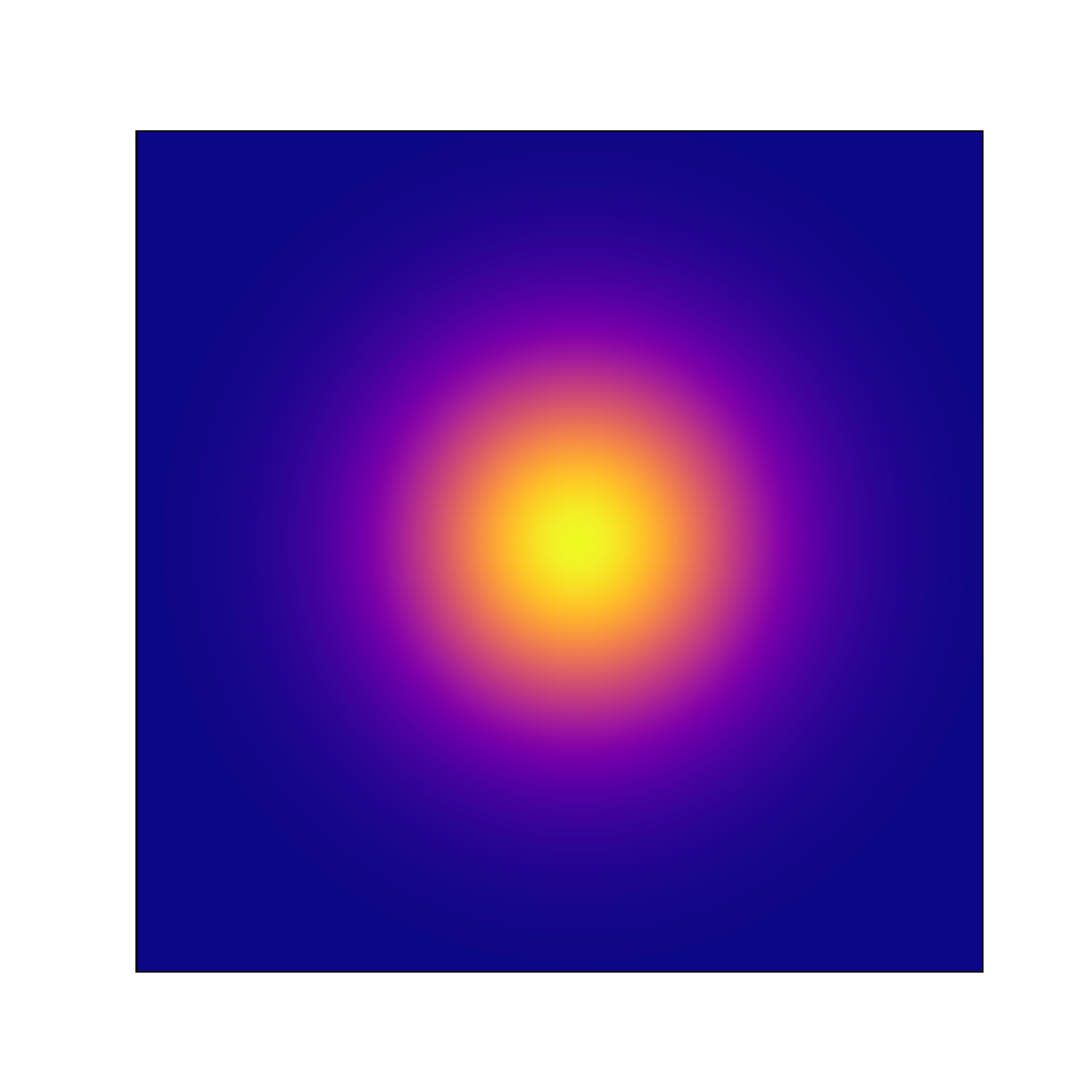}
        \caption{The second layer illustrates the location of the MC, with brighter regions indicating proximity to the MC. This map highlights areas that may require excessive energy expenditure for movement.}
        \label{fig:layer3}
    \end{subfigure}
    \hfill
    \begin{subfigure}[b]{0.49\textwidth}
        \includegraphics[width=\textwidth]{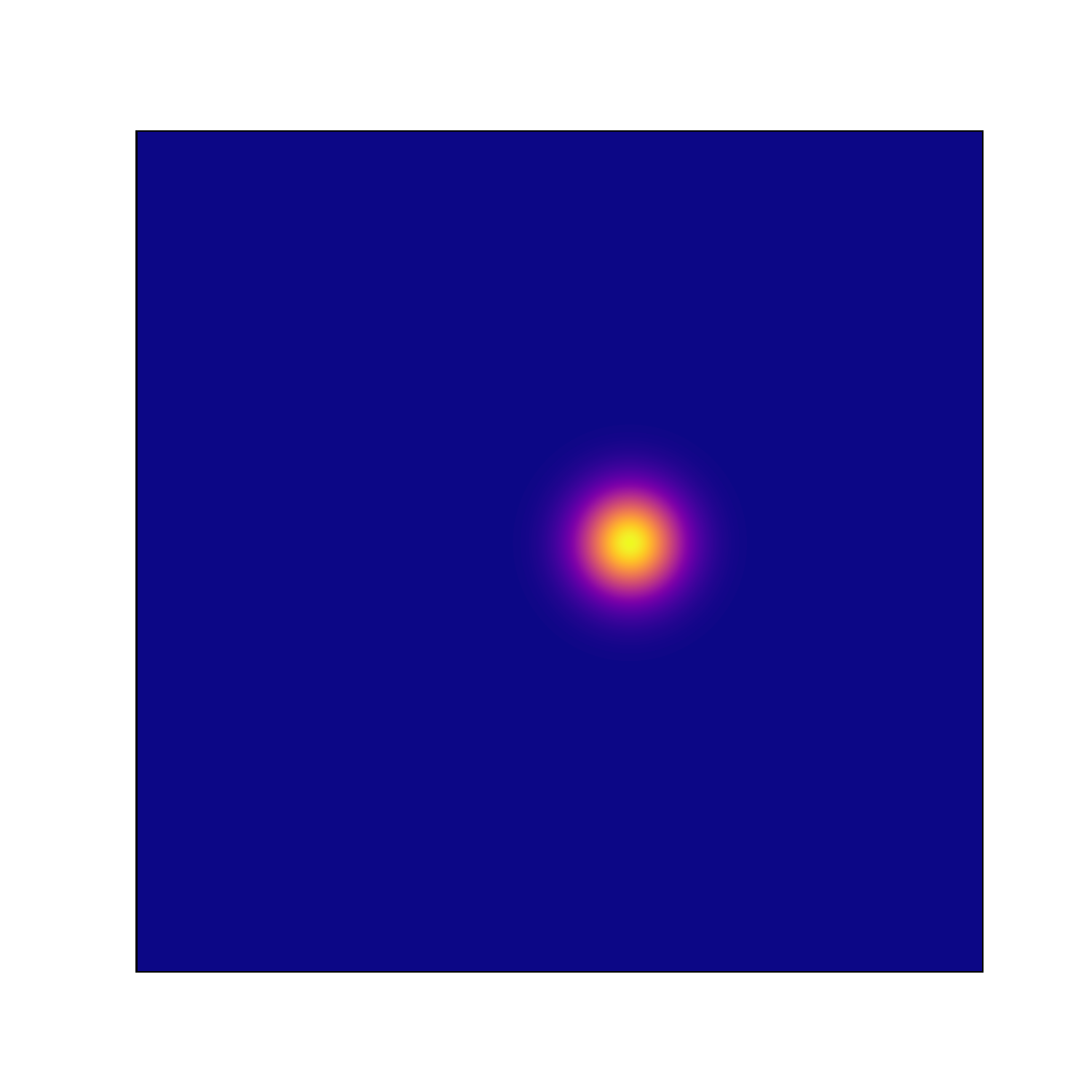}
        \caption{The fourth map indicates the target locations of mobile chargers currently in motion. This map encourages the mobile charger to collaborate and support each other when the moving distance of other MCs is too long.}
        \label{fig:layer4}
    \end{subfigure}

    \caption[An example of an observation]{An observation in a system with three mobile chargers. At this moment, one mobile charger is making a decision, while another is currently charging, and the third is in motion, moving to its destination}
    \label{fig:four_images}
\end{figure}

    \item \textit{Objective} ($Obj$): This is an additional term that is used to define the reward function. The final objective of the thesis is maximizing the lifetime of the network determined by the formula~\eqref{eq:lifetime}. However, this formula is intractable to calculate without waiting until when the network is dead. Therefore, the thesis proposes a method to estimate the remaining lifetime of a network at time $t$. 

    The estimation begins by constructing an undirected graph $\mathcal{G} = (\mathcal{V}, \mathcal{E})$ that represents the network's geometry at time $t$. Here, each node $x_j \in \mathcal{V} = BS \cup \mathcal{S}$ corresponds to either a sensor or the base station. An edge $(x_j, x_{j^{\prime}}) \in \mathcal{E}$ exists if the distance between $x_j$ and $x_{j^{\prime}}$ falls within the communication range of the sensor $r_c$. To each node $x_j$ in $\mathcal{V}$, the method assigns a weight $w(x_j)$, which represents its estimated lifetime. For the base station $x_0$, $w(x_0) = \infty$, while for the sensor $x_j$, $w(x_j) = \frac{e_{t, j} - e^{\text{th}}}{p_{t, j}}$. Each path in $\mathcal{G}$ from the node $x_0$ (base station) to any node $x_j \in \mathcal{V}$ represents a connection line between the base station and sensor $S_j$. The weight of each path is determined by the minimum weight among the nodes present in the path. This weight concept intuitively signifies the lifetime of the corresponding connection line in the network, as a connection line ceases to exist if any node along the path runs out of energy. Therefore, the remaining connection time of sensor $S_j$ to the base station, denoted as $CT(x_0, x_j)$ , is represented by the largest weight of all paths from $x_0$ to $x_j$. To compute $CT(x_0, x_j) \forall x_j \in \mc V$, the thesis proposes Algorithm~\ref{al:connection time estimates} with the idea, that the system will maintain an estimate $d[x_j] \forall x_j \in \mc V$ of the remaining connection time. 

   \begin{algorithm}
        \SetKwInput{Input}{Input}
        \SetKwInput{Output}{Output}
        $\forall x \in \mc V$, $d[x] \leftarrow 0 //$ set initial values
        
        $d[x_0] \leftarrow \infty //$ set initial value of the base station node
        
        $F = \mc V //$ $F$ is the set of nodes $x$ that are not yet to achieve $CT(x_0, x)$
        
        $D \leftarrow \emptyset //$  $D$ is the set of nodes $x$ that have achieved $CT(x_0, x)$

        \While{$F \neq \emptyset$}{
        $x \leftarrow$ element in $F$ with maximum $d[x]$

        \For{$(x, y) \in \mc E$} {
            $d[y] \leftarrow \max \{d[y], \min\{d[x], w(y)\}\}$
            }

        $F \leftarrow F \setminus x$ 
        
        $D \leftarrow D \cup x$
        }
        \caption{Calculating the remaining connection time of sensors}
        \label{al:connection time estimates} 
    \end{algorithm}

    We will prove the correction of Algorithm~\ref{al:connection time estimates} through the two following propositions.

    \begin{proposition}
        \label{prop:lower bound}
        At any iteration, $d[x_j] \leq CT(x_0, x_j).$
    \end{proposition}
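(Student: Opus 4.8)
The plan is to establish, by induction on the number of relaxation (update) steps executed by Algorithm~\ref{al:connection time estimates}, the stronger invariant that at every moment of execution, for each node $x_j \in \mc V$ either $d[x_j] = 0$ or there is a simple path $P$ from $x_0$ to $x_j$ in $\mc G$ with bottleneck $\min_{x \in P} w(x) \geq d[x_j]$. Granting this invariant, the proposition is immediate: since $CT(x_0, x_j)$ is by definition the maximum bottleneck over all $x_0$--$x_j$ paths, any witnessing path $P$ yields $CT(x_0, x_j) \geq \min_{x \in P} w(x) \geq d[x_j]$; and the degenerate case $d[x_j] = 0$ is handled by nonnegativity of every node weight ($w(x_j) = (e_{t,j} - e^{\text{th}})/p_{t,j} \geq 0$ for a live sensor and $w(x_0) = \infty$), which forces $CT(x_0, x_j) \geq 0$.

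For the base case, after initialization $d[x_0] = \infty = w(x_0)$ is witnessed by the trivial one-node path $\{x_0\}$, and all other nodes have $d = 0$, so the invariant holds. For the inductive step I would consider a single update $d[y] \leftarrow \max\{d[y], \min\{d[x], w(y)\}\}$ arising while node $x$ is processed along an edge $(x,y) \in \mc E$. If the maximum keeps the old $d[y]$, the invariant transfers unchanged from the hypothesis. Otherwise $d[y]$ increases to $\min\{d[x], w(y)\}$, which forces $d[x] > 0$ and $w(y) > 0$; by the induction hypothesis there is a path $P_x$ from $x_0$ to $x$ with bottleneck at least $d[x]$, and appending $(x,y)$ and $y$ produces a walk from $x_0$ to $y$ whose bottleneck is at least $\min\{d[x], w(y)\}$, the new value of $d[y]$. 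Extracting a simple path from this walk then furnishes the required witness for $y$.

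The step needing the most care --- and what I expect to be the main obstacle --- is the passage from the concatenated walk back to a simple path, since $P_x$ may already contain $y$ and the naive concatenation need not be simple. I would resolve this by observing that deleting the nodes of any cycle from a walk only removes nodes from consideration, and removing nodes can never decrease the minimum node weight; hence every $x_0$--$y$ walk of bottleneck $v$ contains a simple $x_0$--$y$ path of bottleneck at least $v$, which legitimizes both the inductive step and the final comparison with $CT$. A minor companion fact I would record first is that the $d$-values are monotonically nondecreasing and remain nonnegative throughout (each update is a maximum with the current value, seeded only by $\infty$ and $0$); this underlies the claim that an effective update forces $d[x] > 0$, so that the induction hypothesis indeed supplies a nonempty witnessing path.
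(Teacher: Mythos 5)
Your proof is correct and follows essentially the same route as the paper's: induction on the relaxation steps of Algorithm~\ref{al:connection time estimates}, maintaining the invariant that every positive $d[x_j]$ is witnessed by an actual $x_0$--$x_j$ path whose bottleneck weight is at least $d[x_j]$, whence $d[x_j] \leq CT(x_0, x_j)$ by maximality of $CT$. Your additional care in extracting a simple path from the concatenated walk (and noting that deleting cycle nodes cannot decrease the bottleneck) tightens a detail the paper's proof passes over silently, but it does not change the underlying argument.
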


    \begin{proof}[Proof of Proposition~\ref{prop:lower bound}]
        If $d[x] = 0$, the proposition is true because the cost of a path is always non-negative. If $d[x]>0$ at any point in time, we prove that this $d[x]$ corresponds to a path from $x_0$ to $x$ whose weight is at most the largest weight of all paths from $x_0$ to $x$. Consequently, $d[x] \leq CT(x_0, x)$. We proceed with formal proof by induction:
        
        \begin{itemize}
            \item The base case is when $x=x_0$, where $d[x_0]=CT(x_0, x_0)=+\infty$, and all other $d[x] \forall x \in \mc V \setminus x_0$ are set to $0$. Thus, the claim holds initially.
            
            \item We assume that the claim holds for all vertices up to a certain step in the algorithm. When the algorithm updates $d[y]$ to $max \{d[y], \min\{d[x], w(y)\}\}$ for some vertex $x$, by the induction hypothesis, there exists a path from $x_0$ to $x$ with a weight of $d[x]$, and an edge $(x, y)$. This implies that there is a path from $x_0$ to $y$ with a weight of $\min\{d[x], w(y)\}$. Since we have previously established that $d[x]$ is at at most the largest weight of all paths from $x_0$ to $x$, the induction argument is complete.
        \end{itemize}
    \end{proof}

    \begin{proposition}
        \label{prop: fixed after append}
        $d[x]$ does not change after x is added to $D$.
    \end{proposition}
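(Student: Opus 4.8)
The plan is to show that the values $d[x]$ of the nodes, taken in the order in which they are removed from $F$ and placed into $D$, form a non-increasing sequence. Once this monotonicity is established, the proposition follows immediately: any relaxation that could later alter an already-finalized $d[x]$ must originate from a node whose $d$-value is no larger than $d[x]$, and the relaxation step is a maximum with the current value, so it cannot raise $d[x]$.

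First I would fix the key invariant: \emph{at every iteration, each node $y$ still in $F$ satisfies $d[y] \le d[x^\star]$, where $x^\star$ denotes the node most recently extracted} (the bound being vacuous before the first extraction). I would prove it by induction on the number of completed extractions. When $x^\star$ is selected it is, by the algorithm's selection rule, the element of $F$ with maximum $d$-value, so every remaining $y \in F$ already satisfies $d[y] \le d[x^\star]$. The only updates performed in that iteration are the relaxations $d[y] \leftarrow \max\{d[y], \min\{d[x^\star], w(y)\}\}$ over edges $(x^\star, y)$. Since $\min\{d[x^\star], w(y)\} \le d[x^\star]$, such a relaxation can raise $d[y]$ to at most $d[x^\star]$, so the bound $d[y] \le d[x^\star]$ is preserved throughout. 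Consequently the node $x'$ extracted immediately after $x^\star$ lies in $F$ at the moment of its own extraction and therefore satisfies $d[x'] \le d[x^\star]$; chaining this over successive extractions yields that the sequence of extracted values is non-increasing.

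Next I would finalize the claim. After $x$ is inserted into $D$, its value $d[x]$ can only be touched by a relaxation along an edge $(y, x)$ carried out when some node $y$ is extracted later, each such step replacing $d[x]$ by $\max\{d[x], \min\{d[y], w(x)\}\}$. By the monotonicity just established, the value of $d[y]$ at its extraction is at most the value $d[x]$ had at its own extraction, and since $d$-values never decrease (every relaxation is a maximum with the current value) we have $\min\{d[y], w(x)\} \le d[y] \le d[x]$. Hence every such relaxation leaves $d[x]$ unchanged, which is exactly the statement of the proposition.

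The main obstacle is isolating and justifying the frontier invariant precisely — in particular, recognizing that the relaxed value $\min\{d[x^\star], w(y)\}$ is bounded above by $d[x^\star]$, which is what prevents any frontier estimate from ever exceeding the last finalized value. Everything downstream (the non-increasing order of extractions and the final non-modification argument) is then a routine induction on extraction order.
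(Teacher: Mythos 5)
Your proof is correct and follows essentially the same route as the paper's: both hinge on showing that frontier estimates in $F$ never exceed finalized values in $D$ (you phrase this as the extracted values forming a non-increasing sequence, the paper as the invariant $d[x] \geq d[y]$ for all $y \in F$ holding at all times after $x$ enters $D$), and then both observe that since the relaxation is a maximum with $\min\{d[y], w(x)\} \leq d[y] \leq d[x]$, it can never alter $d[x]$. The only cosmetic difference is that you establish the invariant by forward induction on extractions, while the paper argues by contradiction from a first violation.
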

    
    \begin{proof}[Proof of Proposition~\ref{prop: fixed after append}]

        It is worth noting that proving this proposition suffices to indicate the correctness of the Algorithm~\ref{al:connection time estimates}. Because $d[x]$ is kept intact after $x$ is appended to $D$, which is proved as follows:

        \begin{itemize}
            \item The assertion $d[x] \geq d[y] \forall y \in F$ is true at all points after $x$ is inserted into $D$. To prove this, we assume for a contradiction that at some point there exists an $y \in F$ such that $d[y]>d[x]$ and before $d[y]$ was updated, $d[y^{\prime}] \leq d[x]$ for all $y^{\prime} \in F$. But then when $d[y]$ was changed, it was due to some neighbor $y^{\prime}$ of $y$ in $F$. However, $d[y^{\prime}] \leq d[x] < d[y]$, so $\max \{d[y], \min\{d[y^{\prime}], w(y)\}\} = d[y]$, which means $d[y]$ is not updated. In conclusion, we get a contradiction.
            
            \item $d[x]$ is never changed again after $x$ is added to $D$: the only way it could be changed is if for some node $y \in F$, $\min\{d[y], w(x)\} < d[x]$ but this can not happen due to the assertion stated above that $d[x] \geq d[y] \forall y \in F$ at all points.
        \end{itemize}
    \end{proof}
    \begin{proposition}
        \label{prop: optimal solution}
        When node x is placed in D, $d[x] = CT(x_0, x)$.
    \end{proposition}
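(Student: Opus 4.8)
The plan is to combine Proposition~\ref{prop:lower bound}, which already yields $d[x] \le CT(x_0, x)$ at every iteration, with a matching reverse inequality $d[x] \ge CT(x_0, x)$ holding at the exact moment $x$ is moved into $D$. Since the two inequalities together force equality, it suffices to establish the reverse bound. I would argue by contradiction, adapting the classical correctness proof for Dijkstra-style greedy selection to the bottleneck (widest-path) objective used here, where a path's weight is the minimum node weight along it rather than a sum.

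Suppose the claim fails, and let $x$ be the \emph{first} node inserted into $D$ for which $d[x] < CT(x_0, x)$, the strictness coming from Proposition~\ref{prop:lower bound}. Let $P = (x_0, \dots, x)$ be an optimal connection path, i.e.\ one whose bottleneck weight $\min_{z \in P} w(z)$ equals $CT(x_0, x)$; such a path exists because there are finitely many simple paths and the maximum is attained. At the instant $x$ is selected, $x_0 \in D$ while $x \in F$, so walking along $P$ from $x_0$ there is a first node $y \in F$ whose predecessor $y'$ on $P$ lies in $D$.

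The heart of the argument is a short chain of inequalities. Because $y'$ was finalized before $x$, minimality of $x$ gives $d[y'] = CT(x_0, y')$, and since the prefix of $P$ up to $y'$ is itself a path whose bottleneck is at least $CT(x_0,x)$ (a minimum over fewer nodes can only be larger), we obtain $d[y'] = CT(x_0, y') \ge CT(x_0, x)$. Moreover $y$ lies on $P$, so $w(y) \ge \min_{z \in P} w(z) = CT(x_0, x)$. When $y'$ was moved into $D$, the relaxation set $d[y] \gets \max\{d[y], \min\{d[y'], w(y)\}\}$, and the two bounds just derived give $\min\{d[y'], w(y)\} \ge CT(x_0, x)$, hence $d[y] \ge CT(x_0, x)$; as $d$-values never decrease (each update is a $\max$), this persists until $x$ is selected. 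Finally, $x$ is chosen as the element of $F$ of maximum $d$-value and $y \in F$, so $d[x] \ge d[y] \ge CT(x_0, x)$, contradicting $d[x] < CT(x_0, x)$.

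I expect the main obstacle to be the bookkeeping of this ``first node off the frontier'' step, in particular verifying that the optimal path's bottleneck simultaneously dominates both the finalized value $d[y']$ and the weight $w(y)$, and handling the degenerate case $y = x$, where $d[x] \ge CT(x_0,x)$ already follows directly from the relaxation performed when $y'$ entered $D$. The use of Proposition~\ref{prop: fixed after append} is implicit but essential: it guarantees that once $y'$ is finalized its value is frozen at $CT(x_0, y')$, so the relaxation it triggers uses the correct, stable quantity.
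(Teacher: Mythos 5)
Your proof is correct, and it rests on the same underlying idea as the paper's — the classical Dijkstra-style correctness argument adapted to the bottleneck (max--min) objective, combining the upper bound of Proposition~\ref{prop:lower bound} with a reverse inequality extracted from an optimal path and the greedy maximum-$d$ selection rule. The bookkeeping, however, is genuinely different. The paper runs an induction on insertion order and, inside it, picks $z$ as the node on the optimal path \emph{closest to $x$ whose estimate is already correct}; the contradiction is that the successor $z'$ of $z$ would then also have a correct estimate, violating the choice of $z$, and this forces a case split ($z = x$, then $d[z] = d[x]$, then $d[z] > d[x]$). You instead pick the first $D$-to-$F$ crossing $(y', y)$ on the optimal path and show directly that the relaxation performed when $y'$ was finalized forces $d[y] \ge CT(x_0, x)$, whence $d[x] \ge d[y] \ge CT(x_0, x)$ by the greedy choice — contradicting the assumed failure at $x$ itself rather than the extremal choice of a pivot node. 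Your version is tighter: it needs no case analysis, the degenerate case $y = x$ is absorbed into the general argument, and the only facts consumed are monotonicity of the $d$-values (each update is a $\max$ with the old value) and the minimal-counterexample hypothesis, which plays exactly the role of the paper's induction hypothesis. What the paper's variant buys in exchange is that all of its assertions are made at the single instant $x$ is selected, so it never needs to track values across iterations; your argument requires the (easy, but necessary to state) observation that $d[y]$ cannot decrease between the finalization of $y'$ and the selection of $x$, which you do state, so there is no gap.
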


    \begin{proof}[Proof of the Preposition~\ref{prop: optimal solution}]
        This claim is proved by induction on the order of placement of nodes into $D$. For the base case, $x_0$ is placed into $D$ where $d[x_0]=CT(x_0, x_0)=+\infty$, so initially, the claim holds.

            For the inductive step,  assume that for all nodes $y$ currently in $D, d[y]=CT(x_0, y)$. Let $x$ be the node that currently has the maximum $d[x]$ in $F$ (this is the node about to be moved from $F$ to $D$). We will show that $d[x]=CT(x_0, x)$, and this will complete the induction.
            
            Let $p$ be the path with maximum weight from $x_0$ to $x$. Suppose $z$ is the node on $p$ closest to $x$ for which $d[z]=CT(x_0, z)$. We know $z$ exists since there is at least one such node, namely $x_0$, where $d[x_0]=CT(x_0, x_0)$. By the choice of $z$, for every node $y$ on $p$ between $z$ to $x$, $d[y]<CT(s, y)$ based on Proposition~\ref{prop:lower bound}. Consider the following options for $z$.
            \begin{itemize}
                \item If $z=x$, then $d[x]=CT(x_0, x)$ and we are done.
                \item Suppose $z \neq x$. Then there is a node $z^{\prime}$ after $z$ on $p$. Note that $z^{\prime}$ can be $x$. We know that $d[z] = CT(x_0, z) \geq CT(x_0, x) \geq d[x]$. The first $\geq$ inequality holds because subpaths of a path with maximum weight are also paths with maximum weight so that the prefix of $p$ from $s$ to $z$ has weight $CT(s, z)$. In addition, following the definition of the path's weight, the weight of a subpath of a path is always greater or equal to the weight of this path. The subsequent $\geq$ holds by Proposition~\ref{prop:lower bound}. We know that if $d[z]=d[x]$ all of the previous inequalities are equalities and $d[x]=CT(x_0, x)$ and the claim holds.
            \end{itemize}
            
            Finally, towards a contradiction, suppose $d[z]>d[x]$. By the choice of $x \in F$ we know $d[x]$ is the maximum estimate of nodes in $F$. Thus, since $d[z]>d[x]$, we know $z \notin F$ and must be in $D$, the finished set. This means the edges out of $z$, and in particular $\left(z, z^{\prime}\right)$, were already relaxed by our algorithm. But this means that $d[z^{\prime}] \geq \min\{CT(x_0, z), w(z^{\prime})\}=CT(s, z^{\prime})$, because $z$ is on the path $p$ from $x_0$ to $z^{\prime}$, and the $d[z^{\prime}]$ of $z^{\prime}$ must be correct. However, this contradicts $z$ being the closest node on $p$ to $x$ meeting the criteria $d[z]=CT(x_0, z)$. Thus, our initial assumption that $d[z]>d[x]$ must be false and $d[x]$ must equal $CT(x_0, x)$.
    \end{proof}

    The remaining connection time of sensor $S_j$ is $CT(x_0, x_j)$. The target $T_i$ is out of surveillance only when all sensors in the set $\mc {MS}_i$ are disconnected from the base station. Moreover, the network is dead as long as the disconnection of a target, so it is possible to estimate the remaining lifetime of a WRSN at time $t$ as follows:
    \begin{equation}
        \hat{F_t} = \min\limits_{x_i \in \mc T}~{\max\limits_{x \in \mc {MS}_i}~\left\{{d_t}(x_0, x)\right\}}.
    \end{equation}
    
    \item \textit{Reward} ($\mc R$): The reward of a macro action lasting from $t_1$ to $t_2$ ($t_2 > t_1$) is defined as the improvement of the estimate of the network lifetime:
    \begin{equation}
        \label{reward}
        R^{\text{GE}}_{t_1, t_2} = (\hat{F}_{t_1} - \hat{F}_{t_2}) - (t_2 - t_1).
    \end{equation}
    However, because during the time between $t_1$ and $t_2$, other agents also operate, it is uncertain to confirm whether a macro action of an agent brings benefit or not. Hence, if the system just adopts the common reward for all agents, it is likely to raise the problem of wrong evaluation. For instance, if an agent commits a really bad macro action but other agents practice so effectively, the general reward still be high, leading to a still high evaluation for the bad action. For this reason, it is essential to create an individual reward term that evaluates the effectiveness of an agent's action. The thesis proposes an individual term that promotes the charging energy for more critical sensors as follows:
    \[
        R^{\text{EX}} = \int_{t=t_1}^{t=t_2} \sum_{j=1}^{N} \left( \frac{{P^{\text{charge}}(d(L^{\text{sensor}}_j, [a, b]))}}{\frac{e_{t, j} - e^{\text{th}}}{p_{t, j}}} \right) \, dt,
    \]
    where $[a, b]$ is the charging location of the agent. The final reward of a macro action is the sum of two reward terms.
\end{itemize}

\newpage
\chapter{ASYNCHRONOUS MULTI-AGENT  PROXIMAL POLICY OPTIMIZATION}
\label{chap:methodology}
\label{chap:proposed}

\section{Asynchronous multi-agent sampling mechanism}

In this section, the thesis presents a learning framework that enables state-of-the-art single-agent and synchronous multi-agent reinforcement learning algorithms to function effectively in an asynchronous multi-agent environment. The key challenge in the asynchronous setting is the sampling mechanism, which is addressed with an asynchronous multi-agent sampling mechanism (Figure~\ref{fig:asynchronous framework}). In this setup, each mobile charger (MC) may have to decide its next macro action while others are still executing theirs. Each MC maintains its own buffer, which includes frames, each containing the observation at the start of the previous macro action $o_{i-1}$, the previous macro action $u_{i-1}$, the reward $r_{i-1}$, and the observation at the end of the previous macro action $o_i$. The training method varies based on the algorithm. In PPO, all agents' buffers are combined into a single buffer for training an actor and a critic. In IPPO, each agent's actor and critic are trained independently using its own buffer. After training, if the finishing condition is not met, the process repeats; otherwise, the final model is obtained. The asynchronous multi-agent sampling mechanism is respective to Line 4 to Line 29 of Algorithm~\ref{al:amappo}.
\begin{figure}[H]
    \centering
    \includegraphics[width=0.95\textwidth]{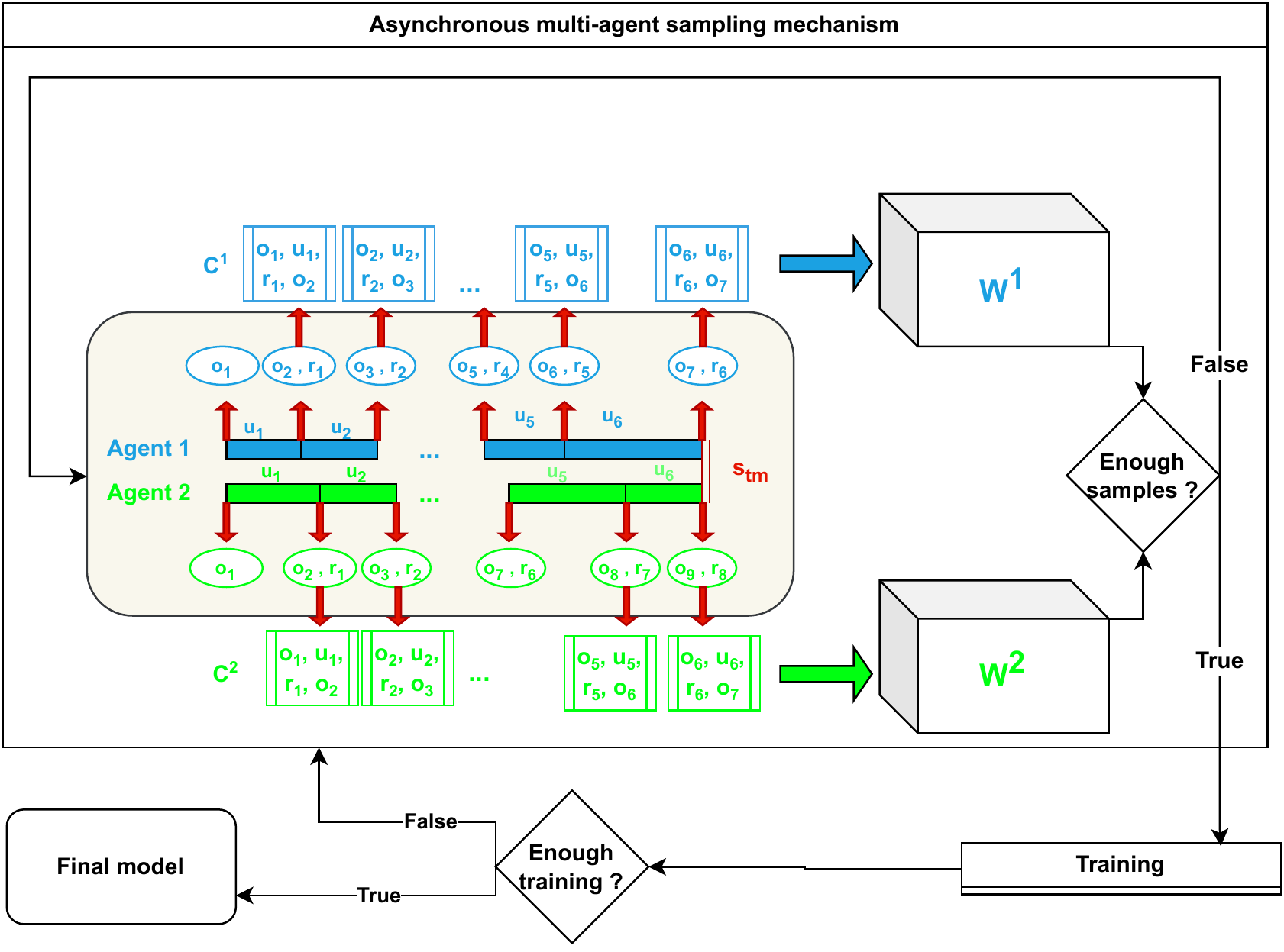}
    \caption{The asynchronous multi-agent sampling mechanism for two agents}
    \label{fig:asynchronous framework}
\end{figure}

\section{Asynchronous multi-agent proximal policy optimization algorithm}

The thesis proposes an algorithm called asynchronous multi-agent proximal policy optimization (AMAPPO) to optimize the high-level policy of MCs. The proposal originates from the proximal policy optimization (PPO) but is adapted to asynchronous multi-agent settings. Macro actions of the $MC_k$ are yielded by its own policy $\Theta^k$ with parameters of $\theta^k$. All MCs incorporate a joint-critic model $V_{\phi}(o)$, where $\phi$ is the set of parameters of the model, and $o$ is an observation of any MC. The $V_{\phi}(o)$ is used for variance reduction and is only utilized during training. The training algorithm is presented in Algorithm~\ref{al:amappo}. 

\begin{algorithm}

Initialize actor models $\Theta^1, \dots, \Theta^M$, and critic model $V_{\phi}$.

\While{$step < step_{max}$\tcp*{Loop through training steps}} {

    Initialize data buffers for actors $W^1, \dots, W^M$.

    \While{Any buffer $W^k$ does not reach a predefined size}{
        Create $M$ empty caches $C^1, \dots, C^M$

        Reset the environment;
        
        \While{$t < t_{sm}$\tcp*{Avoid an infinite loop}}{
        
            \For{$k = 1$ \KwTo $M$ \tcp*{Loop through agents}}{

                $u_{t}^{k} \leftarrow$ agent $k$’s macro action at $t$;

                $H_{t}^{k} \leftarrow$ agent $k$’s action-observation history at $t$;
                
                \uIf {$H_t^k \in B^{k}(u_{t}^{k})$}{
                
                     $t^{\prime} \leftarrow$ the beginning time of $u_{t}^{k}$;

                     $o^k_{t^{\prime}} \leftarrow$ the observation of the agent $k$ at time $t^{\prime}$;

                     $o^k_t \leftarrow$ the observation of the agent $k$ at time $t$;

                     $R \leftarrow$ the reward of $u_{t}^{k}$;

                    $\hat{A} = R + \gamma \times V_{\phi}(o^k_t) - V_{\phi}(o^k_{t^{\prime}})$
                     
                     Append ($o^k_{t^{\prime}}$, $u_{t}^{k}$, $R$, $\hat{A}$, $o^k_t$) to $C^k$;

                    $u_{t}^{k} \leftarrow \Theta^k(o^k_t)$\tcp*{Create a new macro action}               
                }
            }
            Execute atomic action $a_{t}^{k} \sim u_t^{k}(H_{t}^{k}) \forall k \in \{1, 2, \dots, M\}$
        }
        \For{$k = 1$ \KwTo $M$ \tcp*{Loop through agents}}
        {
            Append $C^k$ into the batch $W^k$.  
        }
    } 
    \For{$k = 1$ \KwTo $M$ \tcp*{Stop sampling, start training}}
        {
            Compute loss $J_{\Theta ^ k}(\theta^k)$ following Formula~\eqref{eq:ppo_formula} on the batch $W^k$.  
        }
Compute loss $J_{V_{\phi}}(\phi^k)$ on the batch $W = W^1 \cup W^2 \cup \dots W^M$ based on Formula~\eqref{eq:mse_loss}.

    Update $\phi$, and $\theta^1, \theta^2, \dots, \theta^M$ using Adam and gradient clipping.
}
\caption{AMAPPO}
\label{al:amappo} 
\end{algorithm}

The update of the actor models is identical to the policy improvement step in PPO, which is formally presented in the Formula~\eqref{eq:ppo_formula}:

\begin{equation}
\label{eq:ppo_formula}
J_{\Theta ^ k}(\theta^k) = \frac{1}{|W^k|} \sum_{i=1}^{|W^k|} \min\left( \frac{\pi_{\theta^k}(u_i|o_i)}{\pi_{\theta^k_{\text{old}}}(u_i|o_i)} \hat{A}_i, \text{clip}\left( \frac{\pi_{\theta^k}(u_i|o_i)}{\pi_{\theta^k_{\text{old}}}(u_i|o_i)}, 1 - \epsilon, 1 + \epsilon\right) \hat{A}_i\right),
\end{equation}
where $i$ indexes the samples in the batch,
$\frac{\pi_{\theta^k}(u_i|o_i)}{\pi_{\theta^k_{\text{old}}}(u_i|o_i)}$ is the importance sampling ratio for each sample $i$. Specifically, the $\pi_{\theta^k}(u_i|o_i)$ is the current policy's probability density of choosing the macro action $u_i$ when observing an observation $o_i$. The $\pi_{\theta^k_{\text{old}}}(u_i|o_i)$ is the probability density of choosing the macro action $u_i$ when observing an observation $o_i$ by the policy at the decision-making time. The $\hat{A}_i$ is the Generalized Advantage Estimation (GAE) advantage for each sample $i$ that is calculated for each cache in Line 19 of Algorithm~\ref{al:amappo}. The $\text{clip}(\cdot, a, b)$ is the function that clips the value to be within the range $[a, b]$.

The critic is trained based on the Mean Squared Error (MSE) loss, which is shown in Formula~\eqref{eq:mse_loss}:

\begin{equation}
\label{eq:mse_loss}
J_{V_{\phi}}(\phi^k) = \frac{1}{|W|} \sum_{i=1}^{|W|} r_i + \gamma \times V_{\phi}(o_i^{\prime}) - V_{\phi}(o_i),
\end{equation}
where $i$ indexes the samples in the batch, $V_{\phi}(o_i^{\prime})$ is the evaluation of the critic for the observation when an actor begins the macro action $u_i$, while $V_{\phi}(o_i)$ is the evaluation of the critic for the observation when an actor terminates the macro action $u_i$. $r_i$ is the reward for the macro action $u_i$.

\section{Critic model}
\label{sec: critic_model}
The role of the critic model $V_{\phi}$ is to evaluate the goodness of an MC's observation. Hence, the input of $V_{\phi}$ is an observation of MC whose shape is 4x100x100. The output is a float number to express evaluation for the respective input observation. With these image-like inputs, the CNN architecture has shown its superior performance in many similar tasks such as Dota in \cite{berner2019dota}, Atari in \cite{mnih2013playing}, and so on. The convolutional layers capture spatial features and exploit local patterns in the input, while the fully connected layers capture high-level representations and produce goodness estimates. The Table~\ref{tab:critic-network-layers} provides detailed information about each layer in the design of  $V_{\phi}$ in the proposal.

\begin{table}[h]
\centering
\caption{Summary of Actor-Network Layers}
\label{tab:critic-network-layers}
\begin{tabular}{|c|c|c|c|c|}
\hline
Layer & Type & Input Size & Output Size & Nb parameters \\
\hline
conv1 & Conv & (4, H, W) & (16, H/2, W/2) & 1616 \\
\hline
relu1 & ReLU & (16, H/2, W/2) & (16, H/2, W/2) & 0 \\
\hline
conv2 & Conv & (16, H/2, W/2) & (32, H/4, W/4) & 12832 \\
\hline
relu2 & ReLU & (32, H/4, W/4) & (32, H/4, W/4) & 0 \\
\hline
conv3 & Conv & (32, H/4, W/4) & (64, H/8, W/8) & 51264 \\
\hline
relu3 & ReLU & (64, H/8, W/8) & (64, H/8, W/8) & 0 \\
\hline
flatten & Flatten & (64, H/8, W/8) & (4096) & 0 \\
\hline
fc1 & FC & (4096) & (100) & 409700 \\
\hline
relu4 & ReLU & (100) & (100) & 0 \\
\hline
fc2 & FC & (100) & (1) & 101 \\
\hline
\end{tabular}
\end{table}

The network's input has 4 channels while "H" and "W" represent the height and width. It is worth noting that, Table~\ref{tab:critic-network-layers} indicates the size of the input and output of each layer when passing one observation through the network. In the training process, the batch training strategy is adopted, which means multiple observations are forwarded through the network at the same time. The number of parameters indicates the total trainable parameters in each layer.

\section{Actor model}
\label{sec: actor_model}
\begin{figure}
    \centering
    \includegraphics[width=\linewidth]{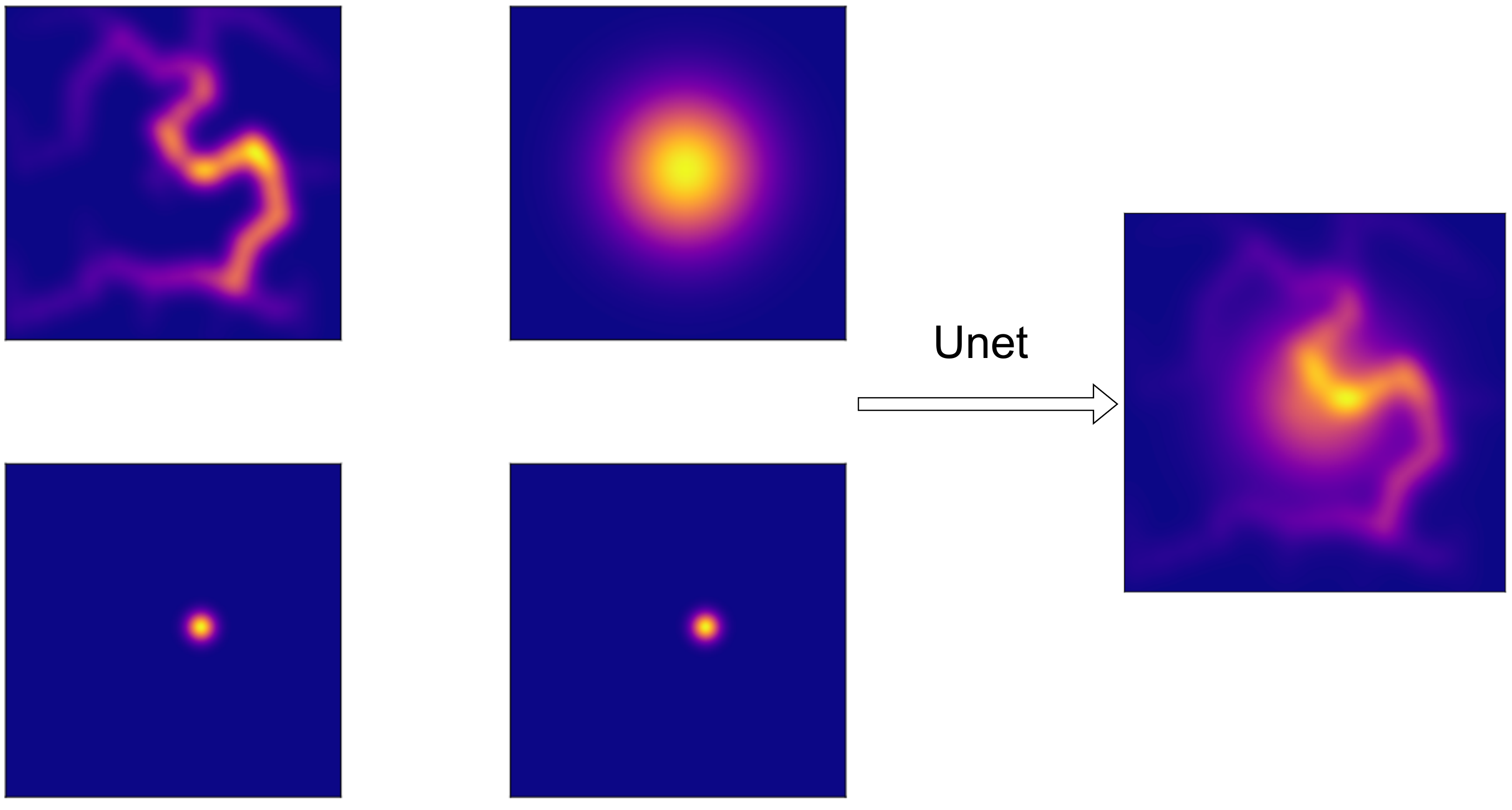}
    \caption{An example of a charging probability map produced from an observation by an Unet}
    \label{fig:input to prob}
\end{figure}
The actor is designed to output a macro action for the MC, which is a 3-component vector. However, the macro action space is too large for the MCs to effectively learn feasible charging locations where at least one sensor can receive energy. To address this challenge, the thesis proposes an alternative approach utilizing the U-net architecture. This approach aims to generate a map of visiting probabilities across the area of interest. An example of a charging probabilities map is illustrated in Figure~\ref{fig:input to prob}. Subsequently, an optimizing algorithm is employed to identify the charging points surrounding the areas with the highest probability. A concrete instance of this optimizing phase is presented in Figure~\ref{fig: optimization}, where the red rectangle indicates the finding region and the purple point is the charging location with the highest charging rate to two nearby sensors. The comprehensive optimization procedure is detailed in the final part of this section. By leveraging the U-net architecture, the proposed method can effectively capture and represent the spatial distribution of feasible charging locations, enhancing the MCs' decision-making process. 

Table~\ref{tab:actor-layers} provides a detailed summary of each layer in the U-Net network, including information about input size, output size, number of parameters, and kernel sizes.

\begin{table}
\centering
\caption{Summary of Network Layers}
\label{tab:actor-layers}
\begin{tabular}{|c|c|c|c|c|}
\hline
Layer & Type & Input Size & Output Size & Params \\ \hline
inc & Conv. Block & (4, H, W) & (64, H, W) & 2368 \\ \hline
down1 & Down. Block & (64, H, W) & (128, H/2, W/2) & 131584 \\ \hline
down2 & Down. Block & (128, H/2, W/2) & (256, H/4, W/4) & 525312 \\ \hline
up1 & Up. Block & \begin{tabular}[c]{@{}c@{}}(384, H/4, W/4), \\ (128, H/2, W/2)\end{tabular} & (128, H/2, W/2) & 675584 \\ \hline
up2 & Up. Block & \begin{tabular}[c]{@{}c@{}}(192, H/2, W/2), \\ (64, H, W)\end{tabular} & (64, H, W) & 270656 \\ \hline
out\_mean & Output Conv. & (64, H, W) & (1, H, W) & 577 \\ \hline
log\_std & Parameter & N/A & (1, H, W) & 1 \\ \hline
\end{tabular}
\end{table}

\begin{figure}[H]
    \begin{minipage}[c]{0.5\linewidth}
        \centering
        \includegraphics[width=\linewidth]{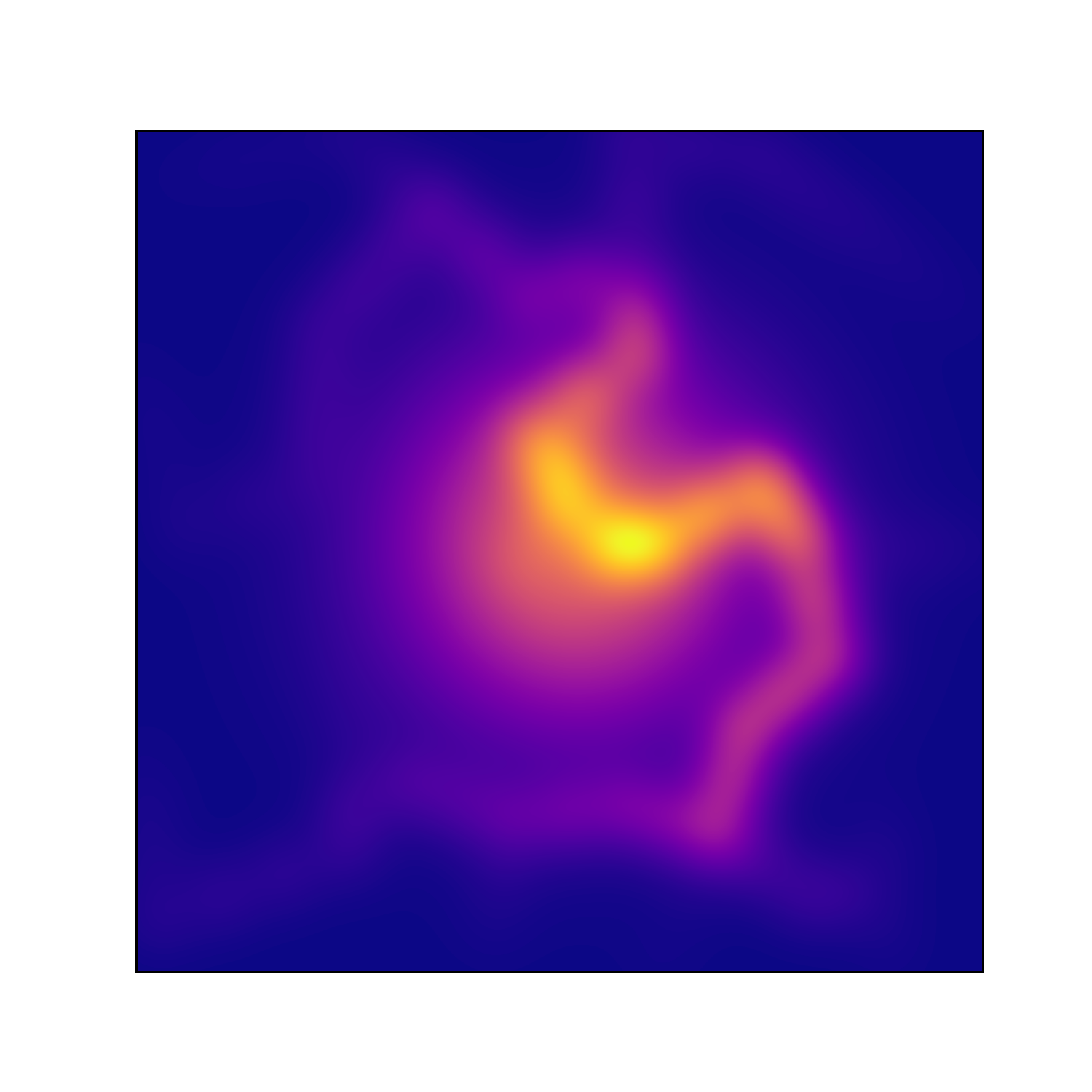}
    \end{minipage}%
    \begin{minipage}[c]{0.5\linewidth}
        \centering
        \includegraphics[scale=0.5]{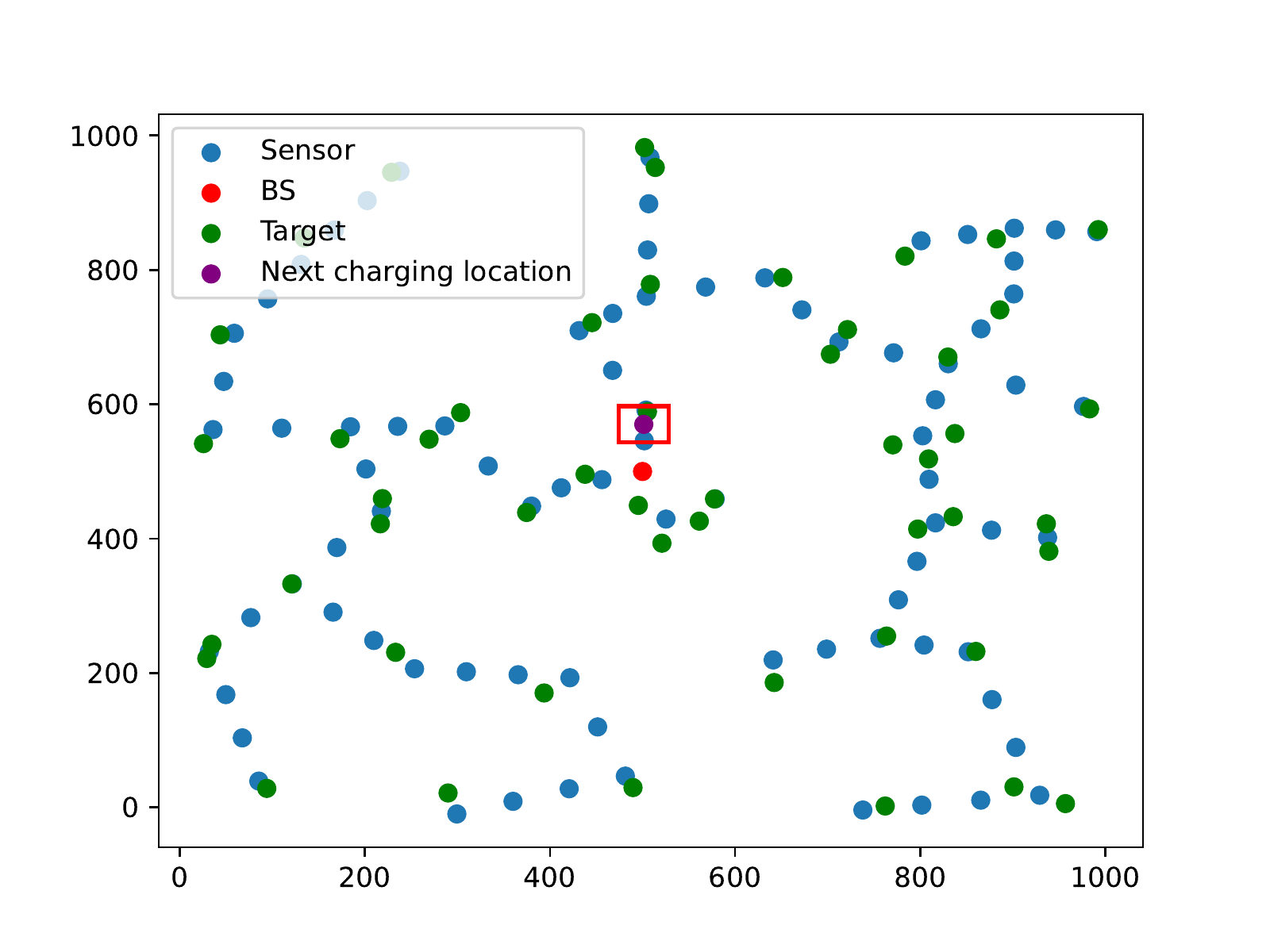}
    \end{minipage}
    \caption{The optimization problem to detect the next macro action from a probability map}
    \label{fig: optimization}
\end{figure}
The input of the actor network is also observations of an MC same as the critic network. The symbols "H" and "W" represent the height and width of the input, respectively. The number of parameters indicates the total trainable parameters in each layer. The kernel size specifies the size of the convolutional kernel used in each layer. 

As previously discussed, the Unet is required to produce a probability map with dimensions HxW, where each element in the map is represented as \(Pr[u][v]\) for convenience. However, this approach lacks the incorporation of the exploration term in action selection. To address this, the thesis employs the Gaussian distribution strategy commonly used in continuous action space reinforcement learning. In this strategy, each element \(Pr[u][v]\) is modeled as a Gaussian distribution with a mean of \(mean[u][v]\) and a standard deviation of \(std[u][v]\). Consequently, the Unet module does not output a single map; rather, it generates two components, \(mean\) and \(log(std)\), both with dimensions HxW.

The final map is then constructed by sampling each element as follows:
\[
Pr[u][v] = \frac{e^{x[u][v]}}{\sum_{u=1}^{H}\sum_{v=1}^{W} e^{x[u][v]}},
\text{where } x[u][v] \sim \mathcal{N}(mean[u][v], std[u][v]^2).
\]
In this approach, each \(x[u][v]\) is drawn from a Gaussian distribution with mean \(mean[u][v]\) and standard deviation \(std[u][v]\). By employing this sampling technique, the Unet can better handle continuous action spaces. Assume that $Pr[u_{\text{max}}][v_{\text{max}}]$ is the maximum element of $Pr$. The charging time for the next macro action is determined by the Formulas:

\begin{equation}
    c = Pr[u_{\text{max}}][v_{\text{max}}] \times \frac{e^{\text{max}} - e^{\text{th}}}{\frac{\alpha}{\beta^2}},
\end{equation}
where $\frac{e^{\text{max}} - e^{\text{th}}}{\frac{\alpha}{\beta^2}}$ is a standard term for the charging time. This formula indicates that if the charging probability is high, the mobile charger should charge for a longer time. Otherwise, it means that there are also other regions with a high probability, and the mobile charger is not certain that its decision is optimal so the mobile charger only should charge sensors for a small time and then make a decision again. The charging location $[a, b]$ is determined by the following procedure.
Consider $H_0, H_1$ as the lower bound and the upper bound of the x-axis location of all sensors while $W_0, W_1$ as these of the y-axis. The finding region defined by the inequalities $H_0 + \frac{(H_1 - H_0) \times (u_{\text{max}} - 0.5)}{\text{H}} \leq a \leq H_0 + \frac{(H_1 - H_0) \times (u_{\text{max}} + 0.5)}{\text{H}}$ and $W_0 + \frac{(W_1 - W_0) \times (v_{\text{max}} - 0.5)}{\text{W}} \leq b \leq W_0 + \frac{(W_1 - W_0) \times (v_{\text{max}} + 0.5)}{\text{W}}$ referred to as region $D$. Region D can be characterized as the subset of the 2D space where the coordinates  $a$ and $b$ lie within the specified bounds, determined by the relationships between the dimensions and the central values. These constraints are essential in the context of the study, providing a well-defined region of interest for a good finding charging location. The optimization problem aimed at maximizing the total charging rate for sensors involves finding the values of \(a\) and \(b\) to optimize the charging strategy. This objective is achieved while considering the estimated remaining lifetime of the sensors, where those with a lower remaining lifetime are given higher weights in the optimization process. The optimization problem can be formally defined as follows:

\[
\max_{a, b} \sum_{j=1}^{N} \left( \frac{{P^{\text{charge}}(d(L^{\text{sensor}}_j, [a, b])}}{\frac{e_{t, j} - e^{\text{th}}}{p_{t, j}}} \right)
\]
subject to:
\[
H_0 + \frac{(H_1 - H_0) \times (u_{\text{max}} - 0.5)}{\text{H}} \leq a \leq H_0 + \frac{(H_1 - H_0) \times (u_{\text{max}} + 0.5)}{\text{H}}
\]

\[
W_0 + \frac{(W_1 - W_0) \times (v_{\text{max}} - 0.5)}{\text{W}} \leq b \leq W_0 + \frac{(W_1 - W_0) \times (v_{\text{max}} + 0.5)}{\text{W}},
\]
where $d(L^{\text{sensor}}_j, [a, b])$ indicate the Eucidean distance between $S_j$ and the location [a, b]. This optimization problem is solved by the L-BFGS-B method initialized in many popular libraries. The details about this method can be found in \cite{byrd1995limited}.

\newpage
\chapter{EXPERIMENTAL RESULTS}
\label{chap:experiments}
This chapter presents ablation studies that highlight the critical role of each component in the proposed Dec-POSMDP formulation and the proposed algorithm, named "\textbf{A}synchronous \textbf{M}ulti-\textbf{A}gent \textbf{P}roximal \textbf{P}olicy \textbf{O}ptimization" (AMAPPO). The numerical results demonstrate the effectiveness and generalizability of the Dec-POSMDP formulation in addressing target coverage and connectivity problems of WRSNs. Furthermore, the comparison to state-of-the-art approaches underscores the superior performance of AMAPPO across various real-world scenarios.

\section{Experimental settings}

The experiments were conducted using the Python programming language and executed on the DGX Station A100 server, which is equipped with 512GB DDR4 RAM and 4 x NVidia A100 GPUs. The Simpy framework, initially introduced in \cite{matloff2008introduction}, was utilized to simulate the network operation, following the mechanism described in \cite{zhu2018adaptive}. The charging model's parameters were inherited from the paper \cite{lyu2019periodic} and are detailed in Table~\ref{tab:model parameter}. The parameters of the proposed AMAPPO algorithm are presented in Table~\ref{tab: gsadql parameter}.

\begin{table}[!ht]
\begin{minipage}[h]{0.45\textwidth}
\caption{Base parameters
of the energy model}
\label{tab:model parameter}
\begin{tabular}{|l|l|}
\hline
\textbf{Parameter} & \textbf{Value} \\ \hline
$r_c$ & 80 (m) \\ \hline
$r_s$ & 40 (m) \\ \hline
$e^{\text{th}}$ & 540 (J) \\ \hline
$e^{\text{max}}$ & 10800 (J) \\ \hline
$E^{\text{max}}$ & 108000 (J) \\ \hline
$r^{\text{charge}}$ & 27 (m) \\ \hline
$P_M$ & 1 (J/m) \\ \hline
$V$ & 5 (m/s) \\ \hline
$\alpha$, $\beta$ & 4500, 30 \\ \hline
\end{tabular}
\end{minipage}
\hfill
\begin{minipage}{0.45\textwidth}
\caption{Parameters of NLM-CTC's Dec-POSMDP model and AMAPPO}
\label{tab: gsadql parameter}
\begin{tabular}{|l|l|}
\hline
\textbf{Parameter} & \textbf{Value} \\ \hline
$\epsilon$ & 0.2 \\ \hline
$\gamma$ & 0.99 \\ \hline
Buffer size & 512 \\ \hline
\begin{tabular}[c]{@{}l@{}}Update iteration\\ per a buffer\end{tabular} & 5 \\ \hline
Learning rate & 0.0003 \\ \hline
$t_{sm}$ & 604800 (s) \\ \hline
$T$ & 100 \\ \hline
\end{tabular}
\end{minipage}
\end{table}

\section{Baselines and performance metric}

 To show off the effectiveness and the generalization of proposals, the thesis considers three baselines as follows:
\begin{itemize}
    \item PPO, which stands for Proximal Policy Optimization algorithm \cite{schulman2017proximal}, is among the most popular and effective techniques for single-agent reinforcement learning. While the original PPO is not originally designed for the multi-agent setting and asynchronous decision-making environments, this thesis has adapted it to suit the Dec-POSMDP formulation. The data collection and parameter update process remains the same as in AMAPPO. However, in the PPO algorithm, only one actor and critic are employed to govern the operations of multiple mobile chargers.
    
    \item IPPO: The Independent Proximal Policy Optimization algorithm \cite{yu2022surprising} is a modified version of PPO specifically designed to address the multi-agent setting. In this algorithm, each mobile charger is treated as an independent learner, with its own actor and critic. Consequently, each mobile charger collects its own samples, which are then utilized to train its respective PPO model. This approach allows for individualized learning and decision-making by each mobile charger
    
    \item DTCM: The Distributed charging scheme under Target coverage and Connectivity constraint for Multiple chargers is a reinforcement learning-based algorithm designed to solve the target coverage and connectivity in WRSNs. The algorithm implements a Q-learning algorithm on their proposed MDP model whose action space is traveling to a predefined charging location. The cooperation between mobile chargers is ensured by a consensus mechanism.

\end{itemize}
The objective of the thesis is to maximize the network lifetime which is calculated through Formula~\ref{eq:lifetime} and the network simulation. However, due to the physical geometry characteristic of each network scenario being different, the thesis  adopts an alternate metric called network lifetime improvement, calculated as follows:

\begin{equation}
     \text{Network lifetime improvement}  =  \dfrac{F_0}{F_B},
\end{equation}
where $F_0$ is the network lifetime when implementing mobile chargers with a certain controlling algorithm, while $F_B$ is the network lifetime without using mobile chargers. With this metric, an inferior algorithm would result in the network lifetime improvement of 1 while a good algorithm aims to increase the improvement as high as possible.

\section{Dataset}
\begin{figure}
    \centering
    \includegraphics[scale=0.65]{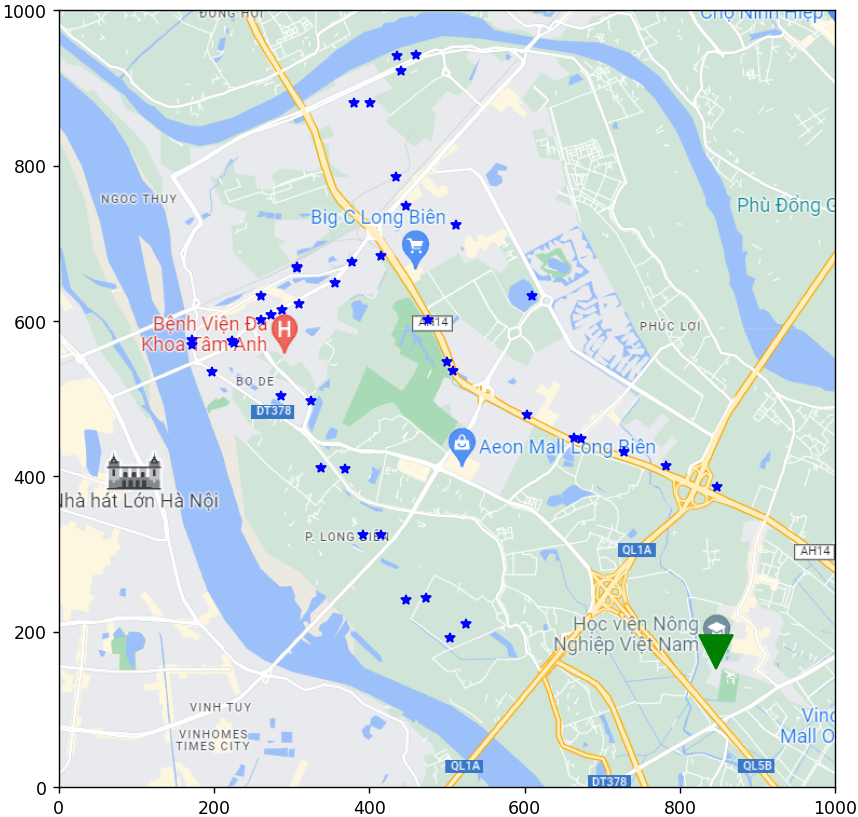}
    \caption{The instance "hanoi\_50"}
    \label{fig: hanoi_50_target}
\end{figure}
\begin{figure}
    \centering
    \includegraphics[scale=0.28]{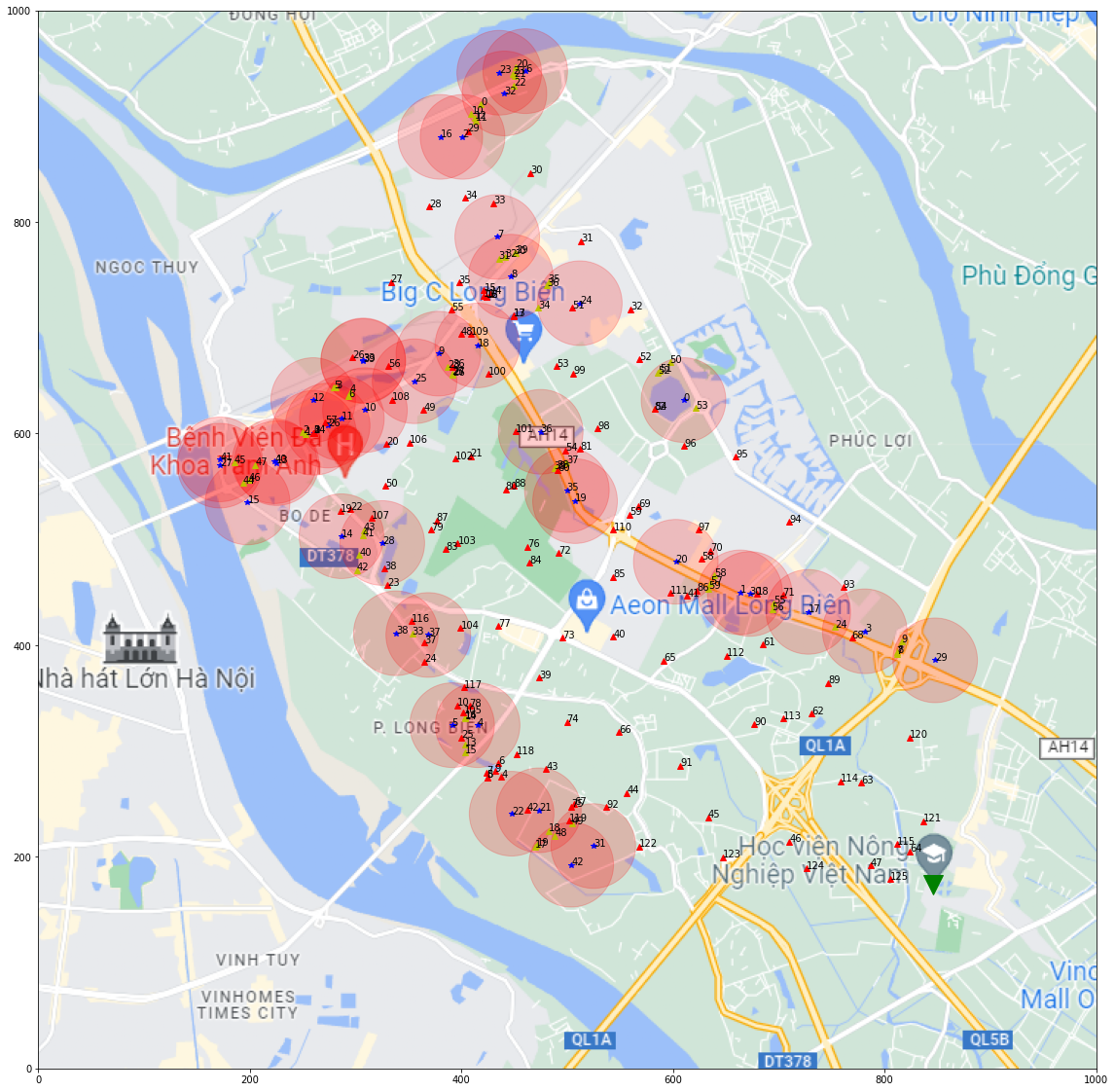}
    \caption{The instance "hanoi\_50" with sensor placement}
    \label{fig: hanoi_50_target_sensor_placement}
\end{figure}
The thesis introduces a real-world dataset comprising 9 instances, derived from the geographical information of Vietnam. Each instance consists of the location of a base station and a set of targets. The base station serves as an information center, while the targets are strategically placed objects of interest. The instances are named following a convention of "province\_no. targets", where "province" belongs to the set \{hanoi, bacgiang, sonla\}, representing three provinces in Vietnam where the network is implemented. The "no. targets" taken from the set \{50, 100, 150, 200\}, denotes the number of targets to be monitored in each instance. For example, the instance "hanoi\_50" represents a scenario where the base station is situated at the Vietnam National University of Agriculture, and there are 50 targets that serve as transportation stations, illustrated in Figure~\ref{fig: hanoi_50_target}.

To ensure effective target monitoring and connection to the base station, sensors are strategically positioned within each instance, following the approach outlined in the paper \cite{HANH2023103578}. Each instance comprises the locations of sensors, targets, and the base station. The instance "hanoi\_50" with the sensor placement is illustrated in Figure~\ref{fig: hanoi_50_target_sensor_placement}. Throughout the training and testing phases, the number of mobile chargers is fixed at 3.

The training of AMAPPO and other reinforcement learning algorithms is conducted using the "hanoi\_50" instance in a simulation environment, while the other instances serve as evaluation maps to assess the testing performance.

\section{Ablation study}

To showcase the importance of each component in the Dec-POSMDP formulation and the AMAPPO algorithm, this section provides a comprehensive set of ablation experiments. These experiments involve various ablation variants of the proposed methods, which are carefully designed to isolate the effects of different components. The performance of all ablation variants is summarized in Table~\ref{tab: ablation1}. Additionally, in the following subsections, we provide detailed insights into the performance of each group of variants throughout the training progress. This analysis highlights the significance of different components in achieving efficient and cooperative charging strategies for the MCs, ultimately improving the network lifetime and overall system performance.

\begin{table}
\centering
\begin{tabular}{|c|cccc|c|}
\hline
\multirow{2}{*}{\begin{tabular}[c]{@{}c@{}}Dec-POSMDP\\ type\end{tabular}} & \multicolumn{4}{c|}{The number of targets} & \multirow{2}{*}{Overall} \\ \cline{2-5}
 & \multicolumn{1}{c|}{50} & \multicolumn{1}{c|}{100} & \multicolumn{1}{c|}{150} & 200 &  \\ \hline
FULL & \multicolumn{1}{c|}{\textbf{3.13}} & \multicolumn{1}{c|}{\textbf{2.35}} & \multicolumn{1}{c|}{\textbf{1.69}} & \textbf{1.43} & \textbf{2.15} \\ \hline
NO\_1 & \multicolumn{1}{c|}{1.05} & \multicolumn{1}{c|}{1.07} & \multicolumn{1}{c|}{1.02} & 1.00 & 1.04 \\ \hline
NO\_2\_3\_4 & \multicolumn{1}{c|}{2.02} & \multicolumn{1}{c|}{1.51} & \multicolumn{1}{c|}{1.23} & 1.14 & 1.48 \\ \hline
NO\_EX & \multicolumn{1}{c|}{2.19} & \multicolumn{1}{c|}{1.72} & \multicolumn{1}{c|}{1.34} & 1.21 & 1.56 \\ \hline
NO\_GE & \multicolumn{1}{c|}{2.24} & \multicolumn{1}{c|}{1.82} & \multicolumn{1}{c|}{1.22} & 1.04 & 1.58 \\ \hline
NO\_PM & \multicolumn{1}{c|}{1.13} & \multicolumn{1}{c|}{1.05} & \multicolumn{1}{c|}{1.06} & 1.00 & 1.06 \\ \hline
\end{tabular}
\caption{The network lifetime improvement on training maps of AMAPPO trained on different variants of the Dec-POSMDP model}
\label{tab: ablation1}
\end{table}
\subsection{The importance of observation space's components}

\begin{figure}[H]
    \centering
    \includegraphics[width= \linewidth]{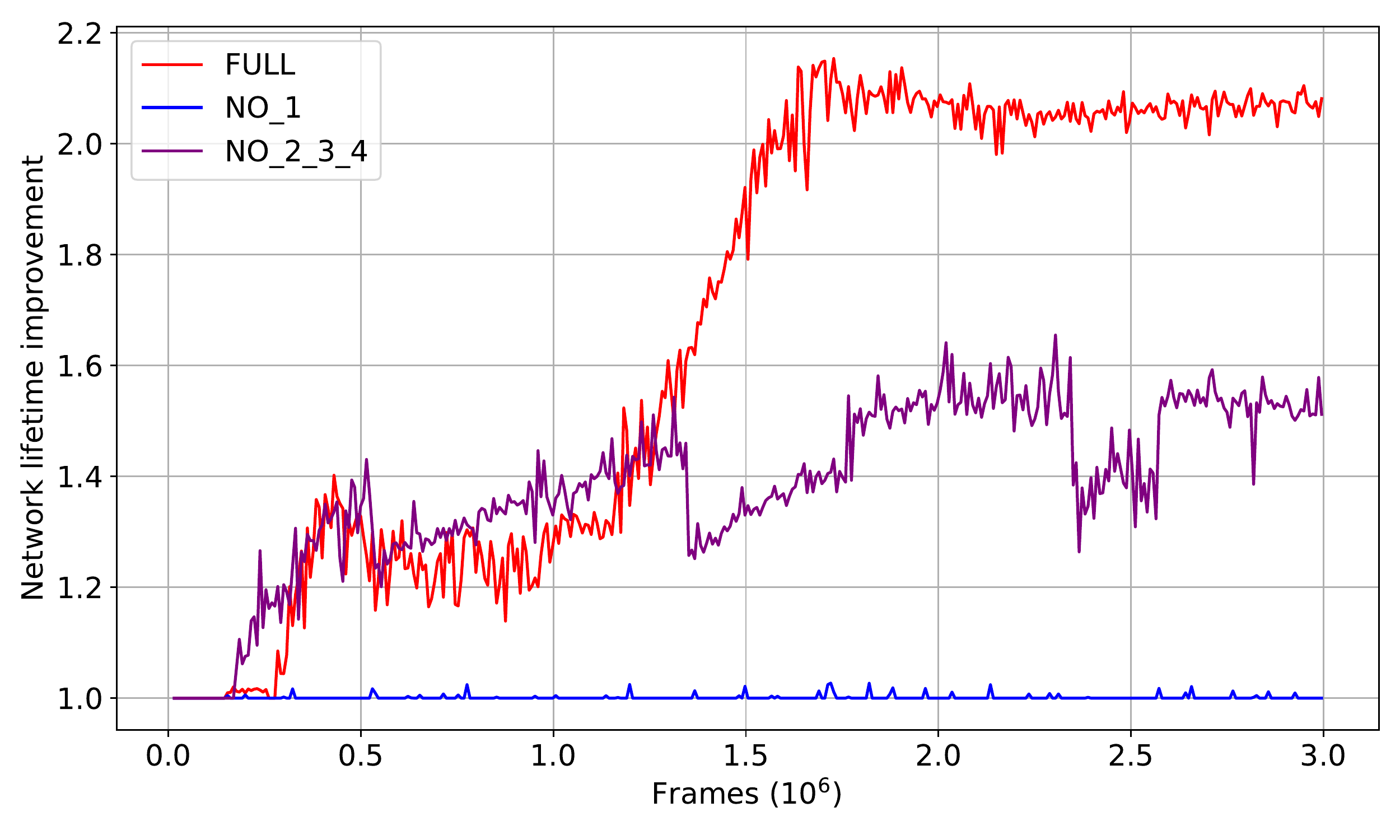}
    \caption{Ablation study on components of observation space}
    \label{fig:ob_ablation_study}
\end{figure}

This experiment highlights the significance of different components in the observation space on the learning process and performance of mobile chargers. The experiment evaluates three variants of the observation space: "FULL," which represents the proposed observation space in the thesis, "NO\_1" excluding the first layer, and "NO\_2\_3\_4" containing only the first layer as the observation space. The network lifetime improvement for these variants over the training phase is illustrated in Figure~\ref{fig:ob_ablation_study}. Overall, the AMAPPO trained on the "FULL" variant outperforms that trained on the "NO\_2\_3\_4" and "NO\_1" variants. 

The first layer of the observation space places emphasis on areas with multiple sensors in critical conditions, particularly those with low levels of energy and high consumption rates. Sensors consuming substantial energy are responsible for numerous tasks in the network. Hence, when these sensors cease operation, it often leads to the disconnection of important network components. Consequently, removing the first layer significantly hampers the learning process, as mobile chargers are unable to recognize critical network conditions. This is evident from the fixed network lifetime improvement around 1.0, indicating no enhancement during training.

In the "NO\_2\_3\_4" variant, where the information about the status of mobile chargers in the second, third, and fourth layers is ignored, the learning process still shows some positive results. However, the performance is much less competitive compared to the full version. Moreover, this variant exhibits oscillations and multiple sudden decreases during training. This phenomenon can be attributed to the lack of information about other mobile chargers in the decision-making process. Specifically, the first layer still guides mobile chargers to critical areas, primarily contributing to the enhancement of network lifetime. However, since mobile chargers are unaware of each other's intentions, they might inadvertently visit the same area, resulting in insufficient energy replenishment for other critical areas. This demonstrates that incorporating the information of other mobile chargers through the three layers facilitates cooperation between mobile chargers, leading to more efficient network lifetime improvement.

\subsection{The impact of reward components}
This section presents a detailed analysis of the general reward and the exclusive reward of mobile chargers (MCs). We implement three variants to compare the separate impact of each reward term: the "FULL" variant represents the proposed combined reward function, the "NO\_EX" variant indicates the reward function without the exclusive term, and the "NO\_GE" variant represents the version excluding the general term. The change in network lifetime improvement for these three variants during the training progress is illustrated in Figure~\ref{fig:ab_reward}.

Overall, the performance of all three variants improves during training. However, the "FULL" and "NO\_GE" variants demonstrate better performance when considering the cutoff at three million training frames. The "NO\_GE" version achieves a comparable enhancement in network lifetime to the full version, despite lacking the general term in its reward function. This could be attributed to the fact that the exclusive term already captures a portion of the general term when distributing the total exclusive rewards among MCs charging the same sensors. This encourages MCs to avoid visiting the proximity of others, leading to efficient resource utilization. However, the "NO\_GE" variant does not as competitive in the testing instances as the "FULL" variants. This highlights the vital role of the general reward components in creating a general multi-agent system.

On the other hand, when removing the exclusive term in the "NO\_EX" variant, the phenomenon of "lazy agents" becomes apparent, which is a common occurrence in multi-agent systems with joint reward functions. This phenomenon arises from the overvaluation of certain actions by one or more agents, even though the overall benefit is contributed by other agents. As a result, this leads to high oscillations in performance during training, hindering cooperation and overall performance improvement.

\begin{figure}[!ht]
    \centering
    \includegraphics[width = \linewidth]{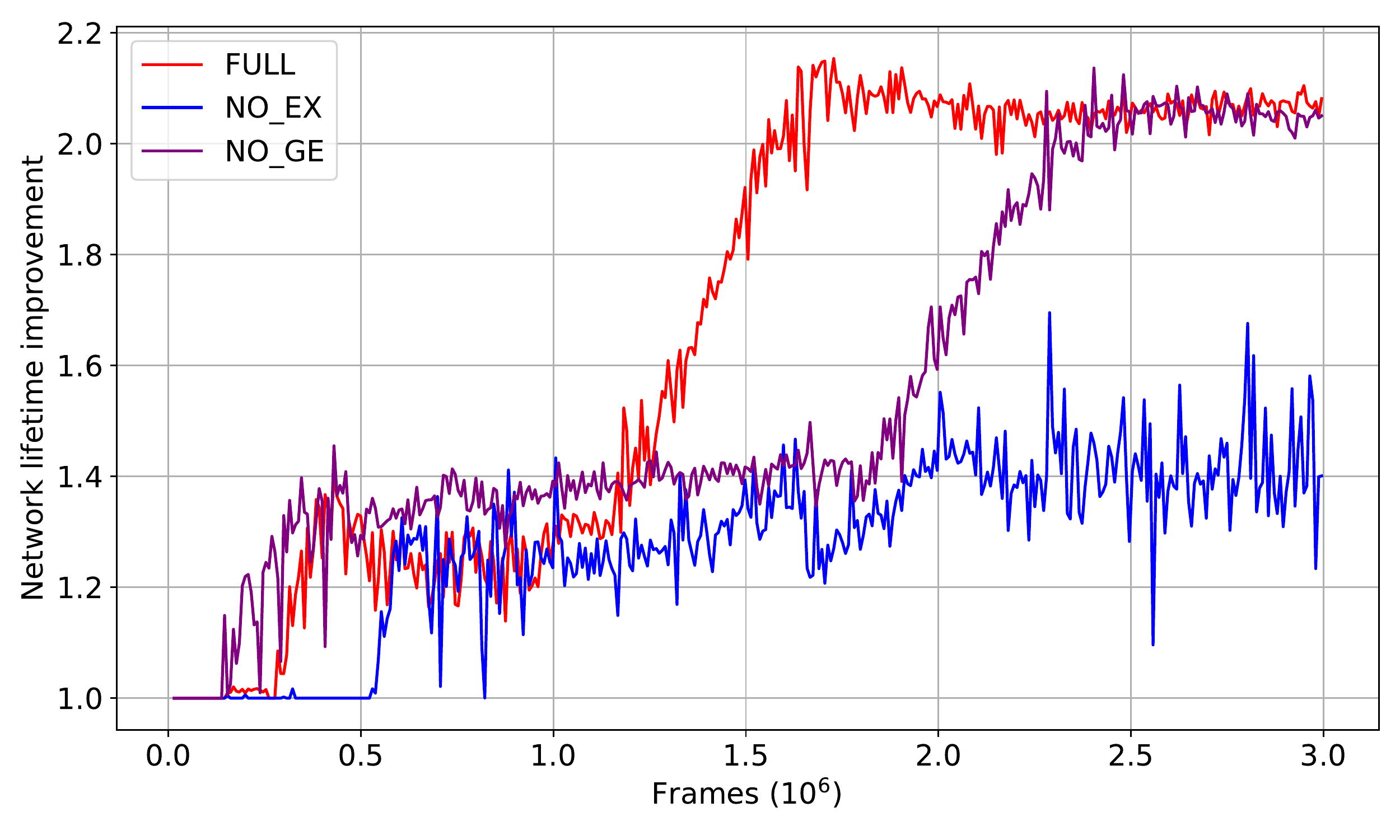}
    \caption{Ablation study on reward components}
    \label{fig:ab_reward}
\end{figure}

\subsection{The necessity of combining charging probability map and optimization procedure}

\begin{figure}[H]
    \centering
    \includegraphics[width=\linewidth]{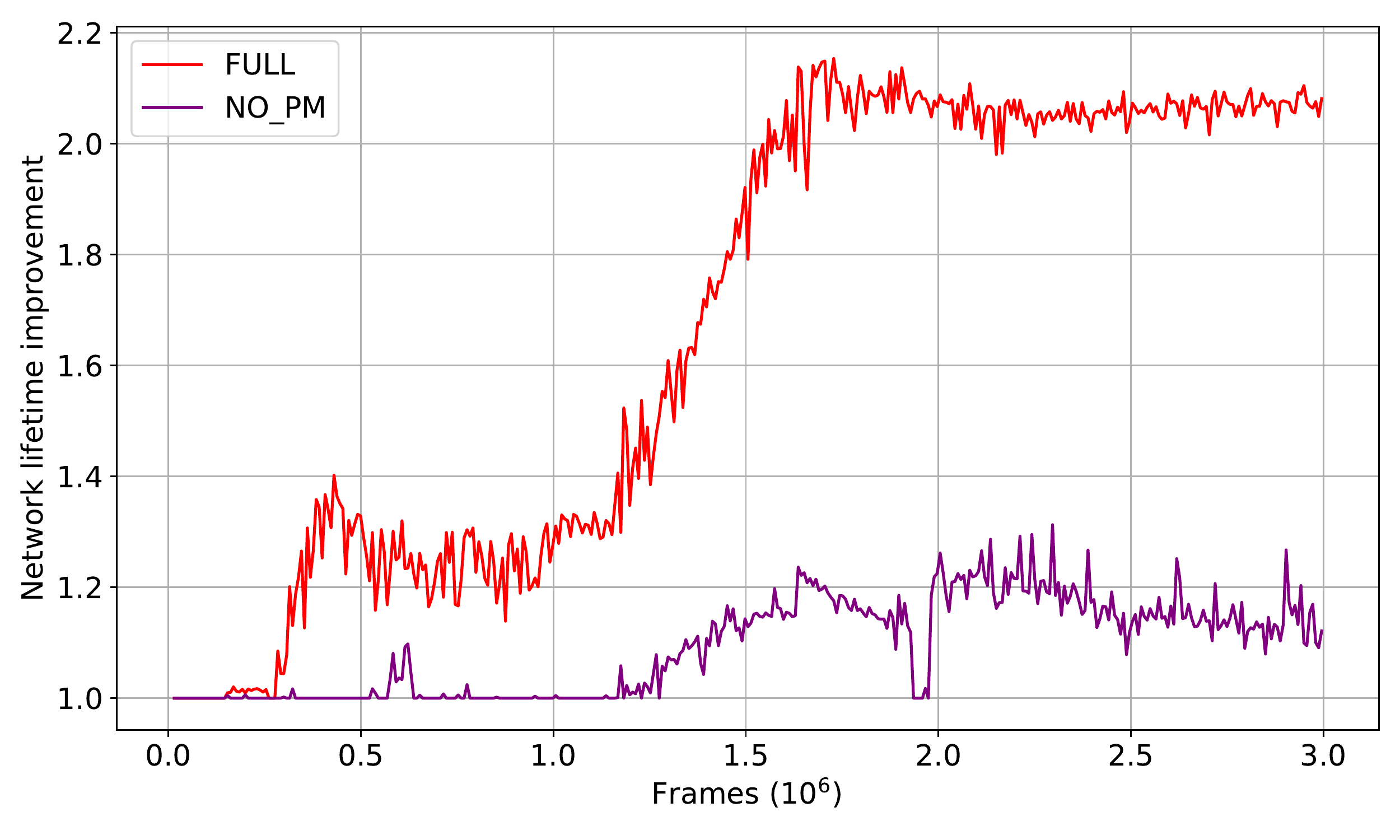}
    \caption{Comparing two action-making strategies}
    \label{fig: comparing}
\end{figure}

In the design of the actor model in Section~\ref{sec: actor_model}, the thesis proposes a U-net model to output a charging probability map, then applies an optimization algorithm to find the charging location. To highlight the vital role of this component, this experiment adopts a variant of AMAPPO named NO\_PM which substitutes the U-net with a CNN model with the same architecture as the critic one, but the output layer is a 3-component vector respective to a macro action. The network life improvement on the training map is presented in Figure~\ref{fig: comparing}. In essence, the space of the charging location is large and includes multiple no-reward signal areas. Hence, it is demanding to train agents to make decisions directly in this space. The exploration just brings agents to a proximal area that is impossible to escape from the no-reward signal area. This challenge is evident in the training progress of NO\_PM and its inferior performance on both the training map and testing maps.

\section{Comparison to baselines}

\subsection{Training phase}
In this experiment, the thesis aims to clarify the vital role of the architecture of AMAPPO through comparison with two other architectures, including IPPO and PPO. Regarding IPPO, this algorithm considers each mobile charger as an independent learner, so implementing an actor and a critic for each mobile charger. The PPO algorithm adopts only one actor and critic to regulate the operation of multiple mobile chargers. For a fair comparison, all three algorithms utilize the same architecture of actor in Section~\ref{sec: actor_model}, critic in Section~\ref{sec: critic_model}, and also the learning process in Algorithm~\ref{al:amappo}. The only differences are the weight update and action-making that depend on which actor and critic is responsible for each actor. 

Overall, the proposed architecture showcases a better performance than the two counterparts. Specifically, after three million training frames, the final models of AMAPPO show a higher gap of 0.21 and 0.59 in the network lifetime improvement than IPPO, and PPO, respectively. It is worth noting that the IPPO algorithm performance is unstable during training which is confirmed by multiple sudden drops in network lifetime improvement and extremely high oscillation. This would be explained by the architecture separating critics, which slower the training of the critic models, and even leads to the significant difference between should-be-the-same critic evaluation. The slow update of the critic model is evident when actors reach some highly better policies. In this context, the critic model is not prompt to shift the parameters to have a good evaluation for these better behaviors that have not been seen before. 

\begin{figure}[H]
    \centering
    \includegraphics[width=\linewidth]{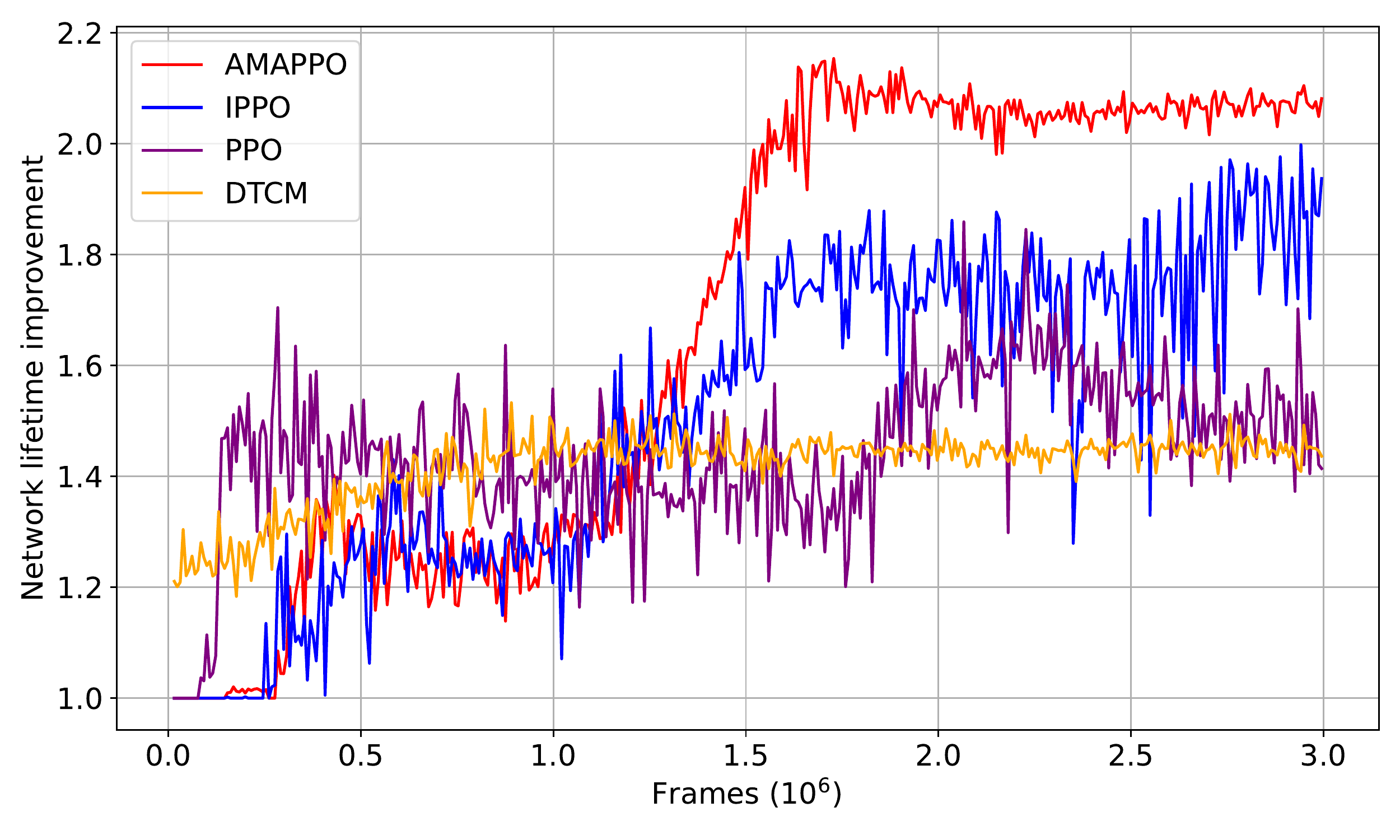}
    \caption{Comparison between algorithms on network lifetime improvement}
    \label{fig:ab_ac_critc}
\end{figure}

\begin{figure}[H]
    \centering
    \includegraphics[width =    \linewidth]{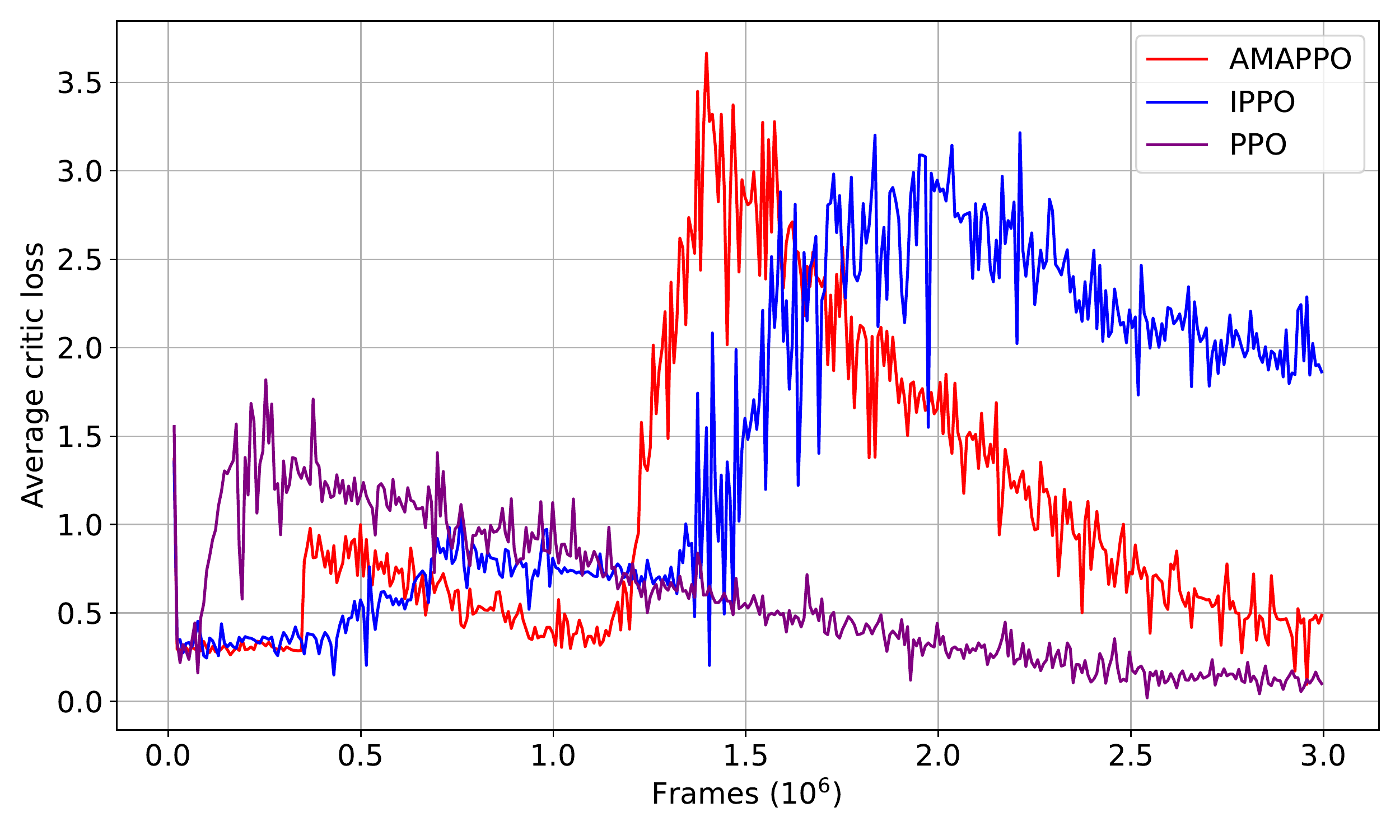}
    \caption{The average critic loss function of actor-critic algorithms}
    \label{fig:critic loss}
\end{figure}

\subsection{Testing phase}

This experiment aims to evaluate the network lifetime improvement of pre-trained models on testing instances. To enhance clarity, instances with the same number of targets are grouped together, and the average value is computed for each group. The results for each model group are presented in Figure~\ref{fig: testing_performance}. As a general trend, the network lifetime improvement shows a decreasing pattern with an increase in the number of targets. This phenomenon is expected, as a higher number of targets necessitates more sensors for surveillance and data transmission, leading to higher energy demand on the network. Additionally, the increased number of targets poses a greater risk of losing connection to one or more targets, further contributing to the decrease in network lifetime improvement.

The DTCM algorithm stands out as the only one using a distinct formulation, while IPPO, PPO, and AMAPPO all rely on the proposed DEC-POSMDP formulation. Due to the specific nature of the DTCM formulation, it cannot be directly applied to different instances apart from the training instance. Therefore, for the testing phase, it becomes crucial to retrain DTCM on a new formulation specific to each instance. In contrast, the proposed DEC-POSMDP formulation exhibits generalization capabilities, enabling pre-trained models of AMAPPO, IPPO, and PPO to effectively operate in various network instances, even if they were trained on a different instance. Moreover, the performance of AMAPPO and IPPO is still much more remarkable in testing instances. Specifically, AMAPPO can averagely lengthen the network lifetime up to more than three times for 50-target instances, and 2.15 times in overall. The performance of IPPO reaches the rate of 1.85 in overall. As mentioned above, DTCM must retrain its model, which likely causes the death of the network before the model can learn something significant, which is indicated by its inferior network lifetime improvement of 1.1 overall.

To highlight the performance of AMAPPO and IPPO, an additional model of DTCM, named DTCM(1M), is trained for each testing instance in 1 million frames. Indeed, this version of DTCM has a more favorable condition to reach an excellent performance due to its already access to the data from testing instances. However, this version only has performance better than the pre-trained PPO model but is less competitive with AMAPPO and IPPO, which do not know any information about testing instances. This comparison showcases the significant generalization and effectiveness of the proposed DEC-POSMDP formulation. Regarding the proposed algorithm, AMAPPO outperforms its counterparts in all instances, showing a network lifetime improvement up to 3.13, 2.35, 1.69, and 1.43 for instances with 50, 100, 150, and 200 targets, respectively.

\begin{figure}[H]
    \centering
    \includegraphics[width =\linewidth]{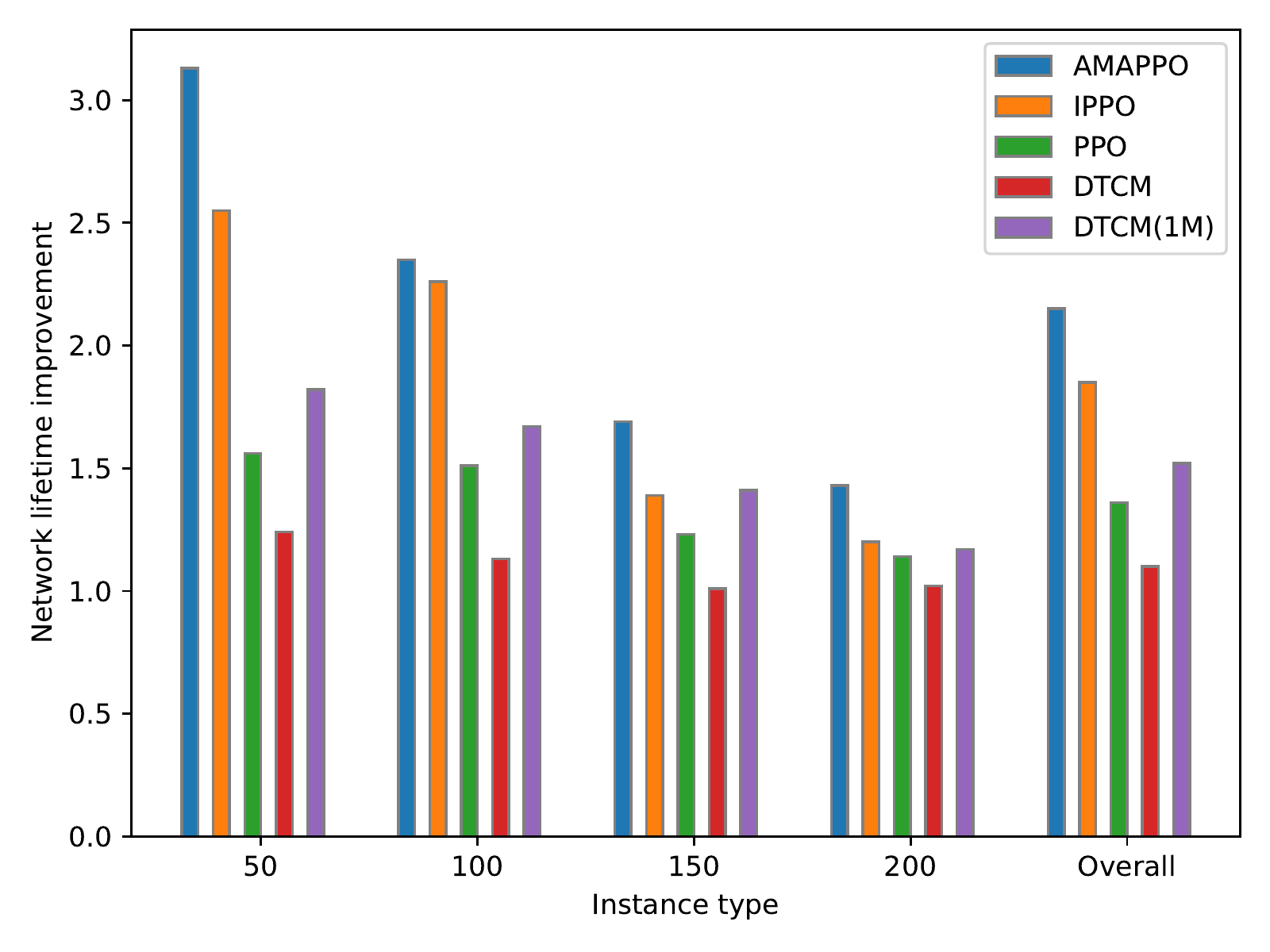}
    \caption{The network lifetime improvement of algorithms on testing instances}
    \label{fig: testing_performance}
\end{figure}

\newpage
\chapter*{CONCLUSION}
\phantomsection\addcontentsline{toc}{chapter}{CONCLUSION}
\label{chap:conclusions}
\noindent \textbf{Summary}

The thesis focuses on maximizing the network lifetime for connected target coverage (NTM-CTC) in wireless rechargeable sensor networks (WRSNs). It introduces a fleet of mobile chargers with a multi-node charging model to achieve this goal. A key contribution is the development of a general Dec-POSMDP formulation for NTM-CTC, allowing mobile chargers to perform effectively without the need for retraining across diverse network scenarios. To address the synchronous nature of the Dec-POSMDP formulation, an asynchronous multi-agent learning framework is proposed, enabling the application of state-of-the-art reinforcement learning algorithms. The final proposed algorithm, AMAPPO, is a modified version of the PPO algorithm tailored to the Dec-POSMDP formulation of NTM-CTC.

To evaluate the performance, a novel training-testing procedure is introduced using a dataset with Vietnam's geometry information, representing various real-world scenarios for NTM-CTC. A series of ablation studies are conducted to demonstrate the individual contributions of each component in the proposed approaches. The numerical results highlight the superior performance of AMAPPO compared to other methods.

In conclusion, the thesis paves the way for a new research direction, focusing on the creation of general agents that accumulate knowledge over time and can adapt to various environments. The proposed framework and algorithm showcase promising advancements in maximizing network lifetime and target coverage in wireless sensor networks.

\noindent \textbf{Limitation}

Despite its significant contributions, the thesis does face two primary limitations. Firstly, there is a lack of investigation into the system's flexibility when the number of mobile chargers is dynamically increased or decreased. Understanding how the system adapts to changes in the number of agents is crucial for real-world applicability. Secondly, the proposal's robustness against environmental uncertainties, such as anomaly events impacting target behavior and sensor energy burden, remains unexplored. Evaluating the proposal's performance under varying and uncertain conditions is important for assessing its practicality and reliability.

\noindent \textbf{Future work}

For future work, the thesis aims to develop an extension mechanism for the multi-agent system, allowing seamless integration of new agents with reasonable parameters and proper regularization of existing agents when agents are removed. This mechanism is vital in real-world scenarios to ensure continuous performance despite changes in the agent composition. Additionally, the thesis plans to conduct experiments to analyze the proposal's robustness under different levels of environmental randomness. Understanding how the proposal performs in uncertain and dynamic environments will provide valuable insights into its applicability and potential areas of improvement. Addressing these limitations and exploring these future directions will further enhance the effectiveness and practicality of the proposed approach in real-world applications.

\noindent \textbf{Publication}

The thesis inherits the network model, the charging model, and the NLM-CTC problem from two author's publications:

\begin{itemize}
    
    \item \textbf{Long, N. T., Huong, T. T., Bao, N. N., Binh, H. T. T., Le Nguyen, P., \& Nguyen, K. (2023). Q-learning-based distributed multi-charging algorithm for large-scale WRSNs in Nonlinear Theory and Its Applications, IEICE, 14(1), 18-34.} 

    \item \textbf{Huong, T. T., Bao, N. N., Hai, N. M., \& Binh, H. T. T. (2021, June). Effective partial charging scheme for minimizing the energy depletion and charging cost in wireless rechargeable sensor networks in 2021 IEEE Congress on Evolutionary Computation (CEC) (pp. 217-224), rank B.}
\end{itemize}

Techniques used in the thesis such as the optimization method and the Gaussian function to model the observation space are also motivated by two publications of the thesis's author, including:
\begin{itemize}
    \item \textbf{Van Cuong, L., Bao, N. N., Phuong, N. K., \& Binh, H. T. T. (2022, July). Dynamic perturbation for population diversity management in differential evolution. In Proceedings of the Genetic and Evolutionary Computation Conference Companion (GECCO) (pp. 391-394), rank A}
    
\end{itemize}

\newpage

\renewcommand\bibname{REFERENCES}


\printbibliography

@article{mnih2015human,
  title={Human-level control through deep reinforcement learning},
  author={Mnih, Volodymyr and Kavukcuoglu, Koray and Silver, David and Rusu, Andrei A and Veness, Joel and Bellemare, Marc G and Graves, Alex and Riedmiller, Martin and Fidjeland, Andreas K and Ostrovski, Georg and others},
  journal={nature},
  volume={518},
  number={7540},
  pages={529--533},
  year={2015},
  publisher={Nature Publishing Group}
}

@book{sutton2018reinforcement,
  title={Reinforcement learning: An introduction},
  author={Sutton, Richard S and Barto, Andrew G},
  year={2018},
  publisher={MIT press}
}

@article{szepesvari2010algorithms,
  title={Algorithms for reinforcement learning},
  author={Szepesv{\'a}ri, Csaba},
  journal={Synthesis lectures on artificial intelligence and machine learning},
  volume={4},
  number={1},
  pages={1--103},
  year={2010},
  publisher={Morgan \& Claypool Publishers}
}

@inproceedings{chandrakasan1999design,
  title={Design considerations for distributed microsensor systems},
  author={Chandrakasan, Amirtharajah and Amirtharajah, Rajeevan and Cho, SeongHwan and Goodman, James and Konduri, Gangadhar and Kulik, Joanna and Rabiner, Wendi and Wang, Alice},
  booktitle={Proceedings of the IEEE 1999 Custom Integrated Circuits Conference (Cat. No. 99CH36327)},
  pages={279--286},
  year={1999},
  organization={IEEE}
}

@article{bellman1957markovian,
  title={A Markovian decision process},
  author={Bellman, Richard},
  journal={Journal of mathematics and mechanics},
  pages={679--684},
  year={1957},
  publisher={JSTOR}
}

@article{ba2013passive,
  title={Passive wake-up radios: From devices to applications},
  author={Ba, He and Demirkol, Ilker and Heinzelman, Wendi},
  journal={Ad hoc networks},
  volume={11},
  number={8},
  pages={2605--2621},
  year={2013},
  publisher={Elsevier}
}

@article{anastasi2009energy,
  title={Energy conservation in wireless sensor networks: A survey},
  author={Anastasi, Giuseppe and Conti, Marco and Di Francesco, Mario and Passarella, Andrea},
  journal={Ad hoc networks},
  volume={7},
  number={3},
  pages={537--568},
  year={2009},
  publisher={Elsevier}
}

@article{adu2018energy,
  title={Energy-harvesting wireless sensor networks (EH-WSNs) A review},
  author={Adu-Manu, Kofi Sarpong and Adam, Nadir and Tapparello, Cristiano and Ayatollahi, Hoda and Heinzelman, Wendi},
  journal={ACM Transactions on Sensor Networks (TOSN)},
  volume={14},
  number={2},
  pages={1--50},
  year={2018},
  publisher={ACM New York, NY, USA}
}

@article{lyu2019periodic,
  title={Periodic charging planning for a mobile WCE in wireless rechargeable sensor networks based on hybrid PSO and GA algorithm},
  author={Lyu, Zengwei and Wei, Zhenchun and Pan, Jie and Chen, Hua and Xia, Chengkai and Han, Jianghong and Shi, Lei},
  journal={Applied Soft Computing},
  volume={75},
  pages={388--403},
  year={2019},
  publisher={Elsevier}
}

@article{jiang2017joint,
  title={Joint charging tour planning and depot positioning for wireless sensor networks using mobile chargers},
  author={Jiang, Guiyuan and Lam, Siew-Kei and Sun, Yidan and Tu, Lijia and Wu, Jigang},
  journal={IEEE/ACM Transactions on Networking},
  volume={25},
  number={4},
  pages={2250--2266},
  year={2017},
  publisher={IEEE}
}

@article{ma2018charging,
  title={Charging utility maximization in wireless rechargeable sensor networks by charging multiple sensors simultaneously},
  author={Ma, Yu and Liang, Weifa and Xu, Wenzheng},
  journal={IEEE/ACM Transactions on Networking},
  volume={26},
  number={4},
  pages={1591--1604},
  year={2018},
  publisher={IEEE}
}

@inproceedings{lin2019minimizing,
  title={Minimizing charging delay for directional charging in wireless rechargeable sensor networks},
  author={Lin, Chi and Zhou, Yanhong and Ma, Fenglong and Deng, Jing and Wang, Lei and Wu, Guowei},
  booktitle={IEEE INFOCOM 2019-IEEE Conference on Computer Communications},
  pages={1819--1827},
  year={2019},
  organization={IEEE}
}

@inproceedings{he2013demand,
  title={On-demand charging in wireless sensor networks: Theories and applications},
  author={He, Liang and Gu, Yu and Pan, Jianping and Zhu, Ting},
  booktitle={2013 IEEE 10th international conference on mobile ad-hoc and sensor systems},
  pages={28--36},
  year={2013},
  organization={IEEE}
}

@article{cao2021deep,
  title={A deep reinforcement learning-based on-demand charging algorithm for wireless rechargeable sensor networks},
  author={Cao, Xianbo and Xu, Wenzheng and Liu, Xuxun and Peng, Jian and Liu, Tang},
  journal={Ad Hoc Networks},
  volume={110},
  pages={102278},
  year={2021},
  publisher={Elsevier}
}

@article{guo2020cloud,
  title={Cloud detection for satellite imagery using attention-based U-Net convolutional neural network},
  author={Guo, Yanan and Cao, Xiaoqun and Liu, Bainian and Gao, Mei},
  journal={Symmetry},
  volume={12},
  number={6},
  pages={1056},
  year={2020},
  publisher={MDPI}
}

@inproceedings{huong2020genetic,
  title={Genetic algorithm-based periodic charging scheme for energy depletion avoidance in wrsns},
  author={Huong, Tran Thi and Le Nguyen, Phi and Binh, Huynh Thi Thanh and Nguyenz, Kien and Hai, Ngo Minh and others},
  booktitle={2020 IEEE Wireless Communications and Networking Conference (WCNC)},
  pages={1--6},
  year={2020},
  organization={IEEE}
}

@article{bonnet2000querying,
  title={Querying the physical world},
  author={Bonnet, Philippe and Gehrke, Johannes and Seshadri, Praveen},
  journal={IEEE personal Communications},
  volume={7},
  number={5},
  pages={10--15},
  year={2000},
  publisher={IEEE}
}

@inproceedings{kahn1999next,
  title={Next century challenges: mobile networking for “Smart Dust”},
  author={Kahn, Joseph M and Katz, Randy H and Pister, Kristofer SJ},
  booktitle={Proceedings of the 5th annual ACM/IEEE international conference on Mobile computing and networking},
  pages={271--278},
  year={1999}
}

@article{rabaey2000picoradio,
  title={PicoRadio supports ad hoc ultra-low power wireless networking},
  author={Rabaey, Jan M and Ammer, M Josie and Da Silva, Julio L and Patel, Danny and Roundy, Shad},
  journal={Computer},
  volume={33},
  number={7},
  pages={42--48},
  year={2000},
  publisher={IEEE}
}

@article{pottie2000wireless,
  title={Wireless integrated network sensors},
  author={Pottie, Gregory J and Kaiser, William J},
  journal={Communications of the ACM},
  volume={43},
  number={5},
  pages={51--58},
  year={2000},
  publisher={ACM New York, NY, USA}
}

@article{petriu2000sensor,
  title={Sensor-based information appliances},
  author={Petriu, Emile M and Georganas, Nicolas D and Petriu, Dorina C and Makrakis, Dimitrios and Groza, Voicu Z},
  journal={IEEE Instrumentation \& Measurement Magazine},
  volume={3},
  number={4},
  pages={31--35},
  year={2000},
  publisher={IEEE}
}

@inproceedings{suzuki2007high,
  title={A high-density earthquake monitoring system using wireless sensor networks},
  author={Suzuki, Makoto and Saruwatari, Shunsuke and Kurata, Narito and Morikawa, Hiroyuki},
  booktitle={Proceedings of the 5th international conference on Embedded networked sensor systems},
  pages={373--374},
  year={2007}
}

@article{heinzelman2002application,
  title={An application-specific protocol architecture for wireless microsensor networks},
  author={Heinzelman, Wendi B and Chandrakasan, Anantha P and Balakrishnan, Hari},
  journal={IEEE Transactions on wireless communications},
  volume={1},
  number={4},
  pages={660--670},
  year={2002},
  publisher={IEEE}
}

@article{fakilidz2002wireless,
  title={Wireless sensor network: a survey},
  author={Fakilidz, I and others},
  journal={Computer Networks},
  number={38},
  pages={393--422},
  year={2002}
}

@inproceedings{huong2021effective,
  title={Effective partial charging scheme for minimizing the energy depletion and charging cost in wireless rechargeable sensor networks},
  author={Huong, Tran Thi and Bao, Nguyen Ngoc and Hai, Ngo Minh and Binh, Huynh Thi Thanh and others},
  booktitle={2021 IEEE Congress on Evolutionary Computation (CEC)},
  pages={217--224},
  year={2021},
  organization={IEEE}
}

@inproceedings{huong2021multi,
  title={A multi-task approach for maximum survival ratio problem in large-scale wireless rechargeable sensor networks},
  author={Huong, Tran Thi and Binh, Huynh Thi Thanh and others},
  booktitle={2021 IEEE Congress on Evolutionary Computation (CEC)},
  pages={1688--1695},
  year={2021},
  organization={IEEE}
}

@article{huong2022bi,
  title={A bi-level optimized charging algorithm for energy depletion avoidance in wireless rechargeable sensor networks},
  author={Huong, Tran Thi and Van Cuong, Le and Hai, Ngo Minh and Le, Nguyen Phi and Vinh, Le Trong and Binh, Huynh Thi Thanh},
  journal={Applied Intelligence},
  volume={52},
  number={6},
  pages={6812--6834},
  year={2022},
  publisher={Springer}
}

@ARTICLE{9078842,
  author={A. {Tomar} and L. {Muduli} and P. K. {Jana}},
  journal={IEEE Transactions on Mobile Computing}, 
  title={A Fuzzy Logic-based On-demand Charging Algorithm for Wireless Rechargeable Sensor Networks with Multiple Chargers}, 
  year={2020},
  volume={},
  number={},
  pages={1-1},
  }

@ARTICLE{9000624,
  author={T. N. {Nguyen} and B. {Liu} and S. {Chu} and D. {Do} and T. D. {Nguyen}},
  journal={IEEE Sensors Journal}, 
  title={WRSNs: Toward an Efficient Scheduling for Mobile Chargers}, 
  year={2020},
  volume={20},
  number={12},
  pages={6753-6761},
  }

@INPROCEEDINGS{8885324,
  author={W. {Xu} and W. {Liang} and H. {Kan} and Y. {Xu} and X. {Zhang}},
  booktitle={2019 IEEE 39th International Conference on Distributed Computing Systems (ICDCS)}, 
  title={Minimizing the Longest Charge Delay of Multiple Mobile Chargers for Wireless Rechargeable Sensor Networks by Charging Multiple Sensors Simultaneously}, 
  year={2019},
  volume={},
  number={},
  pages={881-890},
  }

@Article{en12020287,
AUTHOR = {Cheng, Rei-Heng and Xu, ChengJie and Wu, Tung-Kuang},
TITLE = {A Genetic Approach to Solve the Emergent Charging Scheduling Problem Using Multiple Charging Vehicles for Wireless Rechargeable Sensor Networks},
JOURNAL = {Energies},
VOLUME = {12},
YEAR = {2019},
NUMBER = {2},
ARTICLE-NUMBER = {287},
}

@ARTICLE{8721449,
  author={L. {Mo} and A. {Kritikakou} and S. {He}},
  journal={IEEE Internet of Things Journal}, 
  title={Energy-Aware Multiple Mobile Chargers Coordination for Wireless Rechargeable Sensor Networks}, 
  year={2019},
  volume={6},
  number={5},
  pages={8202-8214},
  }

@article{LIN201688,
title = {GTCharge: A game theoretical collaborative charging scheme for wireless rechargeable sensor networks},
journal = {Journal of Systems and Software},
volume = {121},
pages = {88-104},
year = {2016},
author = {Chi Lin and Youkun Wu and Zhicheng Liu and Mohammad S. Obaidat and Chang Wu Yu and Guowei Wu},
}

@article{lopez2010lobin,
  title={LOBIN: E-textile and wireless-sensor-network-based platform for healthcare monitoring in future hospital environments},
  author={L{\'o}pez, Gregorio and Custodio, V{\'\i}ctor and Moreno, Jos{\'e} Ignacio},
  journal={IEEE Transactions on Information Technology in Biomedicine},
  volume={14},
  number={6},
  pages={1446--1458},
  year={2010},
  publisher={IEEE}
}

@article{villarrubia2017combining,
  title={Combining multi-agent systems and wireless sensor networks for monitoring crop irrigation},
  author={Villarrubia, Gabriel and De Paz, Juan F and De La Iglesia, Daniel H and Bajo, Javier},
  journal={Sensors},
  volume={17},
  number={8},
  pages={1775},
  year={2017},
  publisher={MDPI}
}

@article{sharma2018cattle,
  title={Cattle health monitoring system using wireless sensor network: a survey from innovation perspective},
  author={Sharma, Bhisham and Koundal, Deepika},
  journal={IET Wireless Sensor Systems},
  volume={8},
  number={4},
  pages={143--151},
  year={2018},
  publisher={Wiley Online Library}
}

@article{biswas2006self,
  title={Self-organizing sensor networks for integrated target surveillance},
  author={Biswas, Pratik K and Phoha, Shashi},
  journal={IEEE Transactions on Computers},
  volume={55},
  number={8},
  pages={1033--1047},
  year={2006},
  publisher={IEEE}
}

@inproceedings{del2009darma,
  title={DARMA: adaptable service and resource management for wireless sensor networks},
  author={Del Cid, Pedro Javier and Hughes, Danny and Ueyama, J{\'o} and Michiels, Sam and Joosen, Wouter},
  booktitle={Proceedings of the 4th International Workshop on Middleware Tools, Services and Run-Time Support for Sensor Networks},
  pages={1--6},
  year={2009}
}

@article{zhao2014framework,
  title={A framework of joint mobile energy replenishment and data gathering in wireless rechargeable sensor networks},
  author={Zhao, Miao and Li, Ji and Yang, Yuanyuan},
  journal={IEEE Transactions on Mobile Computing},
  volume={13},
  number={12},
  pages={2689--2705},
  year={2014},
  publisher={IEEE}
}

@article{guo2014joint,
  title={Joint mobile data gathering and energy provisioning in wireless rechargeable sensor networks},
  author={Guo, Songtao and Wang, Cong and Yang, Yuanyuan},
  journal={IEEE Transactions on Mobile Computing},
  volume={13},
  number={12},
  pages={2836--2852},
  year={2014},
  publisher={IEEE}
}

@inproceedings{shah2007distributed,
  title={Distributed independent reinforcement learning (DIRL) approach to resource management in wireless sensor networks},
  author={Shah, Kunal and Kumar, Mohan},
  booktitle={2007 IEEE International Conference on Mobile Adhoc and Sensor Systems},
  pages={1--9},
  year={2007},
  organization={IEEE}
}

@article{wang2009distributed,
  title={Distributed visual-target-surveillance system in wireless sensor networks},
  author={Wang, Xue and Wang, Sheng and Bi, Daowei},
  journal={IEEE Transactions on Systems, Man, and Cybernetics, Part B (Cybernetics)},
  volume={39},
  number={5},
  pages={1134--1146},
  year={2009},
  publisher={IEEE}
}

@article{ismail2018establishing,
  title={Establishing a soldier wireless sensor network (WSN) communication for military operation monitoring},
  author={Ismail, Mohd Nazri and Shukran, Mohd‘Afizi and Isa, Mohd Rizal Mohd and Adib, Mohd and Zakaria, Omar},
  journal={International Journal of Informatics and Communication Technology},
  volume={7},
  number={2},
  pages={89--95},
  year={2018}
}

@article{bekmezci2009energy,
  title={Energy efficient, delay sensitive, fault tolerant wireless sensor network for military monitoring},
  author={Bekmezci, Ilker and Alag{\"o}z, Fatih},
  journal={International journal of Distributed Sensor networks},
  volume={5},
  number={6},
  pages={729--747},
  year={2009},
  publisher={SAGE Publications Sage UK: London, England}
}

@article{ghayvat2015wsn,
  title={WSN-and IOT-based smart homes and their extension to smart buildings},
  author={Ghayvat, Hemant and Mukhopadhyay, Subhas and Gui, Xiang and Suryadevara, Nagender},
  journal={Sensors},
  volume={15},
  number={5},
  pages={10350--10379},
  year={2015},
  publisher={MDPI}
}

@article{gangwar2019conceptual,
  title={A conceptual framework of agroecological resource management system for climate-smart agriculture},
  author={Gangwar, DS and Tyagi, S and Soni, SK},
  journal={International Journal of Environmental Science and Technology},
  volume={16},
  number={8},
  pages={4123--4132},
  year={2019},
  publisher={Springer}
}

@article{arshad2022deployment,
  title={Deployment of wireless sensor network and iot platform to implement an intelligent animal monitoring system},
  author={Arshad, Jehangir and Rehman, Ateeq Ur and Othman, Mohamed Tahar Ben and Ahmad, Muhammad and Tariq, Hassaan Bin and Khalid, Muhammad Abdullah and Moosa, Muhammad Abdul Rehman and Shafiq, Muhammad and Hamam, Habib},
  journal={Sustainability},
  volume={14},
  number={10},
  pages={6249},
  year={2022},
  publisher={MDPI}
}

@article{hamami2020application,
  title={Application of wireless sensor networks in the field of irrigation: A review},
  author={Hamami, Loubna and Nassereddine, Bouchaib},
  journal={Computers and Electronics in Agriculture},
  volume={179},
  pages={105782},
  year={2020},
  publisher={Elsevier}
}

@article{njoroge2018research,
  title={A research review of precision farming techniques and technology},
  author={Njoroge, Bryan Mungai and Fei, Thang Ka and Thiruchelvam, Vinesh},
  journal={J. Appl. Technol. Innov},
  volume={2},
  number={9},
  year={2018}
}

@inproceedings{kocakulak2017overview,
  title={An overview of Wireless Sensor Networks towards internet of things},
  author={Kocakulak, Mustafa and Butun, Ismail},
  booktitle={2017 IEEE 7th annual computing and communication workshop and conference (CCWC)},
  pages={1--6},
  year={2017},
  organization={Ieee}
}

@inproceedings{khalil2014wireless,
  title={Wireless sensors networks for Internet of Things},
  author={Khalil, Nacer and Abid, Mohamed Riduan and Benhaddou, Driss and Gerndt, Michael},
  booktitle={2014 IEEE ninth international conference on Intelligent sensors, sensor networks and information processing (ISSNIP)},
  pages={1--6},
  year={2014},
  organization={IEEE}
}

@article{faudzi2020application,
  title={Application of micro-electro-mechanical systems (MEMS) as sensors: a review},
  author={Faudzi, Ahmad Athif Mohd and Sabzehmeidani, Yaser and Suzumori, Koichi},
  journal={Journal of Robotics and Mechatronics},
  volume={32},
  number={2},
  pages={281--288},
  year={2020},
  publisher={Fuji Technology Press Ltd.}
}

@ARTICLE{8882248,
  author={Z. {Wei} and M. {Li} and Q. {Zhao} and Z. {Lyu} and S. {Zhu} and Z. {Wei}},
  journal={IEEE Access}, 
  title={Multi-MC Charging Schedule Algorithm With Time Windows in Wireless Rechargeable Sensor Networks}, 
  year={2019},
  volume={7},
  number={},
  pages={156217-156227},
  }

@article{xu2020minimizing,
  title={Minimizing the maximum charging delay of multiple mobile chargers under the multi-node energy charging scheme},
  author={Xu, Wenzheng and Liang, Weifa and Jia, Xiaohua and Kan, Haibin and Xu, Yinlong and Zhang, Xinming},
  journal={IEEE Transactions on Mobile Computing},
  volume={20},
  number={5},
  pages={1846--1861},
  year={2020},
  publisher={IEEE}
  }

@inproceedings{xu2019minimizing,
  title={Minimizing the longest charge delay of multiple mobile chargers for wireless rechargeable sensor networks by charging multiple sensors simultaneously},
  author={Xu, Wenzheng and Liang, Weifa and Kan, Haibin and Xu, Yinlong and Zhang, Xinming},
  booktitle={2019 IEEE 39th International Conference on Distributed Computing Systems (ICDCS)},
  pages={881--890},
  year={2019},
  organization={IEEE}
}

@article{lowe2017multi,
  title={Multi-agent actor-critic for mixed cooperative-competitive environments},
  author={Lowe, Ryan and Wu, Yi I and Tamar, Aviv and Harb, Jean and Pieter Abbeel, OpenAI and Mordatch, Igor},
  journal={Advances in neural information processing systems},
  volume={30},
  year={2017}
}

@article{rashid2020monotonic,
  title={Monotonic value function factorisation for deep multi-agent reinforcement learning},
  author={Rashid, Tabish and Samvelyan, Mikayel and De Witt, Christian Schroeder and Farquhar, Gregory and Foerster, Jakob and Whiteson, Shimon},
  journal={The Journal of Machine Learning Research},
  volume={21},
  number={1},
  pages={7234--7284},
  year={2020},
  publisher={JMLRORG}
}

@inproceedings{huang2017densely,
  title={Densely connected convolutional networks},
  author={Huang, Gao and Liu, Zhuang and Van Der Maaten, Laurens and Weinberger, Kilian Q},
  booktitle={Proceedings of the IEEE conference on computer vision and pattern recognition},
  pages={4700--4708},
  year={2017}
}

@inproceedings{karp2000gpsr,
  title={GPSR: Greedy perimeter stateless routing for wireless networks},
  author={Karp, Brad and Kung, Hsiang-Tsung},
  booktitle={Proceedings of the 6th annual international conference on Mobile computing and networking},
  pages={243--254},
  year={2000}
}

@article{mnih2013playing,
  title={Playing atari with deep reinforcement learning},
  author={Mnih, Volodymyr and Kavukcuoglu, Koray and Silver, David and Graves, Alex and Antonoglou, Ioannis and Wierstra, Daan and Riedmiller, Martin},
  journal={arXiv preprint arXiv:1312.5602},
  year={2013}
}

@article{berner2019dota,
  title={Dota 2 with large scale deep reinforcement learning},
  author={Berner, Christopher and Brockman, Greg and Chan, Brooke and Cheung, Vicki and D{\k{e}}biak, Przemys{\l}aw and Dennison, Christy and Farhi, David and Fischer, Quirin and Hashme, Shariq and Hesse, Chris and others},
  journal={arXiv preprint arXiv:1912.06680},
  year={2019}
}

@article{HANH2023103578,
title = {Node placement optimization under Q-Coverage and Q-Connectivity constraints in wireless sensor networks},
journal = {Journal of Network and Computer Applications},
volume = {212},
pages = {103578},
year = {2023},
issn = {1084-8045},
doi = {https://doi.org/10.1016/j.jnca.2022.103578},
author = {Nguyen Thi Hanh and Huynh Thi Thanh Binh and Vu Quang Truong and Nguyen Phuc Tan and Huynh Cong Phap}}

@article{byrd1995limited,
  title={A limited memory algorithm for bound constrained optimization},
  author={Byrd, Richard H and Lu, Peihuang and Nocedal, Jorge and Zhu, Ciyou},
  journal={SIAM Journal on scientific computing},
  volume={16},
  number={5},
  pages={1190--1208},
  year={1995},
  publisher={SIAM}
}

@article{matloff2008introduction,
  title={Introduction to discrete-event simulation and the simpy language},
  author={Matloff, Norm},
  journal={Davis, CA. Dept of Computer Science. University of California at Davis. Retrieved on August},
  volume={2},
  number={2009},
  pages={1--33},
  year={2008}
}

@article{yu2022surprising,
  title={The surprising effectiveness of ppo in cooperative multi-agent games},
  author={Yu, Chao and Velu, Akash and Vinitsky, Eugene and Gao, Jiaxuan and Wang, Yu and Bayen, Alexandre and Wu, Yi},
  journal={Advances in Neural Information Processing Systems},
  volume={35},
  pages={24611--24624},
  year={2022}
}

@article{schulman2017proximal,
  title={Proximal policy optimization algorithms},
  author={Schulman, John and Wolski, Filip and Dhariwal, Prafulla and Radford, Alec and Klimov, Oleg},
  journal={arXiv preprint arXiv:1707.06347},
  year={2017}
}

@book{oliehoek2016concise,
  title={A concise introduction to decentralized POMDPs},
  author={Oliehoek, Frans A and Amato, Christopher and others},
  volume={1},
  year={2016},
  publisher={Springer}
}

@article{bernstein2002complexity,
  title={The complexity of decentralized control of Markov decision processes},
  author={Bernstein, Daniel S and Givan, Robert and Immerman, Neil and Zilberstein, Shlomo},
  journal={Mathematics of operations research},
  volume={27},
  number={4},
  pages={819--840},
  year={2002},
  publisher={INFORMS}
}

@inproceedings{littman2001friend,
  title={Friend-or-foe Q-learning in general-sum games},
  author={Littman, Michael L and others},
  booktitle={ICML},
  volume={1},
  pages={322--328},
  year={2001}
}

@article{bucsoniu2010multi,
  title={Multi-agent reinforcement learning: An overview},
  author={Bu{\c{s}}oniu, Lucian and Babu{\v{s}}ka, Robert and De Schutter, Bart},
  journal={Innovations in multi-agent systems and applications-1},
  pages={183--221},
  year={2010},
  publisher={Springer}
}

@article{oroojlooy2023review,
  title={A review of cooperative multi-agent deep reinforcement learning},
  author={Oroojlooy, Afshin and Hajinezhad, Davood},
  journal={Applied Intelligence},
  volume={53},
  number={11},
  pages={13677--13722},
  year={2023},
  publisher={Springer}
}

@article{matignon2012independent,
  title={Independent reinforcement learners in cooperative markov games: a survey regarding coordination problems},
  author={Matignon, Laetitia and Laurent, Guillaume J and Le Fort-Piat, Nadine},
  journal={The Knowledge Engineering Review},
  volume={27},
  number={1},
  pages={1--31},
  year={2012},
  publisher={Cambridge University Press}
}

@book{filar2012competitive,
  title={Competitive Markov decision processes},
  author={Filar, Jerzy and Vrieze, Koos},
  year={2012},
  publisher={Springer Science \& Business Media}
}

@book{bacsar1998dynamic,
  title={Dynamic noncooperative game theory},
  author={Ba{\c{s}}ar, Tamer and Olsder, Geert Jan},
  year={1998},
  publisher={SIAM}
}

@article{shapley1953stochastic,
  title={Stochastic games},
  author={Shapley, Lloyd S},
  journal={Proceedings of the national academy of sciences},
  volume={39},
  number={10},
  pages={1095--1100},
  year={1953},
  publisher={National Acad Sciences}
}

@inproceedings{ronneberger2015u,
  title={U-net: Convolutional networks for biomedical image segmentation},
  author={Ronneberger, Olaf and Fischer, Philipp and Brox, Thomas},
  booktitle={Medical Image Computing and Computer-Assisted Intervention--MICCAI 2015: 18th International Conference, Munich, Germany, October 5-9, 2015, Proceedings, Part III 18},
  pages={234--241},
  year={2015},
  organization={Springer}
}

@inproceedings{redmon2016you,
  title={You only look once: Unified, real-time object detection},
  author={Redmon, Joseph and Divvala, Santosh and Girshick, Ross and Farhadi, Ali},
  booktitle={Proceedings of the IEEE conference on computer vision and pattern recognition},
  pages={779--788},
  year={2016}
}

@article{yu2021surprising,
  title={The surprising effectiveness of MAPPO in cooperative, multi-agent games. CoRR abs/2103.01955 (2021)},
  author={Yu, Chao and Velu, Akash and Vinitsky, Eugene and Wang, Yu and Bayen, Alexandre M and Wu, Yi},
  journal={arXiv preprint arXiv:2103.01955},
  year={2021}
}

@article{yang2020dynamic, 
  title={Dynamic charging scheme problem with actor--critic reinforcement learning},
  author={Yang, Meiyi and Liu, Nianbo and Zuo, Lin and Feng, Yong and Liu, Minghui and Gong, Haigang and Liu, Ming},
  journal={IEEE Internet of Things Journal},
  volume={8},
  number={1},
  pages={370--380},
  year={2020},
  publisher={IEEE}
}

@inproceedings{bui2022deep,
  title={A Deep Reinforcement Learning-based Adaptive Charging Policy for WRSNs},
  author={Bui, Ngoc and Le Nguyen, Phi and Nguyen, Viet Anh and Do, Phan Thuan},
  booktitle={2022 IEEE 19th International Conference on Mobile Ad Hoc and Smart Systems (MASS)},
  pages={661--667},
  year={2022},
  organization={IEEE}
}

@article{gong2023deep,
  title={Deep Reinforcement Learning--Based Online One-to-Multiple Charging Scheme in Wireless Rechargeable Sensor Network},
  author={Gong, Zheng and Wu, Hao and Feng, Yong and Liu, Nianbo},
  journal={Sensors},
  volume={23},
  number={8},
  pages={3903},
  year={2023},
  publisher={MDPI}
}

@article{kober2013reinforcement,
  title={Reinforcement learning in robotics: A survey},
  author={Kober, Jens and Bagnell, J Andrew and Peters, Jan},
  journal={The International Journal of Robotics Research},
  volume={32},
  number={11},
  pages={1238--1274},
  year={2013},
  publisher={SAGE Publications Sage UK: London, England}
}

@article{silver2017mastering,
  title={Mastering the game of go without human knowledge},
  author={Silver, David and Schrittwieser, Julian and Simonyan, Karen and Antonoglou, Ioannis and Huang, Aja and Guez, Arthur and Hubert, Thomas and Baker, Lucas and Lai, Matthew and Bolton, Adrian and others},
  journal={nature},
  volume={550},
  number={7676},
  pages={354--359},
  year={2017},
  publisher={Nature Publishing Group}
}

@article{silver2016mastering,
  title={Mastering the game of Go with deep neural networks and tree search},
  author={Silver, David and Huang, Aja and Maddison, Chris J and Guez, Arthur and Sifre, Laurent and Van Den Driessche, George and Schrittwieser, Julian and Antonoglou, Ioannis and Panneershelvam, Veda and Lanctot, Marc and others},
  journal={nature},
  volume={529},
  number={7587},
  pages={484--489},
  year={2016},
  publisher={Nature Publishing Group}
}

@article{long2023q,
  title={Q-learning-based distributed multi-charging algorithm for large-scale WRSNs},
  author={Long, Nguyen Thanh and Huong, Tran Thi and Bao, Nguyen Ngoc and Binh, Huynh Thi Thanh and Le Nguyen, Phi and Nguyen, Kien},
  journal={Nonlinear Theory and Its Applications, IEICE},
  volume={14},
  number={1},
  pages={18--34},
  year={2023},
  publisher={The Institute of Electronics, Information and Communication Engineers}
}

@article{wang2018crcm,
  title={CRCM: A new combined data gathering and energy charging model for WRSN},
  author={Wang, Yuhou and Dong, Ying and Li, Shiyuan and Wu, Hao and Cui, Mengyao},
  journal={Symmetry},
  volume={10},
  number={8},
  pages={319},
  year={2018},
  publisher={MDPI}
}

@INPROCEEDINGS{8737589,
  author={C. {Lin} and Y. {Zhou} and F. {Ma} and J. {Deng} and L. {Wang} and G. {Wu}},
  booktitle={Proc. IEEE INFOCOM}, 
  title={Minimizing Charging Delay for Directional Charging in Wireless Rechargeable Sensor Networks}, 
  year={2019},
  pages={1819-1827},
}

@ARTICLE{9119859,
  author={W. {Ouyang} and X. {Liu} and M. {Obaidat} and C. {Lin} and H. {Zhou} and T. {Liu} and K. {Hsiao}},
  journal={IEEE Transactions on Sustainable Computing}, 
  title={Utility-Aware Charging Scheduling for Multiple Mobile Chargers in Large-Scale Wireless Rechargeable Sensor Networks}, 
  year={2020},
  volume={},
  number={},
  pages={1-1},
  }

@ARTICLE{8375982,
author={Y. {Ma} and W. {Liang} and W. {Xu}},
journal={IEEE/ACM Transactions on Networking},
title={Charging Utility Maximization in Wireless Rechargeable Sensor Networks by Charging Multiple Sensors Simultaneously},
year={2018},
volume={26},
number={4},
pages={1591-1604},
month={Aug},}

@ARTICLE{7924317,
author={C. {Lin} and J. {Zhou} and C. {Guo} and H. {Song} and G. {Wu} and M. S. {Obaidat}},
journal={IEEE Transactions on Mobile Computing},
title={TSCA: A Temporal-Spatial Real-Time Charging Scheduling Algorithm for On-Demand Architecture in Wireless Rechargeable Sensor Networks},
year={2018},
volume={17},
number={1},
pages={211-224},
month={Jan},}

@ARTICLE{7889006,
author={G. {Jiang} and S. {Lam} and Y. {Sun} and L. {Tu} and J. {Wu}},
journal={IEEE/ACM Transactions on Networking},
title={Joint Charging Tour Planning and Depot Positioning for Wireless Sensor Networks Using Mobile Chargers},
year={2017},
volume={25},
number={4},
pages={2250-2266},
month={Aug},}

@article{LYU2019388,
title = "Periodic charging planning for a mobile WCE in wireless rechargeable sensor networks based on hybrid PSO and GA algorithm",
journal = "Applied Soft Computing",
volume = "75",
pages = "388 - 403",
year = "2019",
author = "Zengwei Lyu and Zhenchun Wei and Jie Pan and Hua Chen and Chengkai Xia and Jianghong Han and Lei Shi",
}

@article{LIN201972,
title = "Double warning thresholds for preemptive charging scheduling in Wireless Rechargeable Sensor Networks",
journal = "Computer Networks",
volume = "148",
pages = "72 - 87",
year = "2019",
author = "Chi Lin and Yu Sun and Kai Wang and Zhunyue Chen and Bo Xu and Guowei Wu",
}

@inproceedings{feng2016starvation,
	title="Starvation avoidance mobile energy replenishment for wireless rechargeable sensor networks",
	author="Feng, Yong and Liu, Nianbo and Wang, Feng and Qian, Qian and Li, Xiuqi",
	booktitle="2016 IEEE International Conference on Communications (ICC)",
	pages="1--6",
	year="2016",
}

@article{zhu2018adaptive,
  title={Adaptive online mobile charging for node failure avoidance in wireless rechargeable sensor networks},
  author={Zhu, Jinqi and Feng, Yong and Liu, Ming and Chen, Guihai and Huang, Yongxin},
  journal={Computer Communications},
  volume={126},
  pages={28--37},
  year={2018},
  publisher={Elsevier}
}

@misc{appendix,
title = {Technical report},
howpublished = "\url{http://klab.nii.ac.jp/~nguyen/appendix_acoba.pdf}",
}
\phantomsection\addcontentsline{toc}{chapter}{REFERENCES}

\end{document}